\documentclass[lettersize,journal]{IEEEtran}
\usepackage{amsmath,amsfonts}
\usepackage{array}
\usepackage{tcolorbox}   
\usepackage{siunitx}
\usepackage{subfigure}
\usepackage{subcaption}
\usepackage[dvipsnames]{xcolor} 
\usepackage{textcomp}
\usepackage{stfloats}
\usepackage{url}
\usepackage{bm}
\usepackage{algorithm}
\usepackage{algpseudocode}
\usepackage{verbatim}
\usepackage{booktabs}
\usepackage{graphicx}
\usepackage{multirow} 
\usepackage{color}
\usepackage{diagbox}
\usepackage{amsthm}
\usepackage{hyperref}
\usepackage{balance}

\newcommand{\mytheoremname}{\bfseries Theorem}
\newcommand{\mylemmaname}{\bfseries Lemma}
\newcommand{\mypropositionname}{\bfseries Proposition}
\newtheorem{Theorem}{\mytheoremname} 
\newtheorem{Lemma}[Theorem]{\mylemmaname} 
\newtheorem{Proposition}[Theorem]{\mypropositionname} 
\tcolorboxenvironment{Theorem}{
  colback=Blue!0.5!White,  
  colframe=white, 
  boxrule=0.1pt,         
}
\tcolorboxenvironment{Lemma}{
  colback=Blue!0.5!White,  
  colframe=white, 
  boxrule=0.1pt,         
}
\def\BibTeX{{\rm B\kern-.05em{\sc i\kern-.025em b}\kern-.08em
    T\kern-.1667em\lower.7ex\hbox{E}\kern-.125emX}}

\begin{document}
\title{ Tackling Over-smoothing on Hypergraphs: A Ricci Flow-guided Neural Diffusion Approach}
\author{Mengyao Zhou, Zhiheng Zhou, Xiao Han, Xingqin Qi, Guanghui Wang, Guiying Yan
\thanks{ This work was supported by the National Natural Science Foundation of China (No. 12231018).(Mengyao Zhou and Zhiheng Zhou contributed equally to this work.)
(Corresponding authors:  Guiying Yan.)}
\thanks{Mengyao Zhou and Guiying Yan are with the Academy of Mathematics and Systems Science, Chinese Academy of Sciences and also with the University of Chinese Academy of Sciences, Beijing 100190, China (e-mail: zhoumengyao@amss.ac.cn; yangy@amss.ac.cn).}
\thanks{Zhiheng Zhou and Xingqin Qi a with the School of Mathematics and Statistics, Shandong University, Weihai, Shandong 264209, China (e-mail: zhouzhiheng@amss.ac.cn;qixingqin@sdu.edu.cn).}
\thanks{Xiao Han is with School of Artificial Intelligence, Beihang University, Beijing 100191,China (e-mail: hx2210@buaa.edu.cn).}
\thanks{Guanghui Wang is with the School of Mathematics, Shandong University, Jinan, Shandong 250100,China (e-mail: 
ghwang@sdu.edu.cn).}
}

\markboth{}%
{How to Use the IEEEtran \LaTeX \ Templates}\maketitle
This work has been submitted to the IEEE for possible publication. Copyright may be transferred without notice, after which this version may no longer be accessible.

\begin{abstract}
Hypergraph neural networks (HGNNs) have demonstrated strong capabilities in modeling complex higher-order relationships. However, existing HGNNs often suffer from over-smoothing as  the number of layers increases and  lack effective control over message passing among nodes.  Inspired by the theory of Ricci flow in differential geometry, we theoretically establish that introducing discrete Ricci flow into hypergraph structures can effectively regulate node feature evolution and thereby alleviate over-smoothing. Building on this insight, we propose Ricci Flow-guided  Hypergraph Neural Diffusion(RFHND), a novel message passing paradigm for hypergraphs guided by discrete Ricci flow. Specifically, RFHND is based on a PDE system that describes the continuous evolution of node features on hypergraphs and adaptively regulates the rate  of information diffusion at the geometric level, preventing feature homogenization and producing high-quality node representations. Experimental results show that RFHND significantly outperforms existing methods across multiple benchmark datasets and demonstrates strong robustness, while also effectively mitigating over-smoothing. 
\end{abstract}

\begin{IEEEkeywords}
Hypergraph Neural Networks, Over-Smoothing, Ricci Flow, Differential Equation
\end{IEEEkeywords}

\section{Introduction}
In recent years, hypergraphs have attracted widespread attention as an important tool for modeling complex systems~\cite{zhou2006learning,antelmi2023survey}. Unlike traditional graphs, which can only represent pairwise relationships between nodes, hypergraphs can connect multiple nodes through hyperedges, naturally capturing higher-order relations~\cite{berge1984hypergraphs}. This capability gives hypergraphs unique expressive power in many domains such as social networks~\cite{zlatic2009hypergraph,zhu2018social}, recommendation systems~\cite{xia2022self,wang2020next}, and biological networks~\cite{feng2021hypergraph,klamt2009hypergraphs}. To fully exploit the information from hypergraph data, researchers have proposed various hypergraph neural network methods~\cite{kim2024survey}, achieving significant progress in tasks such as node classification~\cite{wu2022hypergraph}, link prediction~\cite{yadati2020nhp,li2013link}, and representation learning~\cite{antelmi2023survey}.

Although existing hypergraph neural networks provide powerful tools for hypergraph learning, the problem of over-smoothing remains a key challenge~\cite{li2025deep,chen2022preventing}. As the network deepens, node features tend to converge and become increasingly indistinguishable, which degrades the model's performance~\cite{lin2022deephgnn}.  Current mitigation strategies fall into two main paradigms. The first paradigm modifies the network architecture by incorporating mechanisms such as residual connections to preserve initial node features~\cite{huang2021unignn,chien2021you}. The second paradigm refines the aggregation step, often using attention mechanism to selectively weigh nodes and hyperedges, thereby preventing feature homogenization~\cite{bai2021hypergraph,chen2020hypergraph}. However, while existing methods mitigate over-smoothing to some extent, most of them are essentially operator-level fixes  and lack rigorous theoretical guarantees, which limits their efficacy in complex scenarios. 

Inspired by the work~\cite{nguyen2023revisiting},  which establishes a direct link between geometric curvature and over-smoothing, we identify the feature convergence in hypergraph neural networks as an unconstrained geometric evolution process, analogous to  heat diffusion. This perspective motivates us to seek a mechanism from differential geometry that can intrinsically control such diffusion dynamics. Ricci flow serves precisely this purpose~\cite{chow2004ricci},  as it describes how the metric tensor evolves according to the local curvature. Recently, its generalization, discrete Ricci flow, has proven effective for tasks on graphs~\cite{yang2009generalized}. However, its potential for hypergraphs remains unexplored. In this work, we generalize this evolution to hypergraph and theoretically demonstrate that the evolution constrains the diffusion of node features via local curvature, which in turn effectively mitigates over-smoothing.

Motivated by the preceding theoretical insights, we propose Ricci Flow-guided Hypergraph
Neural Diffusio(RFHND). Grounded in partial differential equations, RFHND models node feature evolution as a continuous dynamical process and adaptively regulates information diffusion based on local curvature, effectively preventing over-smoothing. The core idea of RFHND is shown in Fig. \ref{fig:ricci}. Specifically, we assign weights to hyperedges proportional to the feature similarity among their nodes, enabling an adaptive flow of updates that mitigates feature homogenization. This design preserves node distinctions while enabling efficient feature fusion, enhancing model expressiveness and stability. Finally,
the efficacy of our algorithm is validated through a comprehensive series of experiments.
The main contributions of this work are as follows:
\begin{itemize}
    \item \textbf{Theoretical foundation:} We introduce discrete Ricci flow to the hypergraph learning domain, theoretically proving that it mitigates over-smoothing by adaptively controlling feature diffusion via local curvature.
    \item \textbf{Methodological contribution:}  We propose RFHND, a discrete Ricci flow–guided hypergraph neural diffusion method that transforms traditional  message passing into an adaptive, curvature-guided diffusion process, thereby effectively preventing feature homogenization.
    \item \textbf{Empirical validation: }Experiments on several hypergraph datasets demonstrate that RFHND achieves superior node classification accuracy while exhibiting improved stability, robustness, and a significant reduction in over-smoothing compared to existing methods.
\end{itemize}

\begin{figure*}[htb]
\centering  
\includegraphics[width=15cm,height =7cm]{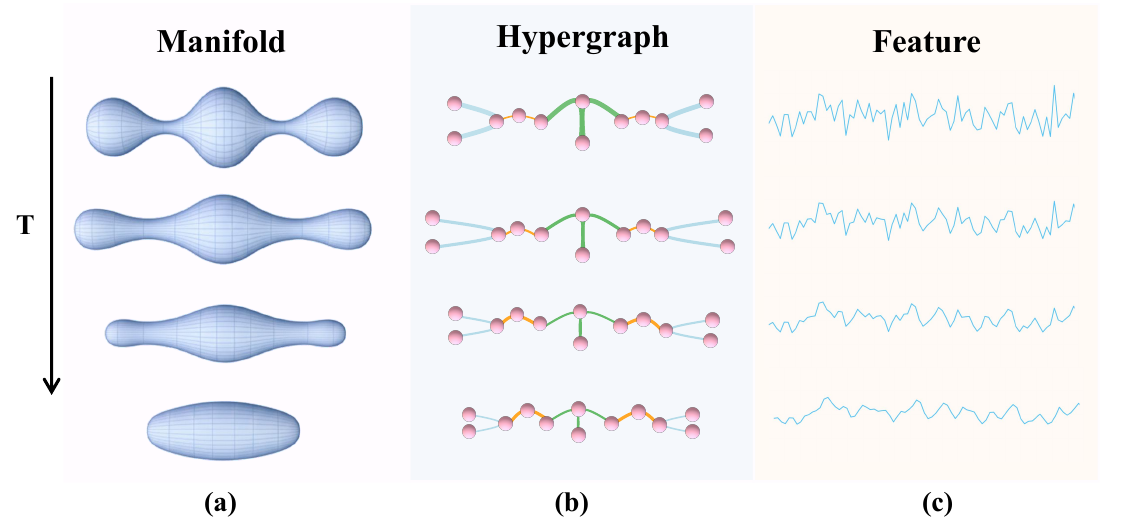}
 \caption{Ricci flow-guided evolution. (a) Evolution of the manifold geometry under Ricci flow. (b) Evolution of hyperedge weight guided by Ricci flow. (c) Evolution of node feature distributions driven by the updated hyperedge weights.}
\label{fig:ricci}
\end{figure*}

\section{Related Work}
In this section, we review literature closely related to our work, focusing on two key areas:  Hypergraph Neural Networks(HGNNs) and Over-smoothing in HGNNs.
\subsection{Hypergraph Neural Networks}
Hypergraph neural networks have emerged as a powerful generalization of graph neural networks (GNNs), specifically designed to capture complex high-order relationships among entities beyond pairwise connections. The  development of this field was initiated by the pioneering work of the foundational HGNN model~\cite{feng2019hypergraph}. This model utilizes a spectral convolution framework on hypergraphs to aggregate information from nodes connected with the same hyperedge.

Following this, a wave of message-passing-based models, such as HNHN~\cite{dong2020hnhn}, HyperGCN~\cite{yadati2019hypergcn}, HyperSAGE~\cite{arya2020hypersage}, and UniGNN~\cite{huang2021unignn} was proposed to enhance the expressive power of HGNNs. These methods extend aggregation and propagation mechanisms to accommodate heterogeneous and weighted hyperedges, leading to significant improvements in tasks such as node classification. Meanwhile, architectures like AllDeepSets~\cite{chien2021you} and AllSetTransformer~\cite{chien2021you}  abandoned spectral assumptions entirely. They instead employ permutation-invariant set functions over hyperedges to facilitate more flexible, set-level reasoning.

Recent HGNN advances include more sophisticated modeling perspectives. For instance, dynamic system models like HDS~\cite{yan2024hypergraph} use ordinary differential equations (ODEs) to improve the stability and control of the learning process. Additionally, models such as KHGNN(employs a nested convolution module named HyperGINE)~\cite{xie2025k} are designed to capture long-range dependencies by extracting features from nodes, hyperedges, and the intermediate paths between them.

\subsection{Over-smoothing in HGNNs}
In deep hypergraph networks, node representations tend to become identical, which severely degrades model performance. Current mitigation strategies, designed to combat this issue, can be categorized into two main paradigms.

\textbf{Architectural Modifications.} This line of work focuses on altering the network's architecture to preserve information from earlier layers. Inspired by the effectiveness of residual connections in both CNNs and GNNs, models like UniGCNII~\cite{huang2021unignn}  and Deep-HGCN~\cite{chien2021you} integrate skip connections to carry initial node features forward, enabling the construction of much deeper and more effective HGNNs.  FrameHGNN~\cite{li2025deep} introduces a framelet-based hypergraph convolution framework that combines low and high-pass filters with proven deep GNN techniques like residual and identity mapping, thereby  maintaining discriminative signals in deep layers.

\textbf{Refining the Aggregation Operator.} This paradigm aims to make the feature aggregation process more discerning. Instead of uniform aggregation, attention mechanisms are employed to assign different weights to nodes within a hyperedge or to different hyperedges connected to a node. For example, Hypergraph Attention Networks~\cite{bai2021hypergraph,chen2020hypergraph}  learn dynamic weights for both nodes and hyperedges,  thereby effectively mitigating the issue of indiscriminate feature mixing. ED-HNN~\cite{wang2022equivariant}   suggests that employing equivariant operators to distribute distinct messages across nodes helps preserve feature diversity (or node distinguishability), thereby effectively preventing the over-smoothing problem.

\textbf{Limitations of Existing Methods.} 
While existing methods mitigate over-smoothing to some extent, most of them rely on local framework modifications and lack rigorous theoretical guarantees, which limits their efficacy in complex scenarios. In contrast, we introduce discrete Ricci flow to globally regulate the evolution of node features, thereby effectively alleviating the over-smoothing issue.

\section{Preliminary}
\subsection{Notations} 
 Let $\mathcal{H} = (\mathcal{V} , \mathcal{E})$ denote a  hypergraph, where $\mathcal{V}$ is the vertex set containing $n$ unique vertices and $\mathcal{E}$ is the edge set containing $m$ hyperedges.  The hypergraph can be represented by an incidence matrix $H\in R^{n\times m}$  where $H_{ij} = 1$ if the vertex ${v}_{i}\in \mathcal{V}$ is contained in the hyperedge $  {e_j} \in \mathcal{E}$, otherwise 0.  Let $\mathbf{F}=[\mathbf{f}_{1},\mathbf{f}_{2},... \mathbf{f}_{n}]^{T} $ denote the node feature matrix, and $f_{i}$ is associated with the node $v_{i}$. Similarly, let $\mathbf{X}(t)=[\mathbf{x}_{1}(t),\mathbf{x}_{2}(t),...\mathbf{x}_{n}(t)]$ denote the node representation evolved to time $t$, and $\mathbf{X}(0)= Linear(\mathbf{F}) $. Each hyperedge $e_j \in \mathcal{E}$ is assigned a positive  time-varying weight $w_{e_j}(t)$, all the weights formulate a diagonal matrix $W(t) \in R^{m\times m}$.
 The vertex and edge degree of hypergraph can be expressed  as $d_{i} = \sum_{j=1}^{m} H_{ij}$ and $d_{j} = \sum_{i=1}^{n} H_{ij}$, all the degrees formulate the diagonal matrix $D_{v}$ and $D_{e}$. 
 \subsection{Hyperedge Curvature on Weighted Hypergraph}
 \label{B}
Let $\mathcal{H}$ denote a weighted hypergraph. 
Each hyperedge $e \in \mathcal{E}$ is assigned a weight $w_e$, and a curvature value $\kappa_e$ .  Specifically, $\kappa_e$ reflects the closeness of the nodes within the hyperedge.

Several discrete notions of curvature have been extended from graphs to hypergraphs, including  Forman–Ricci curvature $\kappa_e^{FR}$~\cite{leal2021forman},  Ollivier–Ricci curvature $\kappa_e^{OR}$ ~\cite{coupette2022ollivier} and Resistance curvature $\kappa_e^{RC}$~\cite{eidi2020edge}. Notably, our method is not restricted  to a specific choice of curvature measure.
For the precise mathematical formulations of these curvature measures, please refer to  the supplementary material.


\subsection{Dirichlet Energy}
The Dirichlet energy of the hypergraph $\mathcal{H} = (\mathcal{V} , \mathcal{E})$ is used to characterize the smoothness of features between nodes~\cite{cai2020note}. The expression is as follows~\cite{chien2021you}:
\begin{equation}
\label{de}
    E(\mathbf{X}(t)) = \frac{1}{2}\sum_{e\in \mathcal{E}} \sum_{i,j \in e}\frac{1}{|e|}\left(\frac{\mathbf{x}_{i}(t)}{\sqrt{d_{i}}}-\frac{\mathbf{x}_{j}(t)}{\sqrt{d_{j}}}\right)^{2}.
\end{equation}
The equation (\ref{de}) shows that a higher similarity between neighboring node features leads to an energy value closer to 0, which reflects the over-smoothing problem.
\subsection{Discrete Ricci Flow(DRF)}  
Ricci flow, originally introduced by Hamilton in differential geometry~\cite{hamilton1982three}, evolves a Riemannian metric $g_{ij}(t)$ according to the partial differential equation:
\begin{equation}
\frac{\partial g_{ij}(t)}{\partial t} = -2 \, \mathrm{Ric}_{ij}(g(t)),
\end{equation}
where $g_{ij}(t)$ denotes the element of the metric tensor $g(t)$ and $\mathrm{Ric}_{ij}$ denotes the element of the Ricci curvature tensor. This process can be regarded as a geometric heat equation, smoothing out irregularities of the manifold metric over time, and has played a fundamental role in the development of modern geometry and topology.

To extend this concept to discrete structures, discrete Ricci flow has been proposed as an analogue of the continuous Ricci flow on graphs and networks~\cite{ollivier2009ricci}. The specific expression is as follows:
\begin{equation}
\frac{\partial w_{ij}(t)}{\partial t} = -\kappa_{ij}(t) w_{ij}(t).
\end{equation}
Here, $w_{ij}(t)$ and $\kappa_{ij}(t)$  represent the weight and the curvature of the edge $(v_{i},v_{j})$ at time $t$.
Discrete Ricci flow has found applications in diverse fields, including community detection~\cite{ni2019community}, network analysis~\cite{ni2018network}, graph embedding~\cite{chen2025graph}, and machine learning.

Building upon these advances, we further generalize discrete Ricci flow  to hypergraphs. The core of this generalization lies in reformulating the notions of edge weight and curvature from pairwise edges to multi-node hyperedges. The specific expression of the hypergraph Ricci flow is as follows:
\begin{equation}
\frac{\partial w_{e}(t)}{\partial t} = -\kappa_{e}(t) w_{e}(t).
\label{ricci}
\end{equation}
Here, $w_{e}(t)$ and $\kappa_{e}(t)$  represent the weight and the curvature of the edge $e$ at time $t$.
By formulating  discrete Ricci flow on hypergraphs, we aim to capture higher-order   structures and provide a new theoretical framework for representation learning in Hypergraph.

\section{Applying DRF on Hypergraph}
\label{Applying}
In this section, we apply discrete Ricci flow to both the hypergraph structure and its node features to construct a dynamic evolutionary system. Similar to the attribute discrete Ricci flow in ~\cite{chen2025graph} , we construct the attribute discrete Ricci flow on the hypergraph. Specifically, We define the hyperedge weights  as a function of the nodes features it connects, i.e., $w_{e}(t)\mathrel{\mathop:}=w_{e}(\mathbf{X}_{e}(t))$, Where $\mathbf{X}_{e}(t)=\left\{\mathbf{x}_{i}(t)|i\in e\right\}$ denotes the set of node features on the edge $e$. Then the  attribute discrete Ricci flow of the hypergraph can be written as follow:
\begin{equation}
\frac{\partial w_{e}(\mathbf{X}_{e}(t))}{\partial t} = -\kappa_{e}(t) w_{e}(\mathbf{X}_{e}(t)).
\label{hdrf}
\end{equation}
As established in \ref{B}, the curvature of a hyperedge is jointly determined by both structural connectivity and its weight. In turn, this curvature governs the dynamic evolution of node attributes according to Equation (\ref{hdrf}). Unlike traditional heat diffusion, which models node representation propagation , Ricci flow can be viewed as heat diffusion of the metric.  From now on, we define $w_{e}(t)$ as follows:
\begin{equation}
\label{weight}
\begin{split}
    &w_{e}(t)\equiv \frac{1}{\alpha_{e} } [\frac{1}{|e|}\sum_{i\in e, j\in e}\frac{cos(\mathbf{x}_{i}(t),\mathbf{x}_{j}(t))}{\sqrt{d_{i}d_{j}}}]+1+\epsilon,
\\
 &\alpha_{e} = \frac{1}{|e|}\sum_{i\in e, j\in e}\frac{1}{\sqrt{d_{i}d_{j}}}.
 \end{split}
\end{equation}
    Here, $w_e(t)$ is a non-negative weight, 
$\epsilon$ is a small positive number. Furthermore, we enforce a unit norm on $\mathbf{x}(t)$ (i.e., $ |\mathbf{x}(t)|\equiv1$, which is crucial for maintaining stability during computation. In this case, $w_{e}(t)$ satisfies: $w_{e}(t)\in[\epsilon,2+\epsilon]$.

\subsection{Dirichlet Energy Bound}
\label{th:de}
By applying the attribute discrete Ricci flow on hypergraph, we can prove that the system's energy value remains bounded when the node feature evolution follows the Ricci flow. This demonstrates that our method can effectively prevent over-smoothing and maintain controlled feature differentiation during propagation. The specific conclusions are as follows:
\begin{Theorem}
\label{thedir}
Consider the attribute discrete Ricci flow with hyperedge weight as equation (\ref{weight}), and $|\mathbf{x}(t)|\equiv1$. If $H$ is a non-regular hypergraph, and $w_{e}(t)$  are monotonic on $[t_{1},t_{2}]$ for all $e$ in $H$, then the average Dirichlet energy within $[t_{1},t_{2}]$ has following bound:
\begin{equation}
\begin{split}
&\rho_{max}\left(\sum_{e\in \mathcal
{E}}\sum_{i, j\in e}\frac{1}{2|e|}\left(\frac{1}{d_i} +\frac{1}{d_j}\right)+\sum_{e\in \mathcal{E}}\alpha_e \right) \\&\geq \mathbb{E}_{t\in[t_{1},t_{2}]}(E(\mathbf{X}(t)))\\&\geq \sum_{e\in \mathcal{E}}\sum_{i, j\in e}\frac{1}{2|e|}\left(\frac{1}{d_i} +\frac{1}{d_j}\right)-\sum_{e\in \mathcal{E}}\alpha_e \textgreater 0,
\end{split}
\end{equation}
Here, $d_i$ denotes the degree of node $i$, $\rho_{max}=\max_{e}\rho_e$ and $\rho_e$ denotes the ratio of the maximum to the minimum value of $w_e(t)$ .
\end{Theorem}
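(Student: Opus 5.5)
Expanding the square in the definition of the Dirichlet energy makes the whole quantity a function of the hyperedge weights. With $|\mathbf{x}_i(t)|\equiv 1$ we have
\[
\Bigl(\frac{\mathbf{x}_i}{\sqrt{d_i}}-\frac{\mathbf{x}_j}{\sqrt{d_j}}\Bigr)^2=\frac1{d_i}+\frac1{d_j}-\frac{2\cos(\mathbf{x}_i,\mathbf{x}_j)}{\sqrt{d_id_j}}.
\]
Set $C:=\sum_{e}\sum_{i,j\in e}\frac{1}{2|e|}\bigl(\frac1{d_i}+\frac1{d_j}\bigr)$. Then
\[
E(\mathbf{X}(t))=C-\sum_e \frac{1}{|e|}\sum_{i,j\in e}\frac{\cos(\mathbf{x}_i,\mathbf{x}_j)}{\sqrt{d_id_j}} .
\]
By equation (\ref{weight}) the inner sum equals $\alpha_e\bigl(w_e(t)-1-\epsilon\bigr)$, so
\[
E(\mathbf{X}(t))=C-\sum_e\alpha_e\bigl(w_e(t)-1-\epsilon\bigr).
\]
The plan is to bound the time average of $w_e(t)$ on $[t_1,t_2]$ using the flow (\ref{hdrf}) and the monotonicity hypothesis, and then substitute into this identity.

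\textbf{Lower bound.} Since the cosine is at most $1$, we have $w_e-1-\epsilon\le 1$, and hence $E\ge C-\sum_e\alpha_e$ pointwise in $t$. Taking the average over $[t_1,t_2]$ preserves this. Strict positivity, $C>\sum_e\alpha_e$, is where non-regularity enters. Pairwise, $\frac1{\sqrt{d_id_j}}\le\frac12\bigl(\frac1{d_i}+\frac1{d_j}\bigr)$ by AM--GM, with equality iff $d_i=d_j$. Summing gives $C\ge\sum_e\alpha_e$, and equality would force $d_i=d_j$ for all $i,j$ sharing a hyperedge. I would argue that this makes the degree constant on each connected component, so a non-regular (connected) hypergraph has some hyperedge containing two nodes of different degree, which gives strict inequality. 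If connectivity is not assumed, this is the point I would need to handle by interpreting ``non-regular'' as componentwise non-regular.

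\textbf{Upper bound.} This is the main obstacle, because the stated form $\rho_{max}(C+\sum_e\alpha_e)$ carries a multiplicative ratio. My plan is to write $E=\sum_e E_e$ with
\[
E_e(t)=C_e+\alpha_e(1+\epsilon)-\alpha_e w_e(t),\qquad C_e:=\sum_{i,j\in e}\frac{1}{2|e|}\Bigl(\frac1{d_i}+\frac1{d_j}\Bigr).
\]
Each $E_e$ is nonnegative, since it is a sum of squares. By monotonicity of $w_e$ on $[t_1,t_2]$, its extremes occur at the endpoints, so $E_e$ is monotone there as well. Its average is then bounded by its value at the endpoint where $w_e$ is minimal, namely $C_e+\alpha_e(1+\epsilon)-\alpha_e w_e^{min}$. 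I would try to control this by $\rho_e$ times a reference quantity. A clean route is as follows. Integrating (\ref{hdrf}) gives $\log\bigl(w_e(t_2)/w_e(t_1)\bigr)=-\int\kappa_e$, and monotonicity gives $w_e^{max}/w_e^{min}=\rho_e$ directly. Using $w_e^{min}\ge w_e^{max}/\rho_e$ together with $1+\epsilon-w_e^{min}\le 1+\epsilon\le w_e^{min}\rho_e$ (because $w_e^{max}\ge$ its minimum $\ge\epsilon$, and I would use crude bounds like $C_e\le\rho_eC_e$), we get $E_e\le\rho_e(C_e+\alpha_e)$. Summing and replacing each $\rho_e$ by $\rho_{max}\ge1$ yields the claimed bound.

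The delicate point is making the $\alpha_e$ term fit. I would first check whether $1+\epsilon-w_e\le1$ always holds, which fails only when $w_e<\epsilon$ and is excluded by the paper's range $w_e\in[\epsilon,2+\epsilon]$. If it holds, then $E_e\le C_e+\alpha_e\le\rho_{max}(C_e+\alpha_e)$ trivially. Otherwise I would fall back on the ratio argument above, which is where monotonicity and $\rho_e$ genuinely enter.
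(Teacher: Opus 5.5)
The core of your argument is correct, and it takes a genuinely different and much more elementary route than the paper. You use only the unit-norm constraint. After expanding the square, $E(\mathbf{X}(t))=C+(1+\epsilon)\sum_e\alpha_e-\sum_e\alpha_e w_e(t)$. Then $-1\le\cos(\mathbf{x}_i,\mathbf{x}_j)\le 1$ gives $\epsilon\le w_e(t)\le 2+\epsilon$, hence $C-\sum_e\alpha_e\le E(\mathbf{X}(t))\le C+\sum_e\alpha_e$ at every time. These are just the reverse triangle and triangle inequalities for $\|\mathbf{x}_i/\sqrt{d_i}-\mathbf{x}_j/\sqrt{d_j}\|$, and averaging together with $\rho_{max}\ge 1$ finishes the proof.

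The paper instead first proves a general theorem for weights with finitely many monotone pieces. It writes $E_e=c_e-\alpha_e w_e$ with $c_e=C_e+(1+\epsilon)\alpha_e$, derives $\partial_t E_e=\kappa_e(c_e-E_e)$ from the flow, integrates over monotone subintervals, and invokes the Lemma bounding $\mathbb{E}|\kappa_e|$ through $\rho_e$ and $\zeta_e$. Setting $\zeta_e=t_2-t_1$ then collapses its bounds to $\sum_e\rho_e(C_e+\alpha_e)$ and $C-\sum_e\alpha_e$. That machinery is aimed at the non-monotone case, where $\zeta_e$ enters. Along the way it uses a heuristic step, marked $\approx$, that replaces $\int|\kappa_e|E_e\,dt$ by $\mathbb{E}|\kappa_e|\int E_e\,dt$. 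Your route is rigorous. It also shows that, for this statement, neither the flow nor monotonicity is actually used, and $\rho_{max}$ could be replaced by $1$.

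\textbf{Remark 1: drop the fallback.} Present the trivial route as the proof and remove the ``ratio'' fallback, along with the endpoint/monotonicity discussion of $E_e$. The bound $w_e\ge\epsilon$ is not a case to be checked; it follows from $\cos\ge -1$ exactly as $w_e\le 2+\epsilon$ follows from $\cos\le 1$. The fallback is also broken on its own terms. The step $1+\epsilon\le w_e^{min}\rho_e=w_e^{max}$ does not follow from $w_e^{max}\ge\epsilon$. Even granting it, you would only get $E_e\le C_e+\alpha_e w_e^{max}$, not $\rho_e(C_e+\alpha_e)$.

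\textbf{Remark 2: strict positivity.} Your caveat here is correct and points to a real gap in the paper, which simply asserts positivity from non-regularity. A disjoint union of two regular hypergraphs of different degrees is non-regular, yet it has $C=\sum_e\alpha_e$. So one needs connectivity, or directly the condition that some hyperedge contains two vertices of different degree.
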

As demonstrated,  Theorem \ref{thedir}  provides crucial theoretical support for addressing the over-smoothing problem in hypergraph neural networks.  The theorem's strictly positive lower bound ensures that the differentiation among nodes will not completely vanish throughout the feature evolution process. This, in turn, preserves meaningful feature distinctions and prevents the excessive assimilation of node representations. The full proof is available in the  supplementary material. 

\subsection{Convergence Analysis}
Following the preceding analysis of the method's effectiveness against over-smoothing, we further investigate its convergence. The specific results of this analysis are as follows:
\begin{Theorem}
\label{shoulian}
Consider the attribute discrete Ricci flow  with hyperedge weight as equation (\ref{weight}). Assume that there exists a constant $L$ such that $|\kappa_e(t_2) - \kappa_e(t_1)| \ge L|w_e(t_2) - w_e(t_1)|$ holds for all arbitrary $t_1, t_2$. Let $|\kappa_{e}(0)| > 0$, then for any arbitrarily small positive number $\delta$, it holds that:
\begin{equation}
\min_{t\in[0, +\infty)} \Big \{t \;\Big|\;|\kappa_{e}(t)| = \delta\Big \} \le \frac{1}{L\epsilon - \delta} \ln \left( \frac{2L + \delta}{\delta(2+\epsilon)} \right).
\end{equation}
\end{Theorem}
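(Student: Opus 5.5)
The plan is to turn the Ricci flow equation (\ref{hdrf}) into a scalar differential inequality for $|\kappa_e(t)|$ and integrate it up to the first time it reaches $\delta$.

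\textbf{Step 1: inputs.} From the discussion after (\ref{weight}), $w_e(t)\in[\epsilon,2+\epsilon]$ for all $t$. The flow gives $\dot w_e=-\kappa_e w_e$.

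\textbf{Step 2: a differential inequality for $|\kappa_e|$.} The hypothesis $|\kappa_e(t_2)-\kappa_e(t_1)|\ge L|w_e(t_2)-w_e(t_1)|$, read infinitesimally (assuming differentiability and $L>0$), gives $|\dot\kappa_e|\ge L|\dot w_e|=L|\kappa_e|w_e\ge L\epsilon|\kappa_e|$. I would then assume, as the intended reading, that $\kappa_e$ moves toward zero, i.e.\ that $|\kappa_e|$ is nonincreasing while it exceeds $\delta$. Under that reading,
\begin{equation*}
\frac{d}{dt}|\kappa_e(t)|\le -L\,w_e(t)\,|\kappa_e(t)|\le -L\epsilon\,|\kappa_e(t)|.
\end{equation*}
Here $|\kappa_e|$ is differentiable because $\kappa_e\neq 0$ before the first hitting time of $\delta$.

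\textbf{Step 3: bound the initial curvature and integrate.} To express the bound through the constants in the statement, I would bound $|\kappa_e(0)|$ via the Lipschitz-type relation. Taking $\kappa_e$ near the limiting state, $|\kappa_e(0)|\lesssim\delta+L\,|w_e(0)-w_e(\infty)|\le \delta+2L$, since $w_e$ varies in an interval of length $2$. The denominator term $(2+\epsilon)$ and the shift $L\epsilon-\delta$ suggest the authors integrate the refined inequality
\begin{equation*}
\frac{d}{dt}|\kappa_e|\le -(L\epsilon-\delta)|\kappa_e|,
\end{equation*}
using that $|\kappa_e|\ge\delta$ before the hitting time. The factor $1/(2+\epsilon)$ would then come from normalizing by the upper bound of $w_e$. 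Integrating (Grönwall) yields
\begin{equation*}
|\kappa_e(t)|\le |\kappa_e(0)|\,e^{-(L\epsilon-\delta)t},
\end{equation*}
together with the initial bound $|\kappa_e(0)|\le (2L+\delta)/(2+\epsilon)$ in the normalized form. Setting the right-hand side equal to $\delta$ and solving for $t$ gives the stated
\begin{equation*}
T=\frac{1}{L\epsilon-\delta}\ln\frac{2L+\delta}{\delta(2+\epsilon)}.
\end{equation*}
By continuity and the intermediate value theorem, $|\kappa_e|$ reaches $\delta$ at some time no later than $T$, since $|\kappa_e(0)|>\delta$ for small $\delta$. Hence the minimum hitting time is at most $T$.

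\textbf{Main obstacle.} The hardest step is making the constants match exactly. This means justifying the precise initial bound $|\kappa_e(0)|\le(2L+\delta)/(2+\epsilon)$ and the decay rate $L\epsilon-\delta$, rather than the cruder rate $L\epsilon$. A related difficulty is the sign issue: the hypothesis only bounds $|\dot\kappa_e|$ from below, not its direction. My first attempt would be to combine $w_e$ and $\kappa_e$ into a single quantity, for example track $\kappa_e w_e=-\dot w_e$, and exploit monotonicity of $w_e$ (as in Theorem~\ref{thedir}) to fix the sign. I would then bound $|\kappa_e(0)|w_e(0)$ using $w_e\in[\epsilon,2+\epsilon]$, which is where the factor $2+\epsilon$ should enter.
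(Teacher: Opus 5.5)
\emph{Verdict: there is a genuine gap in Step 3, the upper bound on $|\kappa_e(0)|$. It cannot be repaired inside a Grönwall-on-$|\kappa_e|$ argument.}

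The paper gives no argument of its own here; it defers to \cite{chen2025graph}. So I judge your proposal on its own terms, and the gap is the one you flagged. Your estimate of $|\kappa_e(0)|$ by $\delta+L|w_e(0)-w_e(\infty)|$ uses the hypothesis in the reverse (Lipschitz) direction. The assumption $|\kappa_e(t_2)-\kappa_e(t_1)|\ge L|w_e(t_2)-w_e(t_1)|$ bounds increments of $\kappa_e$ from below, so it can only give \emph{lower} bounds on $|\kappa_e(0)|$.

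No upper bound in terms of $L,\epsilon,\delta$ holds. Take $\kappa_e=M(w_e-w^*)$ with $M\ge L$ and $\epsilon\le w^*<w_e(0)\le 2+\epsilon$. The flow $\dot w_e=-M(w_e-w^*)w_e$ keeps $w_e\in[\epsilon,2+\epsilon]$ and the hypothesis holds, yet $|\kappa_e(0)|=M(w_e(0)-w^*)$ is as large as you like. So $|\kappa_e(0)|\le(2L+\delta)/(2+\epsilon)$ is false in general, and the factor $1/(2+\epsilon)$ is read off from the target rather than derived.

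Grönwall on $|\kappa_e|$ alone yields only $T\le\frac{1}{L\epsilon}\ln\frac{|\kappa_e(0)|}{\delta}$. That route therefore cannot give a bound independent of $|\kappa_e(0)|$. Your passage from rate $L\epsilon$ to $L\epsilon-\delta$ is harmless but is just a weakening, since $\delta|\kappa_e|\ge 0$; it does not actually use $|\kappa_e|\ge\delta$.

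The missing idea is to trade time for weight, and to apply the hypothesis between $t$ and the hitting time $T$ rather than between $0$ and $\infty$. On $[0,T)$ you have $|\kappa_e|>\delta$ with a fixed sign, so $w_e$ is strictly monotone there. The hypothesis then gives $|\kappa_e(t)|\ge\delta+L|w_e(t)-w_e(T)|$. For $\kappa_e>0$, substituting $dt=-dw_e/(\kappa_e w_e)$ gives
\begin{equation*}
T\le\int_{w_e(T)}^{w_e(0)}\frac{dw}{\bigl(\delta+L(w-w_e(T))\bigr)\,w}=\frac{1}{Lw_e(T)-\delta}\ln\frac{w_e(T)\bigl(\delta+L(w_e(0)-w_e(T))\bigr)}{w_e(0)\,\delta},
\end{equation*}
and the case $\kappa_e<0$ is symmetric. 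A large initial curvature only makes $w_e$ traverse a range of length at most $2$ faster. The constants then arise as follows:
\begin{itemize}
\item $w_e(T)\ge\epsilon$ gives the rate $L\epsilon-\delta$.
\item $|w_e(0)-w_e(T)|\le 2$ gives the numerator $2L+\delta$.
\item The remaining constant follows from elementary estimates over $w_e(0),w_e(T)\in[\epsilon,2+\epsilon]$, for small $\epsilon$ and $\delta$.
\end{itemize}

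Finiteness of $T$ comes from the same confinement. If $|\kappa_e|>\delta$ on $[0,t]$, then $w_e(t)/w_e(0)$ lies outside $[e^{-\delta t},e^{\delta t}]$, which is impossible for $t>\delta^{-1}\ln\frac{2+\epsilon}{\epsilon}$.

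This route also removes your unproved sign assumption in Step 2. If you keep that step, you can justify it rather than assume it. Strict monotonicity of $w_e$ on $[0,T)$ makes $\kappa_e$ injective there, hence monotone. An increasing $|\kappa_e|$ would then drive $w_e$ out of $[\epsilon,2+\epsilon]$.
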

The Theorem \ref{shoulian}  prove that the discrete Ricci flow possesses the property of exponential convergence. This implies that the hyperedge curvature can rapidly approach zero, regardless of the system's initial state. The Theorem \ref{shoulian} thus provide  a theoretical guarantee for the efficiency and stability of our method. The proof of Theorem \ref{shoulian} can be referred to ~\cite{chen2025graph}.
\section{Methodology}
\subsection{Ricci Flow-guided Hypergraph Feature Diffusion}
\label{fangcheng}
As established in section \ref{Applying}, applying attribute discrete Ricci flow to hypergraphs yields several key advantages. The process drives the curvature of hyperedges toward uniformity while facilitating the learning of high-quality, non-smooth node representations. Moreover, the method is theoretically proven to have strong convergence properties.
Inspired by these strengths, in this section we design a new hypergraph feature diffusion architecture. 
By expanding Equation (\ref{hdrf}) using the chain rule, we obtain:
\begin{equation}
\label{chain}
\sum_{i\in e}\left\langle \frac{\partial w_e}{\partial \mathbf{x}_i}, \frac{\partial \mathbf{x}_i(t)}{\partial t} \right\rangle  = -\kappa_{e}(t) w_e(t).
\end{equation}
The left side of Equation (\ref{chain})  contains $|e|$ parts. To weigh these  parts, we introduce several scaling functions:
\begin{equation}
\left\langle \frac{\partial w_e}{\partial \mathbf{x}_j}, \frac{\partial \mathbf{x}_j(t)}{\partial t} \right\rangle = \lambda^{j}(\mathbf{x}_1,... \mathbf{x}_{|e|}) \left\langle \frac{\partial w_e}{\partial \mathbf{x}_i}, \frac{\partial \mathbf{x}_i(t)}{\partial t} \right\rangle.
\end{equation}
Then, We focus  on one side:
\begin{equation}
\label{lamda}
\left\langle \frac{\partial w_e}{\partial \mathbf{x}_i}, \frac{\partial \mathbf{x}_i(t)}{\partial t} \right\rangle = -\frac{\kappa_{e}(t) w_e(t)}{1 + \sum_{j\in e, j\neq i}\lambda^{j}(\mathbf{x}_1,... \mathbf{x}_{|e|})}.
\end{equation}

 Under the the constraint of Equation (\ref{lamda}) and $|\mathbf{x}_i(t)| \equiv 1$ , we minimize $\|\frac{\partial \mathbf{x}_i(t)}{\partial t}\|$, which means that $\mathbf{x}_i(t)$ always applies only the slightest change to satisfy the attribute discrete Ricci flow, which guarantees that the evolution of $\mathbf{x}_i(t)$ is numerically stable. On this basis, the resulting optimization objective can be formally expressed as follows:
\begin{equation}
\label{youhua}
\begin{split}
&\min \left\| \frac{\partial \mathbf{x}_i(t)}{\partial t} \right\|, \quad \\&\text{s.t.} \;\; \left\langle \frac{\partial w_e}{\partial \mathbf{x}_i}, \frac{\partial \mathbf{x}_i(t)}{\partial t} \right\rangle = -\frac{\kappa_{e}(t) w_e(t)}{1 + \sum_{j\in e, j\neq i}\lambda^{j}(\mathbf{x}_1,... \mathbf{x}_{|e|})} , \\&|\mathbf{x}_i(t)| = 1.
\end{split}
\end{equation}
\begin{Proposition}
\label{proposition3}
 The optimization objective presented in  Equation (\ref{youhua}) has a closed-form solution  as follows:
\begin{equation}
\label{ricci}
\frac{\partial \mathbf{x}_i(t)}{\partial t} = -\kappa'_{ie}(t) \left[\sum_{j\in e} \frac{\mathbf{x}_j(t) - \cos \left( \mathbf{x}_i(t), \mathbf{x}_j(t) \right) \mathbf{x}_i(t)}{\sqrt{d_id_j}} \right],
\end{equation}
where
  \begin{equation}
  \label{kprim}\kappa'_{ie}(t)=\frac{\mu_{ie}}{(\mathbf{1} -{ (\mathbf{x}_i(t)^Tm_{ie} )}^{2})(\sum_{i,j\in e}\frac{1}{\sqrt{d_id_j}})},\end{equation}\begin{equation}
  \mu_{ie} = -\frac{\kappa_{e}(t) w_e(t)}{1 + \sum_{j\in e, j\neq i}\lambda^{j}(\mathbf{x}_1(t),... \mathbf{x}_{|e|}(t))},\end{equation}
 \begin{equation}
m_{ie}=\frac{1}{\sum_{i,j\in e}\frac{1}{\sqrt{d_id_j}}}\left(\sum_{j\in e}\frac{\mathbf{x}_j(t)}{\sqrt{d_id_j}}\right).\end{equation}
\end{Proposition}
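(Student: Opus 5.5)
The plan is to treat (\ref{youhua}) as a minimum-norm problem over the tangent space of the unit sphere at $\mathbf{x}_i(t)$, and then to compute the gradient of $w_e$ explicitly.

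\textbf{Tangency.} Differentiating $|\mathbf{x}_i(t)|^2\equiv 1$ shows that any admissible velocity satisfies $\langle \mathbf{x}_i,\partial_t\mathbf{x}_i\rangle=0$. The problem therefore becomes: minimize $\|\mathbf{v}\|$ over $\mathbf{v}\perp \mathbf{x}_i$, subject to the single linear constraint $\langle \mathbf{g},\mathbf{v}\rangle=\mu_{ie}$, where $\mathbf{g}=\partial w_e/\partial \mathbf{x}_i$. The scalar $\mu_{ie}$ is treated as given, since the $\lambda^j$ are fixed functions.

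\textbf{Projection formula.} Only the component $P\mathbf{g}$ of $\mathbf{g}$ tangent to the sphere, with $P=I-\mathbf{x}_i\mathbf{x}_i^T$, matters on this subspace. Lagrange multipliers, or simply Cauchy--Schwarz, give the minimizer
\begin{equation*}
\mathbf{v}=\frac{\mu_{ie}}{\|P\mathbf{g}\|^2}\,P\mathbf{g},
\end{equation*}
provided $P\mathbf{g}\neq 0$. The optimality argument is standard: any feasible $\mathbf{v}$ satisfies $\mu_{ie}=\langle P\mathbf{g},\mathbf{v}\rangle\le \|P\mathbf{g}\|\|\mathbf{v}\|$, with equality exactly when $\mathbf{v}$ is parallel to $P\mathbf{g}$.

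\textbf{Computing $\mathbf{g}$.} With unit-norm features, $\cos(\mathbf{x}_i,\mathbf{x}_j)=\mathbf{x}_i^T\mathbf{x}_j$. The terms of $w_e$ involving $\mathbf{x}_i$ are the pairs $(i,j)$ and $(j,i)$, together with the diagonal term $\mathbf{x}_i^T\mathbf{x}_i/d_i$. The diagonal term contributes a multiple of $\mathbf{x}_i$, which $P$ kills. Writing $S=\sum_{i,j\in e}1/\sqrt{d_id_j}=|e|\alpha_e$, we get
\begin{equation*}
\mathbf{g}=\frac{c}{S}\sum_{j\in e}\frac{\mathbf{x}_j}{\sqrt{d_id_j}}+(\text{multiple of }\mathbf{x}_i)=c\,m_{ie}+(\cdot)\,\mathbf{x}_i,
\end{equation*}
with a constant $c$ (essentially $2$). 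Since $\frac{1}{|e|\alpha_e}=\frac{1}{S}$, we have $\frac{1}{\alpha_e}\cdot\frac{1}{|e|}=\frac{1}{S}$, which produces $m_{ie}$.

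\textbf{Assembling the solution.} Projecting gives
\begin{equation*}
P\mathbf{g}=\frac{c}{S}\sum_{j\in e}\frac{\mathbf{x}_j-(\mathbf{x}_i^T\mathbf{x}_j)\mathbf{x}_i}{\sqrt{d_id_j}},
\end{equation*}
which is exactly the bracket in (\ref{ricci}). For the norm, $\|P\mathbf{g}\|^2=c^2\|Pm_{ie}\|^2=c^2\big(\|m_{ie}\|^2-(\mathbf{x}_i^Tm_{ie})^2\big)$. Substituting into the projection formula yields (\ref{ricci}) with the prefactor
\begin{equation*}
\kappa'_{ie}=\frac{\mu_{ie}}{c\,S\,\big(\|m_{ie}\|^2-(\mathbf{x}_i^Tm_{ie})^2\big)}.
\end{equation*}
The overall sign in (\ref{ricci}) is absorbed in the convention for $\kappa'_{ie}$.

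\textbf{Main obstacle.} I expect the difficulty to lie in reconciling these constants with (\ref{kprim}). That formula has $1-(\mathbf{x}_i^Tm_{ie})^2$ in place of $\|m_{ie}\|^2-(\mathbf{x}_i^Tm_{ie})^2$, and it has no factor $c$. So I would normalize as the paper implicitly does: rescale $m_{ie}$ so that its norm is taken as $1$, or absorb $\|m_{ie}\|^2$ and $c$ into $\mu_{ie}$ through the free scaling functions $\lambda^j$. I would state explicitly which convention is used.

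I would also note the nondegeneracy assumption $\mathbf{x}_i\neq\pm m_{ie}/\|m_{ie}\|$, under which $P\mathbf{g}\neq 0$. If $P\mathbf{g}=0$ and $\mu_{ie}\neq 0$, the constraint is infeasible.
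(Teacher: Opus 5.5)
Your route is the paper's. The unit-norm constraint forces $\mathbf{x}_i^T\partial_t\mathbf{x}_i=0$. The gradient of $w_e$ is proportional to $m_{ie}$ modulo multiples of $\mathbf{x}_i$. The minimum-norm velocity is then a multiple of $(I-\mathbf{x}_i\mathbf{x}_i^T)m_{ie}=S_e^{-1}\sum_{j\in e}(\mathbf{x}_j-\cos(\mathbf{x}_i,\mathbf{x}_j)\mathbf{x}_i)/\sqrt{d_id_j}$, where $S_e=\sum_{p,q\in e}(d_pd_q)^{-1/2}$. Your Cauchy--Schwarz step only replaces the paper's elimination of the two Lagrange multipliers.

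Your constants are right, and the mismatch you flag is real. The paper reaches (\ref{kprim}) only through three unannounced steps:
\begin{itemize}
\item It writes $\partial w_e/\partial\mathbf{x}_i=m_{ie}$, dropping your factor $c=2$.
\item When it multiplies the stationarity condition $2\mathbf{y}+\lambda m_{ie}+\tau\mathbf{x}_i=0$ by $m_{ie}^T$, it writes $\lambda$ where $\lambda\|m_{ie}\|^2$ belongs, i.e.\ it tacitly sets $\|m_{ie}\|=1$.
\item In its last line it passes from $\mathbf{y}^*=\mu_{ie}(\cdots)$ to $-\kappa'_{ie}[\cdots]$ with $\kappa'_{ie}=+\mu_{ie}/(\cdots)$, which flips the sign.
\end{itemize}

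The gap in your proposal is the final reconciliation: none of the conventions you suggest can carry it out.
\begin{itemize}
\item The normalization $\|m_{ie}\|=1$ never holds. By the triangle inequality, $\|m_{ie}\|\le d_i^{-1/2}/\sum_{p\in e}d_p^{-1/2}<1$ whenever $|e|\ge 2$.
\item The $\lambda^j$ cannot absorb $c$ and $\|m_{ie}\|^2$. The same $\mu_{ie}$ enters both the constraint in (\ref{youhua}) and (\ref{kprim}), and both the true minimizer and the stated closed form are linear in it. The true minimizer equals the stated right-hand side multiplied by
\[
-\frac{1-(\mathbf{x}_i^Tm_{ie})^2}{2\bigl(\|m_{ie}\|^2-(\mathbf{x}_i^Tm_{ie})^2\bigr)}<0,
\]
a factor involving no $\lambda^j$ at all.
\item The sign cannot be absorbed into a convention, because (\ref{kprim}) fixes $\kappa'_{ie}$. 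By the display, for $\mu_{ie}\neq 0$ the stated velocity gives $\langle\partial w_e/\partial\mathbf{x}_i,\partial_t\mathbf{x}_i\rangle$ the sign opposite to $\mu_{ie}$, so it is not even feasible.
\end{itemize}

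What you can prove is the corrected statement. The minimizer is $-\kappa'_{ie}\sum_{j\in e}(\mathbf{x}_j-\cos(\mathbf{x}_i,\mathbf{x}_j)\mathbf{x}_i)/\sqrt{d_id_j}$ with $\kappa'_{ie}=-\mu_{ie}/\bigl(2S_e(\|m_{ie}\|^2-(\mathbf{x}_i^Tm_{ie})^2)\bigr)$. This is well defined exactly when $\|m_{ie}\|^2>(\mathbf{x}_i^Tm_{ie})^2$, which is the right nondegeneracy condition and also excludes $m_{ie}=0$. The paper's denominator satisfies $1-(\mathbf{x}_i^Tm_{ie})^2\ge 1-\|m_{ie}\|^2>0$, so it never vanishes and hides this degeneracy.

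The part of the proposition that survives is that the optimal velocity is a scalar multiple of that bracket. It is also the only part used later, since $\kappa'_e$ is learned. State it that way rather than appealing to a normalization.
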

To reduce computational complexity, we select  appropriate scaling functions $\lambda^j(\cdot)$ such that $\kappa'_{ie}(t) = \kappa'_{je}(t)$ holds for all $i, j \in e$. Consequently, Equation (\ref{ricci}) can be simplified to:
\begin{equation}
\label{ricci1}
\frac{\partial \mathbf{x}_i(t)}{\partial t} = -\kappa'_{e}(t) \left[\sum_{j\in e} \frac{\mathbf{x}_j(t) - \cos \left( \mathbf{x}_i(t), \mathbf{x}_j(t) \right) \mathbf{x}_i(t)}{\sqrt{d_id_j}} \right].
\end{equation}
The Equation (\ref{ricci1}) defines the local influence of a single hyperedge e on node $i$, while a node's representation is updated based on its entire neighborhood. Accordingly, we generalize this equation from a single hyperedge to the entire hypergraph by iterating over all hyperedges incident to node $i$ and aggregating the corresponding information. We can rewrite Equation (\ref{ricci1}) as follows:
\begin{equation}
\label{ricci flow}
\frac{\partial \mathbf{x}_i(t)}{\partial t} = \sum_{e:i\in e}-\kappa'_{e}(t) \left[\sum_{j\in e} \frac{\mathbf{x}_j(t) - \cos \left( \mathbf{x}_i(t), \mathbf{x}_j(t) \right) \mathbf{x}_i(t)}{\sqrt{d_id_j}} \right].
\end{equation}
We can rewrite the equation (\ref{ricci flow}) in matrix form as follows:
\begin{equation}
\label{ricci flow matrix}
\frac{\partial \mathbf{X}(t)}{\partial t} = \bigg[ \operatorname{diag}\Big( \big( S(\mathbf{X}(t)) \odot C(\mathbf{X}(t)) \big) \mathbf{1}_{N} \Big) - S(\mathbf{X}(t)) \bigg] \mathbf{X}(t),
\end{equation}
\begin{equation}
    S(\mathbf{X}(t))= D_v^{-\frac{1}{2}}HK'(\mathbf{X}(t))H^{T}D_v^{-\frac{1}{2}},
    \label{19}
\end{equation}
\begin{equation}
K'(\mathbf{X}(t))=\operatorname{diag}(\kappa'_{1}(t),\kappa'_{2}(t)...\kappa'_{m}(t)),
\label{20}
\end{equation}
\begin{equation}
C_{ij}(\mathbf{X}(t))=cos(\mathbf{x}_{i}(t),\mathbf{x}_{j}(t)).
\label{21}
\end{equation}

Here, $H$ represent the incidence matrix of hypergraph, $D_v$ represent the vertex degree matrix. The  node feature update process  in Equation (\ref{ricci flow}) and (\ref{ricci flow matrix}) mitigates over-smoothing via a geometry-aware adaptive mechanism. It aggregates not the raw features of neighboring nodes, but rather the feature dissimilarities between neighboring and central nodes. The sign of the curvature is leveraged to dynamically modulate the direction of this aggregation: in homophilous communities with positive curvature, it reduces feature discrepancies to achieve local smoothing, whereas across heterophilous bridges with negative curvature, it amplifies these differences to preserve node distinctiveness. This bidirectional regulatory capacity, combining both positive and negative effects, transcends the unidirectional averaging process inherent to conventional HGNNs. By preserving the uniqueness of critical nodes, it fundamentally inhibits the convergence of all node features towards uniformity, thereby addressing the over-smoothing problem.

\textbf{Existence and Uniqueness of Solutions:}
The Equation (\ref{ricci flow matrix}) offers an innovative framework for modeling hypergraph feature evolution. To ensure its validity, we conduct a rigorous theoretical analysis, from which we derive the following conclusion:
\begin{Theorem}
\label{exit}
Suppose the following conditions hold:
    \begin{enumerate}
        \item For every node $i$, the degree  of node $i$  is strictly bounded: $0 < d_{\min} \leq d_i \leq d_{\max} < \infty$, where $d_{\min}$ and $d_{\max}$ denote the minimum and maximum node degrees of the hypergraph.
        
        \item  $\lambda^j(\cdot)$ is bounded on bounded sets and locally Lipschitz.
        \item For every $e$, $\kappa_e(t)$ and $w_e(t)$ are continuously bounded.
        \item  For every $e$, $1 -  (\mathbf{x}_i(t)^{T} m_{ie} )^2 \geq \varepsilon \in (0, 1]$.
    \end{enumerate}
    Then, for any initial condition $\mathbf{x}(0) \in \mathbb{R}^n$ and any $T > 0$, there exists a unique solution $\mathbf{x}(t) \in C^1([0, T], \mathbb{R}^n)$ to Equation (\ref{ricci flow matrix}).
\end{Theorem}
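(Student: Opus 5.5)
The plan is to apply the Picard--Lindelöf theorem to Equation (\ref{ricci flow matrix}), written as the autonomous system $\dot{\mathbf{X}} = F(\mathbf{X})$. To get a global solution on $[0,T]$ we then add an a priori bound that rules out blow-up, using the continuation principle.

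First we establish local Lipschitz continuity of $F$. We write $F(\mathbf{X}) = \big[\operatorname{diag}((S\odot C)\mathbf{1}) - S\big]\mathbf{X}$ and check its building blocks one at a time.

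\begin{itemize}
\item \textbf{Degree factors.} By condition 1, each entry $1/\sqrt{d_id_j}$ lies in $[1/d_{\max},1/d_{\min}]$. Hence $m_{ie}$ is a finite linear combination of the $\mathbf{x}_j$ with bounded coefficients, so it is linear and globally Lipschitz in $\mathbf{X}$.
\item \textbf{The scalar $\mu_{ie}$.} By conditions 2 and 3, $\mu_{ie}$ is bounded and locally Lipschitz on bounded sets. Its denominator $1+\sum_{j\neq i}\lambda^j$ must stay away from zero. We would either read this into condition 2 or note that $\lambda^j\ge 0$ in the construction, which gives denominator $\ge 1$.
\item \textbf{The scalar $\kappa'_{ie}$.} Condition 4 bounds the factor $1-(\mathbf{x}_i^Tm_{ie})^2$ below by $\varepsilon$. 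Since $s\mapsto 1/s$ is Lipschitz on $[\varepsilon,\infty)$, $\kappa'_{ie}$ is a product and quotient of bounded, locally Lipschitz maps with denominators bounded below. Therefore it is locally Lipschitz, and so is the matrix $K'$.
\item \textbf{The cosine matrix $C$.} $C_{ij}=\mathbf{x}_i^T\mathbf{x}_j/(\|\mathbf{x}_i\|\|\mathbf{x}_j\|)$ is smooth away from $\mathbf{x}_i=0$. Using the unit-norm normalization $\|\mathbf{x}_i\|\equiv 1$ from Section \ref{Applying}, we work in a neighborhood of the product of unit spheres where all norms are at least $1/2$. There $C$ is locally Lipschitz, and on the sphere it reduces to the bilinear form $\mathbf{x}_i^T\mathbf{x}_j$.
\end{itemize}

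Products and sums of locally Lipschitz, locally bounded maps are locally Lipschitz, so $F$ is locally Lipschitz. Picard--Lindelöf then gives a unique local $C^1$ solution on some $[0,\tau)$.

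To extend the solution to $[0,T]$, we show that it cannot escape every compact set in finite time. The key computation is that each summand $\mathbf{x}_j-\cos(\mathbf{x}_i,\mathbf{x}_j)\mathbf{x}_i$ is orthogonal to $\mathbf{x}_i$ when $\|\mathbf{x}_i\|=1$. Hence $\frac{d}{dt}\|\mathbf{x}_i\|^2 = 2\langle \mathbf{x}_i,\dot{\mathbf{x}}_i\rangle = 0$, so the unit sphere product is invariant and the solution stays in a compact set. As a fallback, the bounds above give $\|F(\mathbf{X})\|\le c(1+\|\mathbf{X}\|)$ on the admissible region, and Grönwall's inequality bounds $\|\mathbf{X}(t)\|$ by $\|\mathbf{X}(0)\|e^{cT}$-type quantities. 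Either way, the continuation theorem extends the solution to all of $[0,T]$, and uniqueness follows from the local Lipschitz property by the standard Grönwall argument on overlaps.

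I expect the main obstacle to be the singular denominators and the domain issue. The map $\kappa'_{ie}$ blows up where $\mathbf{x}_i^Tm_{ie}=\pm1$, and $C$ blows up at $\mathbf{x}_i=0$. Condition 4 is stated only along trajectories, not as a property of $F$ on an open set, so it does not by itself control these singularities. I would handle this by defining $F$ on the open set $U=\{\|\mathbf{x}_i\|>1/2,\ 1-(\mathbf{x}_i^Tm_{ie})^2>\varepsilon/2\}$, proving the Lipschitz estimates there, and interpreting condition 4 as saying that the solution never approaches $\partial U$. The continuation principle on $U$ then gives existence up to $T$. A further bookkeeping point is that $\kappa_e(t)$ and $w_e(t)$ depend on $\mathbf{X}$ through the weights in equation (\ref{weight}). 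Condition 3 has to be read as continuity and boundedness, plus Lipschitz dependence of $\kappa_e$ on the weights. I would state this interpretation explicitly.
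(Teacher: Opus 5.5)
Your proposal is correct. It shares the paper's skeleton: local Lipschitz continuity of $C$, $m_{ie}$, $\mu_{ie}$, $\kappa'_{ie}$ and $S$, then Picard--Lindel\"of, then continuation to $[0,T]$. The difference is how you close the continuation step.

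\textbf{The continuation step.} The paper only uses the linear growth bound $\|G(\mathbf{X})\|\le 2M(t)\|\mathbf{X}\|$ with Gr\"onwall to exclude blow-up to infinity; that is your fallback. Your main argument instead shows that the vector field is tangent to the product of unit spheres, so that set is invariant. This supplies something the paper lacks. The paper's lemma on $C(\mathbf{X}(t))$ simply assumes $\|\mathbf{x}_i\|=1$ without showing that the flow preserves it. Also, a bound on $\|\mathbf{X}\|$ does not by itself stop the trajectory from approaching the singular set $\mathbf{x}_i=0$. Invariance of the compact sphere product, combined with your reading of condition 4 on the open set $U$, keeps the maximal solution in a compact subset of $U$, which is what the continuation principle actually needs.

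\textbf{Two precisions.}
\begin{enumerate}
\item With the genuine cosine, $\langle\mathbf{x}_i,\mathbf{x}_j-\cos(\mathbf{x}_i,\mathbf{x}_j)\mathbf{x}_i\rangle=\mathbf{x}_i^T\mathbf{x}_j\,(1-\|\mathbf{x}_i\|/\|\mathbf{x}_j\|)$. So tangency needs all norms equal to one, not just $\|\mathbf{x}_i\|=1$.
\item For the same reason, your Gr\"onwall fallback alone would not keep off-sphere initial data inside $U$. Your argument therefore covers $\mathbf{X}(0)$ on the sphere product rather than arbitrary initial data, as the paper's proof also does implicitly.
\end{enumerate}

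\textbf{Hypotheses.} The paper effectively treats $\kappa_e(t)$ and $w_e(t)$ as given continuous bounded coefficients in $t$ (hence its $M(t)$ and $L_H(t)$). That is a non-autonomous system, for which condition 3 as stated suffices. Your autonomous reading, with $\kappa_e$ and $w_e$ depending on $\mathbf{X}$, is more faithful to the model but needs the extra Lipschitz assumption you flag. Finally, nothing in the construction forces $\lambda^j\ge 0$. A positive lower bound on $1+\sum_{j\ne i}\lambda^j$ therefore has to be added as a hypothesis; the paper's proof uses it silently as well.
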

Theorem \ref{exit}   guarantees the reliable local evolution behavior of the dynamical system under study within a strict mathematical framework. Specifically, the local solution $\mathbf{x}(t)$ of Equation (\ref{ricci flow matrix}) is guaranteed to exist and be continuously differentiable, which ensures that the system is well-defined in the short term. Simultaneously, the theorem guarantees the solution is unique which ensures that the system trajectory emanating from any initial condition $\mathbf{x}(0)$ is one-of-a-kind, thereby lending reliability to analyses and predictions based on this equation. The full
proof is available in the supplementary material.

\subsection{Ricci Flow-guided  Hypergraph Neural Diffusion}
As shown in Equation (\ref{ricci flow}), the evolution of node features is tightly coupled with the aggregation weight function $\kappa'_{e}(t)$. However, in practice, the exact computation of this function for all hyperedges is computationally prohibitive. Therefore, to address this challenge, we propose leveraging the function approximation capabilities of a neural network to fit the hyperedge function. This leads us to propose the Ricci Flow-guided  Hypergraph Neural Diffusion(RFHND) model, a framework for node representation evolution inspired by Ricci flow. Based on the universal approximation theorem~\cite{cybenko1989approximation} of neural networks, we can draw the following conclusions.
\begin{Theorem}
\label{thm:b9}
Suppose  for any $j\in e$, $\lambda^j(\cdot)$ is a continuous function, When Forman-Ricci curvature $\kappa^{FR}$, Ollivier-Ricci curvature $\kappa^{OR}$  are used as the definition of hyperedge curvature in RFHND, there exists an HyperNet that can approximate the aggregation weight of RFHND $\kappa'_{e}(t)$ with arbitrarily high precision. 
\end{Theorem}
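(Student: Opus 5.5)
We reduce the statement to the universal approximation theorem~\cite{cybenko1989approximation}: it suffices to show that $\kappa'_{e}(t)$ is a continuous function of a bounded set of inputs that a HyperNet (a neural network acting on the node features $\mathbf{X}_e(t)$ of a hyperedge together with the structural data of $H$ and $D_v$) can receive. By (\ref{kprim}),
\begin{equation*}
\kappa'_{ie}(t)=\frac{-\kappa_e(t)\,w_e(t)}{\bigl(1+\sum_{j\in e,j\neq i}\lambda^{j}(\mathbf{x}_1,\dots,\mathbf{x}_{|e|})\bigr)\bigl(1-(\mathbf{x}_i(t)^{T}m_{ie})^{2}\bigr)\sum_{i,j\in e}\frac{1}{\sqrt{d_id_j}}}.
\end{equation*}
By the choice of $\lambda^j$, this equals the common value $\kappa'_e(t)$. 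We take the domain to be the compact set $\mathcal{K}=(S^{k-1})^{|e|}$, using $|\mathbf{x}_i(t)|\equiv 1$. The degrees are fixed, finite, and positive, so they enter only as constants.

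\textbf{Step 1 (continuity of each factor).} The factor $w_e(t)$ in (\ref{weight}) is a finite sum of cosines of unit vectors, so it is continuous (indeed polynomial) on $\mathcal{K}$. The $\lambda^j$ are continuous by hypothesis. The vector $m_{ie}$ is linear in the features, so $1-(\mathbf{x}_i^{T}m_{ie})^2$ is continuous. The constant $\sum 1/\sqrt{d_id_j}$ is positive.

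\textbf{Step 2 (continuity of the curvature).} This is the main obstacle. For $\kappa^{FR}$, the weighted Forman--Ricci curvature of a hyperedge is an explicit rational expression in $w_e$, the incident hyperedge weights, and the (constant) vertex weights. Its denominators are square roots of products of weights, and since $w_e\in[\epsilon,2+\epsilon]$ these stay bounded away from zero. Hence $\kappa^{FR}_e$ is continuous in the weights, and therefore continuous on $\mathcal{K}$ by composition with Step 1. For $\kappa^{OR}$, the curvature is $1-W_1(\mu_u,\mu_v)/d(u,v)$, aggregated over pairs in $e$. Here the measures $\mu$ depend continuously on the weights, and the shortest-path distances $d$ are minima of finitely many sums of weight-dependent lengths, so they are continuous and bounded below by a positive constant. $W_1$ is the optimal value of a finite linear program whose cost and marginals vary continuously; by the continuity of LP optimal values over a feasible set that is nonempty and varies continuously (or by Kantorovich duality with bounded potentials), $W_1$ is continuous. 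Thus $\kappa^{OR}_e$ is continuous too. This explains why the theorem is restricted to these two curvatures: the resistance curvature involves a pseudoinverse whose continuity we do not need to address.

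\textbf{Step 3 (nonvanishing denominator).} We must rule out a zero denominator. On the effective domain we impose, as in condition~4 of Theorem~\ref{exit}, $1-(\mathbf{x}_i^{T}m_{ie})^2\ge\varepsilon$. We also require $1+\sum_j\lambda^j$ to be bounded away from $0$, which holds for instance if the $\lambda^j\ge 0$ as scaling weights. The subset $\mathcal{K}_\varepsilon\subset\mathcal{K}$ cut out by these closed conditions is compact, and on it $\kappa'_e$ is a continuous function.

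\textbf{Step 4 (conclusion).} By the universal approximation theorem, for every $\eta>0$ there is a feedforward network $g_\theta$ with $\sup_{\mathcal{K}_\varepsilon}|g_\theta(\mathbf{X}_e)-\kappa'_e|<\eta$. To obtain a single HyperNet valid for all hyperedges, we pad the inputs to the maximal hyperedge size, or alternatively use a permutation-invariant set architecture, which is allowed because $\kappa'_e$ is symmetric under permuting the nodes of $e$. Finitely many hyperedges and a uniform bound then give the claim.
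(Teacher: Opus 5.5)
Your strategy matches the paper's: show that $\kappa'_e$ is a continuous function of the node features, built from continuous pieces ($\kappa_e$, $w_e$, $\lambda^j$, $m_{ie}$) with a denominator bounded away from zero, then apply the universal approximation theorem. Your Steps 1--3 actually supply what the paper only asserts. The paper cites the supplementary material for continuity of $\kappa_e$, but that material gives only the definitions. Using $|\mathbf{x}_i(t)|\equiv 1$ to get a compact domain is also a good touch.

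There is, however, a genuine gap in the domain you approximate on. You take $\mathcal{K}=(S^{k-1})^{|e|}$, i.e.\ only the features $\mathbf{X}_e(t)$ of the nodes of $e$. But $\kappa'_e$ is not a function of $\mathbf{X}_e(t)$ alone. The curvature $\kappa^{FR}_e$ contains $w_{e_l}$ for every hyperedge $e_l$ incident to a node $k\in e$. By the weight formula, $w_{e_l}$ depends on the features of all nodes of $e_l$, which in general include nodes outside $e$. Likewise, the normalisation $\sum_{f\ni i}w_f$ in $\mu_i$ makes $\kappa^{OR}_e$ depend on features outside $e$. Your Step 2 uses ``the incident hyperedge weights'' without noticing that they are not determined by $\mathbf{X}_e$.

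Consequently, ``$\kappa'_e$ is continuous on $\mathcal{K}_\varepsilon$'' is not well posed, and in general no network $g_\theta(\mathbf{X}_e)$ can approximate $\kappa'_e$ uniformly. The same problem breaks Step 4, together with the fact that the degrees $d_i$ and the incident hyperedges stay attached to specific nodes. As a result, $\kappa'_e$ is neither symmetric in the member features nor the same function of them for different hyperedges. So padding, or a set architecture on $\mathbf{X}_e$, does not give one network for all $e$.

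The repair is what the paper implicitly does by treating $\kappa'_e$ as a function of the whole $\mathbf{X}(t)$:
\begin{enumerate}
\item Take the compact domain $(S^{k-1})^{n}$, intersected with your closed denominator conditions for all $i\in e$, $e\in\mathcal{E}$.
\item Observe that your Step 2 arguments show that $\mathbf{X}\mapsto(\kappa'_e)_{e\in\mathcal{E}}\in\mathbb{R}^m$ is continuous there.
\item Apply the universal approximation theorem to this vector-valued map, using one network with $m$ outputs.
\end{enumerate}
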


\begin{proof}
The goal is to demonstrate that the aggregation weight $\kappa'_{e}(t)$ is a continuous function of the node features $\mathbf{X}(t)$, which in turn allows for its approximation by a neural network.

We begin with the premise that for any node $j \in e$, its associated function $\lambda^j(\cdot)$ is continuous. As established in the supplementary material, the curvature $\kappa_e(t)$ is a continuous function of the node features. Similarly, the hyperedge weight $w_e(t)$, defined in Equation~(\ref{weight}), is also a continuous function of the node features.

The expression of $\kappa'_{e}(t)$ is a composition of the functions $\kappa_e(t)$, $w_e(t)$, and $\lambda^j(\cdot)$ through arithmetic operations that preserve continuity (assuming a non-zero denominator). Since each of its constituent functions is continuous with respect to the node features $\mathbf{X}(t)$, it follows that $\kappa'_{e}(t)$ is also a continuous function of $\mathbf{X}(t)$.

Therefore, by the Universal Approximation Theorem, there exists a neural network with sufficient capacity that can approximate the aggregation weight $\kappa'_{e}(t)$ from the node features with arbitrary precision.
\end{proof}
This theoretical result provides a rigorous foundation for our architectural design. It guarantees that parameterizing the curvature-guided aggregation weights via neural networks (e.g., MLPs) is a valid approach. In practical implementation, we drive feature updates by numerically solving differential equations.  The pseudocode for the forward propagation of RFHND is shown in Algorithm \ref{alg:hyper_ode_evolution}. To ensure the robustness of the solution process, we further derive the stability conditions for the explicit Euler method, as detailed below:
\begin{Theorem}
\label{thm:st}
The explicit Euler method for
the RFHND is stable if the step size  $\tau \leq \frac{1}{max_{i}\sum_{j}s_{ij}}.$
\end{Theorem}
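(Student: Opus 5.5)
The explicit Euler discretization of Equation (\ref{ricci flow matrix}) reads
\[
\mathbf{X}^{k+1}=\mathbf{X}^{k}+\tau\Big[\operatorname{diag}\big((S\odot C)\mathbf{1}_N\big)-S\Big]\mathbf{X}^{k},
\]
where $S=S(\mathbf{X}^k)$ and $C=C(\mathbf{X}^k)$ are evaluated at step $k$. Row by row this is
\[
\mathbf{x}_i^{k+1}=\Big(1+\tau\sum_j s_{ij}c_{ij}\Big)\mathbf{x}_i^{k}-\tau\sum_j s_{ij}\mathbf{x}_j^{k}.
\]
The plan is to freeze the coefficients at each step and show that the one-step linear map $\mathbf{X}^k\mapsto\mathbf{X}^{k+1}$ does not amplify the features. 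Concretely, I would bound its $\infty$-norm (maximum absolute row sum) by $1$ when $\tau\le 1/\max_i\sum_j s_{ij}$. Since all features have unit norm, $|c_{ij}|\le 1$, which lets us estimate the diagonal coefficient in terms of $\sum_j s_{ij}$.

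\textbf{Step 1: reduce to an $\infty$-norm estimate.} For each row $i$, I would bound $\|\mathbf{x}_i^{k+1}\|$ by
\[
\Big|1+\tau\sum_j s_{ij}c_{ij}\Big|\,\max_l\|\mathbf{x}_l^k\| \;+\;\tau\sum_j |s_{ij}|\,\max_l\|\mathbf{x}_l^k\|.
\]
This requires a sign convention on $s_{ij}$. In the neural version (Theorem \ref{thm:b9}) the aggregation weights are produced by a HyperNet, and I would adopt the form in which the aggregated neighbour term enters with a positive coefficient. Taking $s_{ij}\ge 0$ after absorbing the sign $-\kappa'_e$ into the learned weights, the update becomes
\[
\mathbf{x}_i^{k+1}=\Big(1-\tau\sum_j s_{ij}c_{ij}\Big)\mathbf{x}_i^k+\tau\sum_j s_{ij}\mathbf{x}_j^k .
\]
This is the standard diffusion form, with row sum $1-\tau\sum_j s_{ij}(c_{ij}-1)$.

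\textbf{Step 2: positivity of the coefficients.} Under $\tau\sum_j s_{ij}\le 1$ and $c_{ij}\le 1$, the self-coefficient satisfies $1-\tau\sum_j s_{ij}c_{ij}\ge 1-\tau\sum_j s_{ij}\ge 0$. Hence all coefficients are nonnegative, and the update is a nonnegative combination of $\mathbf{x}_i^k$ and its neighbours. This is exactly where the step-size condition enters; it mirrors the classical CFL/maximum-principle condition for the explicit heat equation.

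\textbf{Step 3: bound the total mass.} The sum of the coefficients equals $1+\tau\sum_j s_{ij}(1-c_{ij})$. This is at least $1$, so the scheme is not automatically contractive; this is the main obstacle. My plan is to use the unit-norm projection that the paper enforces after each step, since $|\mathbf{x}(t)|\equiv1$ is maintained. Alternatively, note that the tangential term $\mathbf{x}_j-c_{ij}\mathbf{x}_i$ is orthogonal to $\mathbf{x}_i$, so
\[
\|\mathbf{x}_i^{k+1}\|^2=\|\mathbf{x}_i^k\|^2+\tau^2\Big\|\sum_j s_{ij}(\mathbf{x}_j-c_{ij}\mathbf{x}_i)\Big\|^2 .
\]
The growth is therefore only second order in $\tau$, and is bounded by $\tau^2(\sum_j s_{ij})^2\le 1$ under the condition. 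I would conclude stability in the sense of a uniform bound, $\|\mathbf{x}_i^{k+1}\|\le\sqrt2$ before renormalization, and exact preservation of norm after renormalization.

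As a fallback, if the paper intends linear stability with $C$ frozen, I would instead apply Gershgorin's theorem to $I+\tau(\operatorname{diag}((S\odot C)\mathbf{1})-S)$. Each disc is centred at $1-\tau\sum_j s_{ij}c_{ij}$ (including $j=i$) with radius $\tau\sum_j s_{ij}$, and the condition $\tau\max_i\sum_j s_{ij}\le1$ keeps every disc inside the closed unit disc.
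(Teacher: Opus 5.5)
\textbf{Verdict: there is a genuine gap.} None of your routes produces a stability statement in which the hypothesis $\tau\le 1/\max_i\sum_j s_{ij}$ does real work.

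\emph{The main plan and the Gershgorin fallback.} The $\infty$-norm plan fails, as you note yourself in Step 3: with $s_{ij}\ge 0$ the absolute row sum of the frozen one-step map is $1+\tau\sum_j s_{ij}(1-c_{ij})\ge 1$. The Gershgorin fallback fails for the same reason and is false as stated. The rightmost point of the $i$-th disc is $1+\tau\sum_j s_{ij}(1-c_{ij})\ge 1$, so the discs leave the closed unit disc whenever some $s_{ij}>0$ has $c_{ij}<1$.

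\emph{No frozen-coefficient argument can be repaired.} Your update is $\mathbf{X}^{k+1}=(I-\tau F)\mathbf{X}^k$ with $F=\operatorname{diag}((S\odot C)\mathbf{1}_N)-S$, the same form as in the paper's Algorithm and proof. $F$ is symmetric and $\mathbf{1}^T F\mathbf{1}=-\sum_{i,j}s_{ij}(1-c_{ij})<0$. Hence $F$ has a negative eigenvalue, and $I-\tau F$ has an eigenvalue larger than $1$ for every $\tau>0$.

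\emph{What remains does not use the condition.} With renormalization the norm is preserved for every $\tau$. By your identity the pre-normalization norm is always at least $1$, so renormalization is always well defined regardless of $\tau$. Without renormalization, the identity $\|\mathbf{x}_i^{k+1}\|^2=1+\tau^2\|\sum_j s_{ij}(\mathbf{x}_j-c_{ij}\mathbf{x}_i)\|^2$ needs unit rows at step $k$. So the bound $\sqrt{2}$ holds for a single step and does not iterate; the hypothesis only fixes that constant.

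\emph{Comparison with the paper.} The paper argues in the Frobenius norm. It expands $\|\mathbf{X}^{k+1}\|^2=\|\mathbf{X}^k\|^2-2\tau\langle\mathbf{X}^k,F\mathbf{X}^k\rangle+\tau^2\|F\mathbf{X}^k\|^2$ and requires non-expansion. It reduces this via a Rayleigh quotient to $\tau\le 2/\lambda_{\max}(F)$. It then bounds $\lambda_{\max}(F)\le\|F\|_2\le 2\max_i\sum_j s_{ij}$ using $|c_{ij}|\le 1$ and $\|S\|_2\le\|S\|_\infty$; this is where the constant in the statement comes from.

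Your orthogonality observation undercuts this route as well. For unit-norm rows, the regime the paper assumes, $\langle\mathbf{x}_i,(F\mathbf{X})_i\rangle=0$ for every $i$. Hence $\langle\mathbf{X}^k,F\mathbf{X}^k\rangle=0$ and $\|\mathbf{X}^{k+1}\|>\|\mathbf{X}^k\|$ for all $\tau>0$ unless $F\mathbf{X}^k=0$. The Rayleigh-quotient step tacitly needs $F$ to be positive semidefinite, and $\mathbf{1}^T F\mathbf{1}<0$ rules that out.

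So the missing piece is not a sharper estimate. You would first have to fix a notion of stability that can actually hold, since the paper's norm non-expansion fails on the unit sphere. You would then have to show that the step-size condition is what secures it. Your proposal names such a notion only in a form where the condition plays no essential role.
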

Here, $s_{ij}$ denotes the elements of the matrix $S(\mathbf{X}(t))$. To ensure numerical stability and prevent divergence, the step size of the explicit method must be subject to specific constraints. We formally establish the rigorous form of this criterion in Theorem \ref{thm:st}: specifically, the step size $\tau$ must be less than the reciprocal of the maximum row sum of the hypergraph edge weight matrix to guarantee the asymptotic stability of the algorithm. For the detailed mathematical proof, please refer to  the supplementary material.

\textbf{Computational complexity.} To evaluate the computational efficiency of RFHDN, we analyze the time complexity of its forward propagation. The overall time complexity is  $  O(n d_{\text{in}} d_{\text{out}}) + T \cdot O(m (r^2 d_{\text{in}} + d^2_{\text{in}}))$, where $n$ and $m$ denote the number of nodes and hyperedges, respectively. $d_{\text{in}}$ and $d_{\text{out}}$ represent the dimensions of the input and output features, $T$ indicates the number of layers, and $r$ represents the maximum hyperedge degree. The detailed analysis is provided in the  the supplementary material.
\begin{algorithm}[htp]
\caption{Ricci Flow-guided Hypergraph Neural Diffusion}
\label{alg:hyper_ode_evolution}
\small
\textbf{Input:} Hypergraph $\mathcal{H}=(\mathcal{V}, \mathcal{E})$ with incidence matrix $H$, edge index $E$ ,degree matrix $D_v$, features matrix $\mathbf{F} \in \mathbb{R}^{n \times d_{in}}$,
   step size $\tau$, depth $T$, hidden dimension $d$.
\begin{algorithmic}[1]    
\State $\mathbf{X}{(0)} \leftarrow \operatorname{Linear}(\mathbf{F})$
    \State $(\text{node\_idx}, \text{edge\_idx}) \leftarrow E$
    
    \For{$t = 0$ \textbf{to} $T-1$}
        \State $E_{\text{edge}} \leftarrow \operatorname{scatter}(\mathbf{X}{(t)}, \text{edge\_idx})$
        \State $E_{\text{node}} \leftarrow \operatorname{scatter}(\operatorname{MLP}_{\theta_1}(E_{\text{edge}}), \text{edge\_idx})$
        \State $E_{\text{edge}} \leftarrow \operatorname{scatter}(E_{\text{edge}}, \text{edge\_idx})$
        
        \State $K^{\prime}(t) \leftarrow \operatorname{diag}(\operatorname{MLP}_{\theta_2}(E_{\text{edge}}))$
         \State Compute  Matrix $S(\mathbf{X}(t))$ by Eq.~(\ref{19})
       
    \State$F(\mathbf{X}(t)) \leftarrow \operatorname{diag}\Big( \big( S(\mathbf{X}(t)) \odot C(\mathbf{X}(t)) \big) \mathbf{1}_{N} \Big) - S(\mathbf{X}(t))$
    \State  $\mathbf{X}(t+1) \leftarrow \mathbf{X}(t) - \tau F(\mathbf{X}(t))\mathbf{X}(t)$
    \EndFor
    
    \State $Y \leftarrow \operatorname{Linear}(\mathbf{X}{(T)})$
    \State Compute loss and back propagation.
\end{algorithmic}
\end{algorithm}

\begin{table*}[htbp]
  \centering
  \caption{Dataset Statistics Summary\label{tab:dataset_stats}}
  \setlength{\tabcolsep}{4pt} 
  \begin{tabular}{l *{12}{S[table-format=5.0]}} 
    \toprule
    {\textbf{Metric}} & {\textbf{Cora}} & {\textbf{Citeseer}} & {\textbf{Pubmed}} & {\textbf{Cora-CA}} & {\textbf{DBLP-CA}} & {\textbf{Zoo}}  & {\textbf{NTU2012}} & {\textbf{ModelNet40}} &{\textbf{Walmart}}&  {\textbf{Senate}} & {\textbf{House}}  \\
    \midrule
    $|V|$ & 2708 & 3312 & 19177 & 2708 & 41302 & 101 & 2012 & 12311 &88860& 282& 1290  \\
    $|E|$ & 1579 & 1079 & 7963 & 1072 & 22363 & 43 &  2012 & 12311 &69906& 315 & 340 \\
    \#features
    & 1433 & 3703 & 500 & 1433 & 1425 & 16 & 100  & 100 &100& 100 & 100  \\
    \#classes & 7 & 6 & 3 & 7 & 6 & 7  &  67 & 40 &11& 2 & 2\\
    \bottomrule
  \end{tabular}
  \vspace{0.2cm}
\end{table*}

\section{Experiment}
\subsection{Results on Benchmark Datasets}
\textbf{Datasets}. To validate the performance of RFHND, we conducted a comprehensive evaluation on a diverse collection of benchmark datasets representing both academic and real-world scenarios.

The academic scenarios evaluation was performed on five well-established hypergraph benchmarks from co-citation and co-authorship networks: Cora, Citeseer, Pubmed, Cora-CA, and DBLP-CA~\cite{yadati2019hypergcn}. For these datasets, node features are derived from bag-of-words representations, while labels correspond to the paper's subject category. To assess the method's generalizability across different domains, we also utilized several real-world datasets. These include  Zoo from the UCI repository~\cite{dua2017uci}, the 3D vision datasets ModelNet40~\cite{wu20153d} and NTU2012~\cite{chen2003visual}, and the transaction  datasets Walmart ~\cite{amburg2020clustering}, the social network datasets House~\cite{chodrow2021generative} and Senate~\cite{fowler2006connecting}.
Following prior works, hypergraph structures were constructed for all datasets, in cases where inherent node features were absent, we initialized them using Gaussian random vectors. For the evaluation, a consistent 50\%/25\%/25\% split was used for the training, validation, and testing sets. To ensure robust and reliable results, the final reported performance is the aggregated outcome of 20 independent trials using different random splits. The statistical summary of all datasets is available in Table \ref{tab:dataset_stats}.

\begin{table*}[htbp!]
\centering
\caption{Performance comparison on academic hypergraph datasets (Mean accuracy (\%) ± standard deviation), with the best results in bold, second-best and third-best are marked with an underline.}
\label{tab2: academic datasets}
\begin{tabular}{ccccccc}
\hline
 \textbf{Models}& \textbf{Cora} & \textbf{Citeseer} & \textbf{Pubmed}&\textbf{Cora-CA}&\textbf{DBLP-CA}&\textbf{Rank}$\downarrow$ \\
\hline
HNHN & 76.36$\pm$1.92 & 72.64$\pm$1.57 & 86.90$\pm$0.30 & 77.19$\pm$1.49 & 86.78$\pm$0.29& 12\\
HGNN & 79.39$\pm$1.36 & 72.45$\pm$1.16 & 86.44$\pm$0.44 & 82.64$\pm$1.65 & 91.03$\pm$0.20 &8 \\
 HCHA& 79.14$\pm$1.02 & 72.42$\pm$1.42 & 86.41$\pm$0.36 & 82.55$\pm$0.97 & 90.92$\pm$0.22 & 11\\
HyperGCN & 78.45$\pm$1.26 & 71.28$\pm$0.82 & 82.84$\pm$8.67 & 79.48$\pm$2.08 & 89.38$\pm$0.25&13 \\
UniGCNII& 78.81$\pm$1.05 & 73.05$\pm$2.21 & 88.25$\pm$0.40 & 83.60$\pm$1.14 & \underline{91.69$\pm$0.19} &6\\
AllSetTransformer& 78.59$\pm$1.47 & 73.08$\pm$1.20 & 88.72$\pm$0.37 & 83.63$\pm$1.47 & {\underline{91.53$\pm$0.23} } &5\\
AllDeepSets & 76.88$\pm$1.80 & 70.83$\pm$1.63 & {88.75$\pm$0.33 }& 81.97$\pm$1.50 & 91.27$\pm$0.27 &10 \\
HAN & 79.70$\pm$1.77 & 74.12$\pm$1.52 & 85.32$\pm$2.25 & 81.71$\pm$1.73 & 90.17$\pm$0.65  &9\\
ED-HNN & 80.31$\pm$1.35 & 73.70$\pm$1.38 & 
{\textbf{89.03$\pm$0.53}} & 83.97$\pm$1.55 & 
{\textbf{91.90$\pm$0.19}} &3\\
HyperGINE & 79.26$\pm$0.41 & 73.72$\pm$0.52 & 87.91$\pm$0.28 & 82.88$\pm$0.48 & - &11\\
KHGNN& \underline{80.67$\pm$0.76} & {\underline{74.80$\pm$1.10} }& 88.47$\pm$0.47 & {\underline{84.25$\pm$0.74}} &-&4\\
FrameHGNN & {\underline{81.51$\pm$0.99}} & {\underline{74.72$\pm$2.10}} &{\underline{88.73$\pm$0.42}} & {\underline{85.18$\pm$0.69}} &-&{2}\\
\hline
RFHND & 
{\textbf{81.83$\pm$1.48}} & {\textbf{76.04$\pm$1.43}} & {  \underline{88.88$\pm$0.35}}&
{\textbf{85.40$\pm$1.05}} & 91.20$\pm$0.41&{\textbf{1}}\\
\hline
\end{tabular}
\end{table*}

\begin{table*}[htbp!]
\centering
\caption{Performance comparison on real-world hypergraph datasets (Mean accuracy (\%) ± standard deviation), with the best results in bold, second-best and third-best are marked with an underline.}
\label{tab3: real-world datasets}
\begin{tabular}{cccccccc}
\hline
 \textbf{Models}&  \textbf{Zoo}&\textbf{ NTU2012 }& \textbf{ModelNet40}&\textbf{Walmart}
& \textbf{Senate}&\textbf{House}&\textbf{Rank}$\downarrow$ \\
\hline
HNHN & 93.59$\pm$5.88 & 89.11$\pm$1.44&97.84$\pm$1.25&{\underline{81.35$\pm$0.61}}& 50.93$\pm$6.33&67.80$\pm$2.59 &6\\
HGNN & 92.50$\pm$4.58 & 87.72$\pm$1.35 & 95.44$\pm$0.33 &80.33$\pm$0.42& 48.59$\pm$4.52&61.39$\pm$2.96&11\\
HCHA & 93.65$\pm$6.15 & 87.48$\pm$1.87& 94.48$\pm$0.28&80.33$\pm$0.80 & 48.62$\pm$4.41&61.36$\pm$2.53&10 \\
HyperGCN & N/A & 56.36$\pm$4.86 & 75.89$\pm$5.26&81.05$\pm$0.59 & 42.45$\pm$3.67&48.32$\pm$2.93&13\\
UniGCNII & 93.65$\pm$4.37 & 89.30$\pm$1.33 & 98.07$\pm$0.23&81.12$\pm$0.67 & 49.30$\pm$4.25&61.70$\pm$3.37&7\\
AllSetTransformer & {\underline{97.50$\pm$3.59}}& 88.69$\pm$1.24&98.20$\pm$0.20&{\underline{81.38$\pm$0.58}}& 51.83$\pm$5.22 &69.33$\pm$2.20&4 \\
AllDeepSets & {\underline{95.39$\pm$4.77}} & 88.09$\pm$1.52&96.98$\pm$0.26&81.06$\pm$0.54& 48.17$\pm$5.67&67.82$\pm$2.40&8 \\
HAN & 75.77$\pm$7.10 & 83.58$\pm$1.46 & 94.04$\pm$0.41& 79.72$\pm$0.62 & - & 62.00$\pm$9.09&12 \\
ED-HNN & -  & 88.67$\pm$0.92 & 97.83$\pm$0.33 & -& {\underline{64.79$\pm$5.14}}&{\underline{72.45$\pm$2.28}}&5 \\
HyperGINE & -  & 88.52$\pm$0.42 & 97.61$\pm$0.16 & -& - & - &9\\
KHGNN & -  & {\underline{89.60$\pm$1.64}} & {\underline{98.33$\pm$0.14}} & -& - & -&{3}\\
FrameHGNN  &- &{\underline{89.98$\pm$2.02}}& {\underline{98.41$\pm$0.18}} & - &{\underline{67.61$\pm$5.27} }& {\underline{72.82$\pm$2.22}} &{{2}}\\
\hline
RFHND & 
{\textbf{97.70$\pm$2.14}} & 
{\textbf{93.36$\pm$1.18}} & {\textbf{98.56$\pm$0.16}}& 
 {\textbf{82.40$\pm$1.40}} & {\textbf{68.12$\pm$4.98}} & {\textbf{73.52$\pm$2.25}}&{\textbf{1}} \\
\hline
\end{tabular}
\end{table*}

\textbf{Baselines.}
To evaluate the  performance of RFHND, we compared it against a comprehensive set of representative hypergraph neural network baselines. This  includes several  methods, such as HGNN~\cite{feng2019hypergraph}, which applies spectral convolution operations designed for hypergraph data. We also include HCHA~\cite{bai2021hypergraph}, a method that incorporates hierarchical attention mechanisms for representation learning, and the widely-used HyperGCN~\cite{yadati2019hypergcn}, which extends graph convolutions to the hypergraph domain through a clique expansion strategy.

Our comparison further incorporates HNHN~\cite{dong2020hnhn}, a model noted for its novel hypergraph-specific normalization techniques, alongside UniGCNII~\cite{huang2021unignn}, which unifies multiple hypergraph convolution paradigms with residual connections. The well-known HAN~\cite{wang2019heterogeneous} model is also included as a key representative of hierarchical attention networks. Additionally, we benchmark against AllSetTransformer and AllDeepSets~\cite{chien2021you}, which adapt deep set theory to ensure permutation invariance in hypergraph learning.

The evaluation is extended to more recent and specialized architectures. The baseline set features ED-HNN~\cite{wang2022equivariant}, a method that utilizes equivariant hypergraph diffusion operators. To account for long-range dependencies, we include HyperGINE and KHGNN~\cite{xie2025k}, the latter of which facilitates interactions between distant nodes via $K$-hop message passing. Finally, the multi-scale approach FrameHGNN~\cite{li2025deep} is included, which integrates framelet transforms into its architecture. To ensure a fair and standardized comparative environment, all models were implemented within the PyTorch Geometric library~\cite{fey2019fast}.

\textbf{Experiment Setting.} For the RFHND model, the Adam optimizer is employed. The main hyperparameters—learning rate, weight decay, dropout rate, hidden dimension, and total training epochs—are individually tuned for each dataset based on validation performance. Specifically, the learning rate is selected from $\left\{0.001, 0.01\right\}$, weight decay from the range $[0.001, 0.03]$, input dropout from $\left\{0.001, 0.01, 0.1, 0.2, 0.3\right\}$, hidden dimension from $\left\{16, 32, 64, 128, 256, 512\right\}$, and total training epochs from $\left\{2, 3, 4\right\}$. The optimal configuration for each dataset is determined within these ranges. When applicable, a cosine learning rate scheduler (CosineLR) is adopted. We use the methods implemented by torchdiffeq~\cite{chen2018neural} as the differential equation solver for RFHND. All experiments are conducted using a fixed random seed to ensure reproducibility. Source code is available at \href{https://gitee.com/zmy-ovo/rfhnd.git}{https://gitee.com/zmy-ovo/rfhnd.git}.

\textbf{Results.} 
As demonstrated in Table \ref{tab2: academic datasets} and Table \ref{tab3: real-world datasets}, the experimental results fully showcase the superior performance of our proposed RFHND model. By calculating the average ranking across all comparison methods, the RFHND model maintained the first position on both academic and real-world datasets, which strongly attests to its powerful generalization ability. Specifically, the RFHND model achieved State-of-the-Art performance in three out of the five academic benchmark datasets and secured an excellent second-place ranking on one of the others. This performance conclusively confirms the RFHND model's crucial capability to effectively capture complex relationships within structured data. Furthermore, the RFHND model comprehensively surpassed all comparison methods on the six real-world datasets, which span diverse application domains such as transaction networks and social network analysis. These consistent results across different dataset types collectively validate the outstanding effectiveness of the RFHND model for the hypergraph node classification task.
\begin{table*}[htbp]
\centering
\caption{Model Performance On Synthetic Hypergraphs with Controlled Heterophily $\alpha$.}
\label{tab4: SY datasets}
\begin{tabular}{cccccccc}
\toprule
\textbf{Model}& \bm{$\alpha = 1$} & \bm{$\alpha = 2$} & \bm{$\alpha = 3$} &\bm{$\alpha = 4$}& \bm{$\alpha = 5$} &\bm {$\alpha = 6$} &\bm {$\alpha = 7$} \\
\midrule
HGNN & 92.58 $\pm$ 0.57 & 87.40 $\pm$ 0.89 & 82.32 $\pm$ 0.84& 77.18 $\pm$ 1.08 & 70.06 $\pm$ 1.35 & 69.98 $\pm$ 0.85 & 68.41 $\pm$ 1.39 \\
HyperGCN  & 83.40 $\pm$ 1.67 & 78.13 $\pm$ 1.57 & 77.70 $\pm$ 1.01& 74.90 $\pm$ 1.31 & 50.15 $\pm$ 1.94 & 52.11 $\pm$ 0.75 & 49.40 $\pm$ 1.63 \\
UniGCN    & 90.42 $\pm$ 1.04 & 82.12 $\pm$ 0.67 & 80.04 $\pm$ 0.98& 76.86 $\pm$ 1.10 & 71.50 $\pm$ 1.50 & 52.30 $\pm$ 1.18 & 49.20 $\pm$ 0.74 \\
EDGNN     & 93.88 $\pm$ 0.67 & 89.89 $\pm$ 0.59 & 83.52 $\pm$ 0.85& 76.65 $\pm$ 1.21 & 72.18 $\pm$ 1.36 & 52.98 $\pm$ 0.78 & 49.65 $\pm$ 1.16 \\
RFHND   & \textbf{95.63 $\pm$ 0.62 } &\textbf{90.17 $\pm$ 0.67} & \textbf{84.21 $\pm$ 0.81}&\textbf{77.48 $\pm$ 0.98} &\textbf{74.26 $\pm$ 1.57} &\textbf{73.51 $\pm$ 1.07 }&\textbf{73.46 $\pm$ 1.35 }\\
\bottomrule
\end{tabular}
\end{table*}
\begin{figure*}[htb]
\centering  
\subfigure[Cora]
{
\includegraphics[width=5cm,height =4cm]{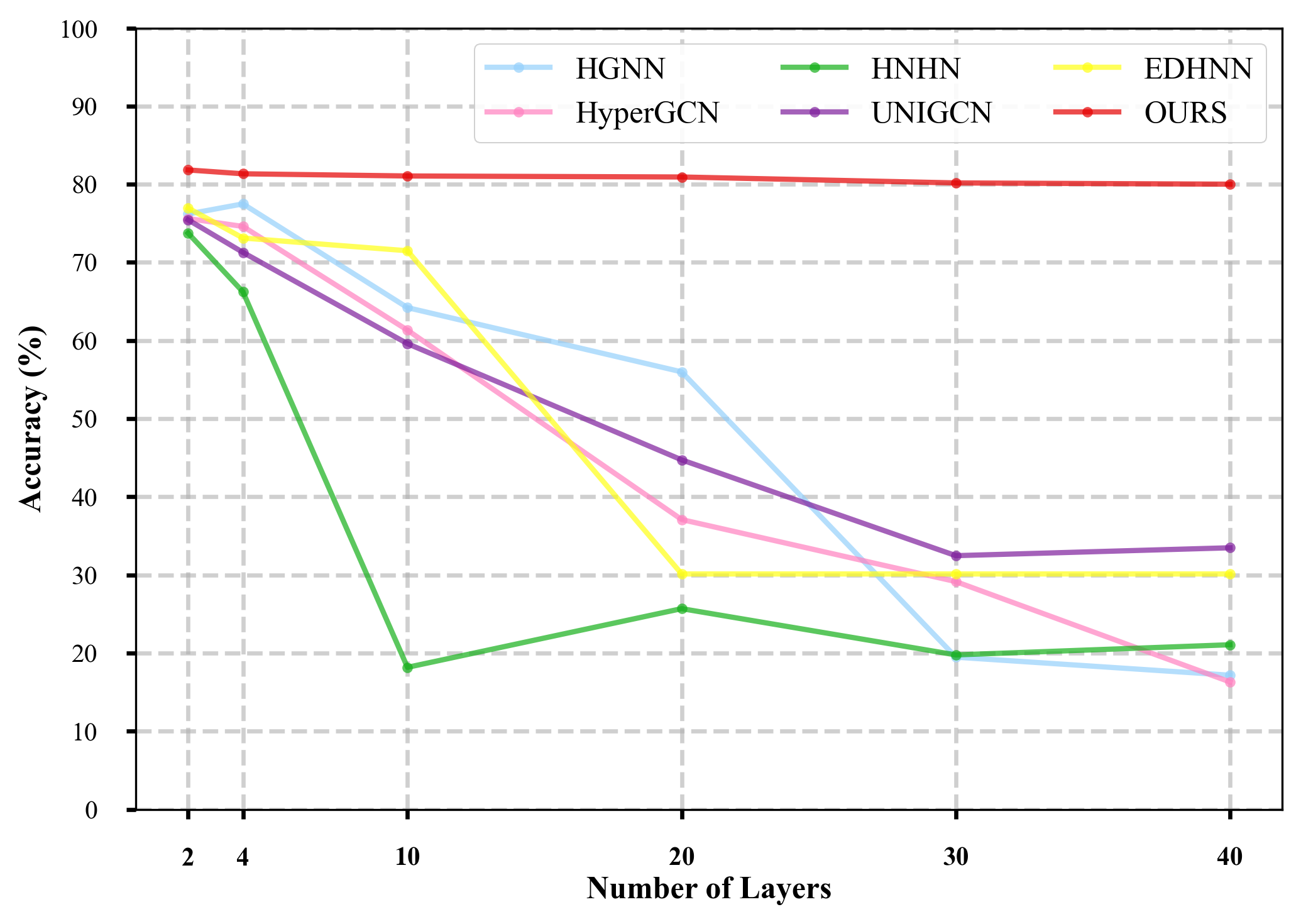}}
\subfigure[Cora-CA]
{
\includegraphics[width=5cm,height =4cm]{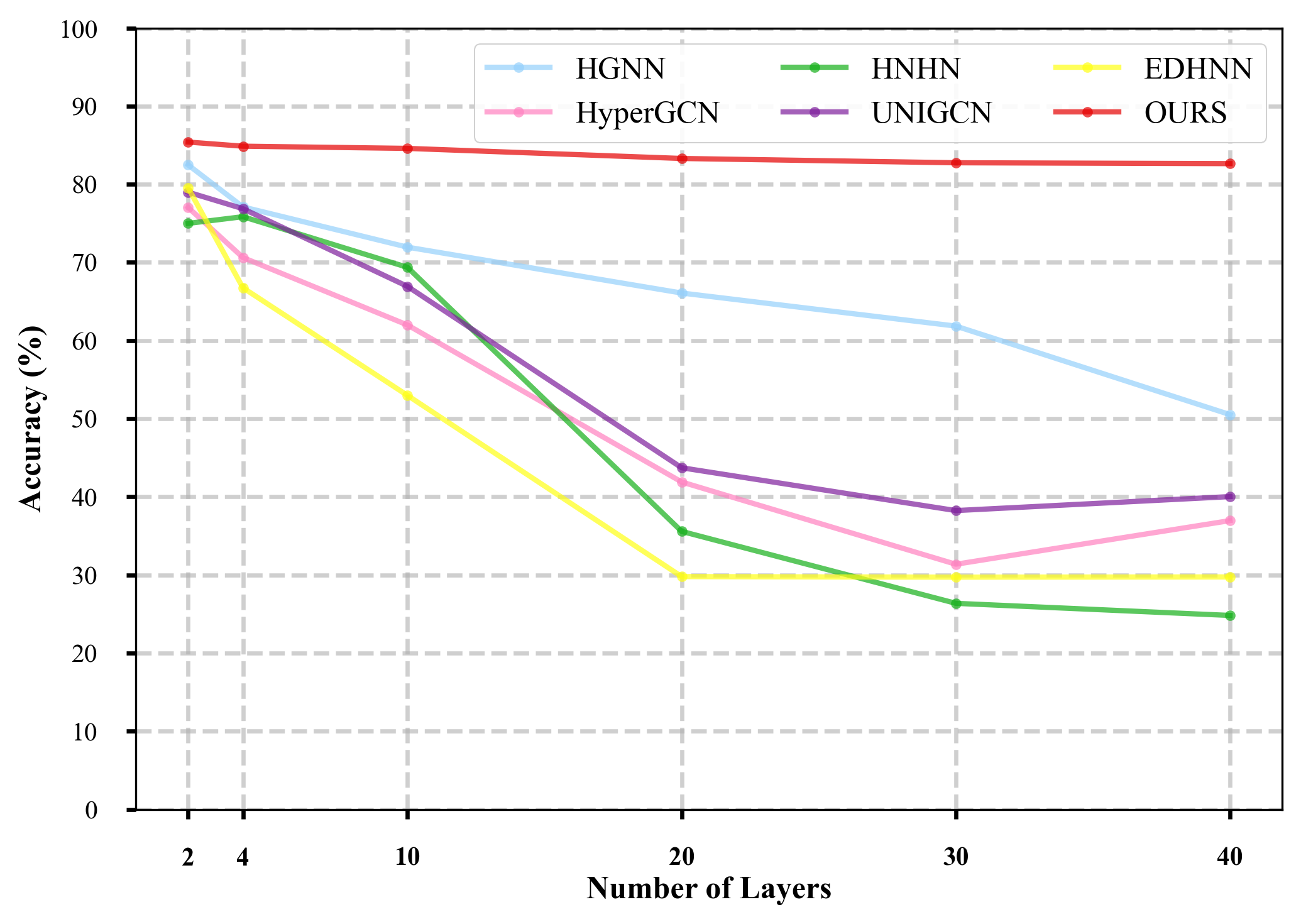}}
\subfigure[Citeseer]
{
\includegraphics[width=5cm,height =4cm]{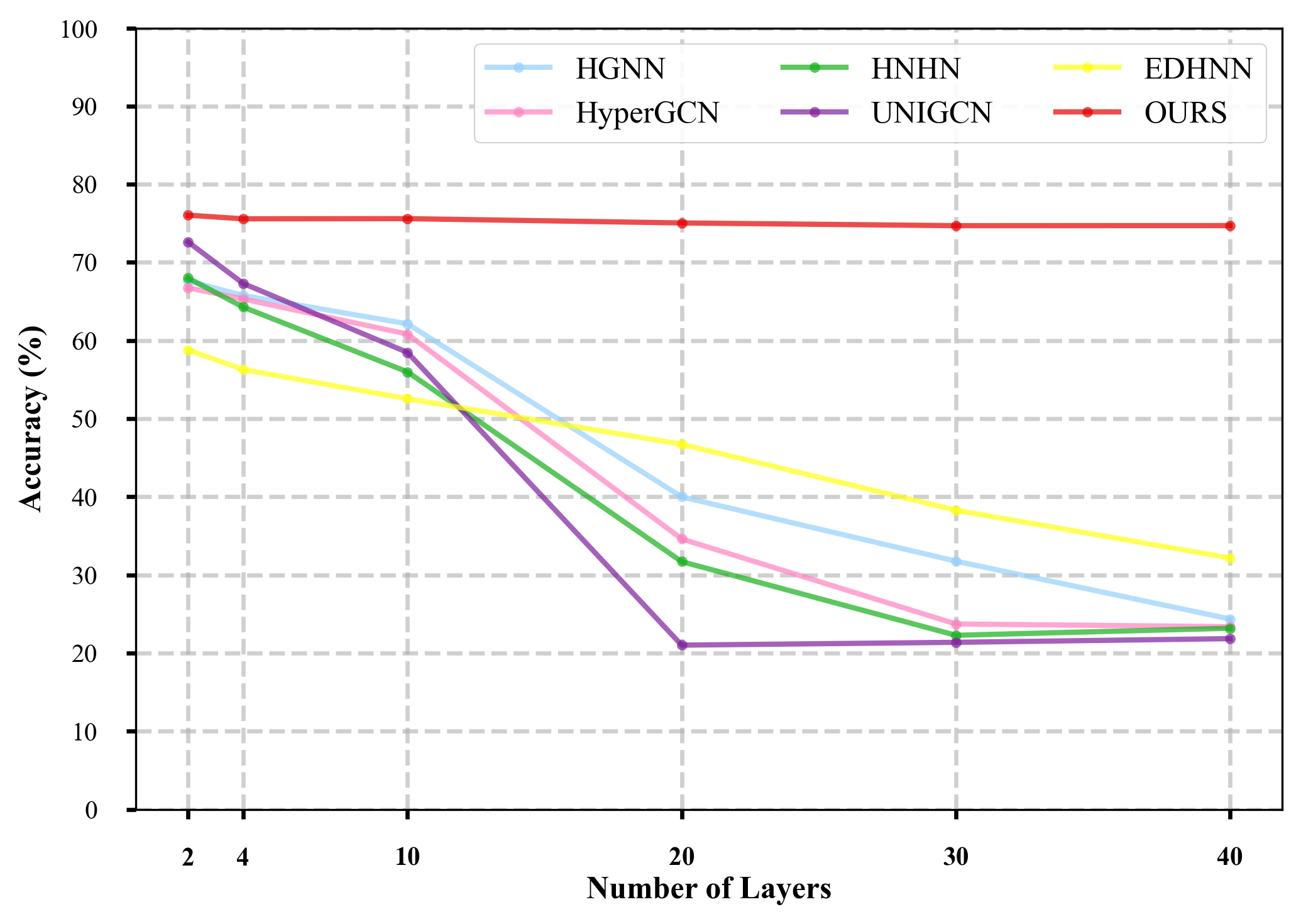}}
\subfigure[Cora]
{
\includegraphics[width=5cm,height =4cm]{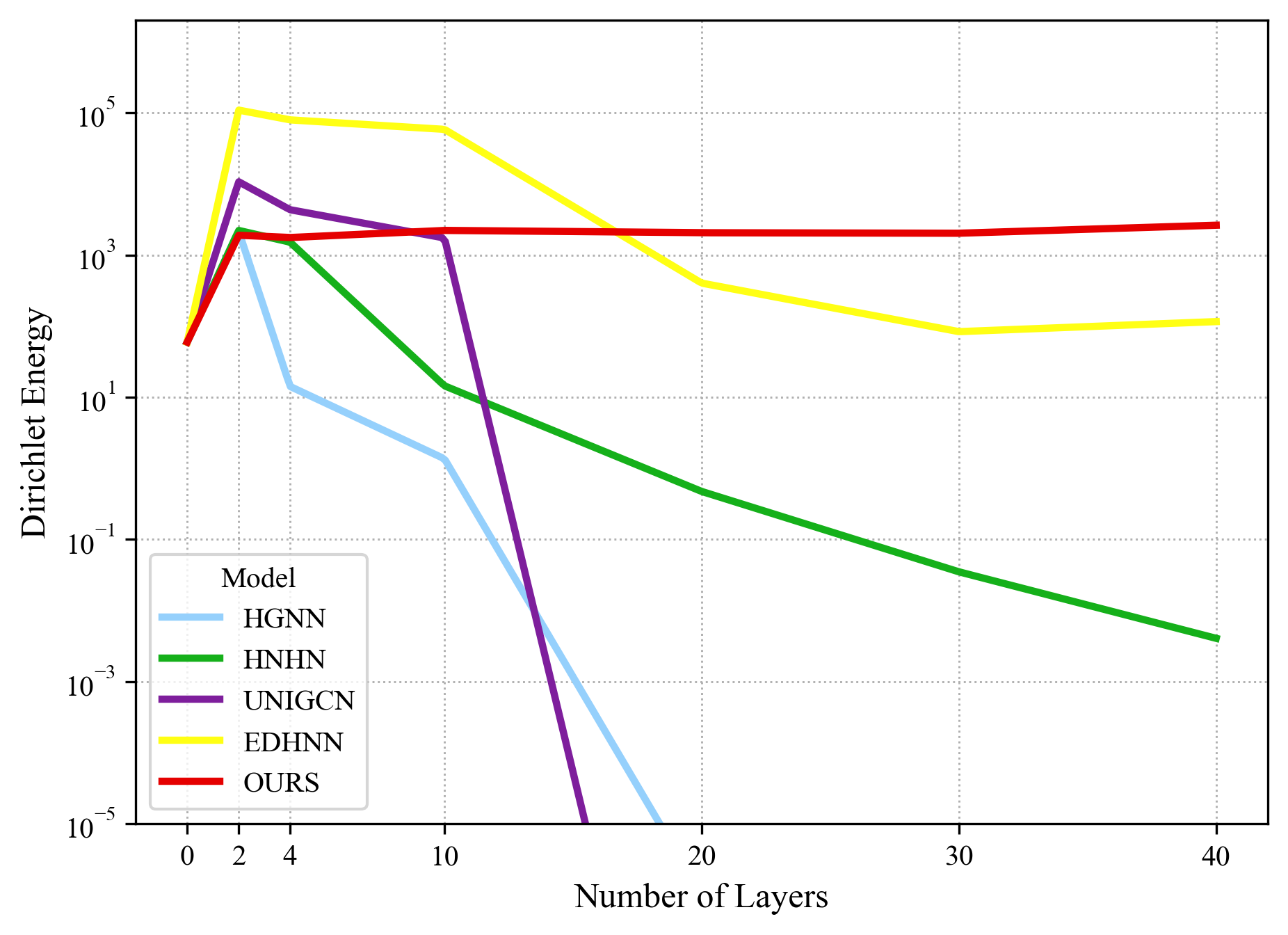}}
\subfigure[Cora-CA]
{
\includegraphics[width=5cm,height =4cm]{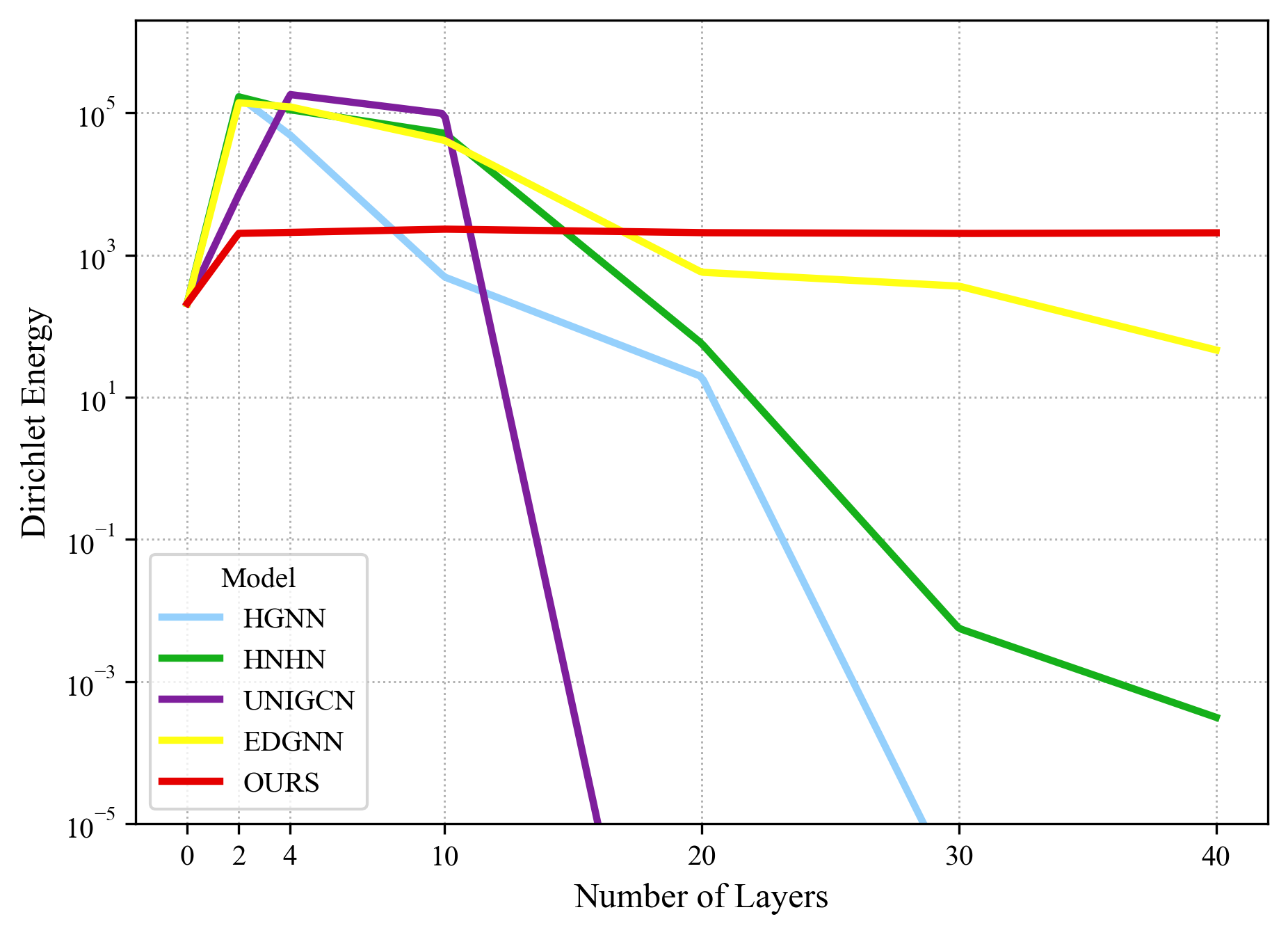}}
\subfigure[Citeseer]
{
\includegraphics[width=5cm,height =4cm]{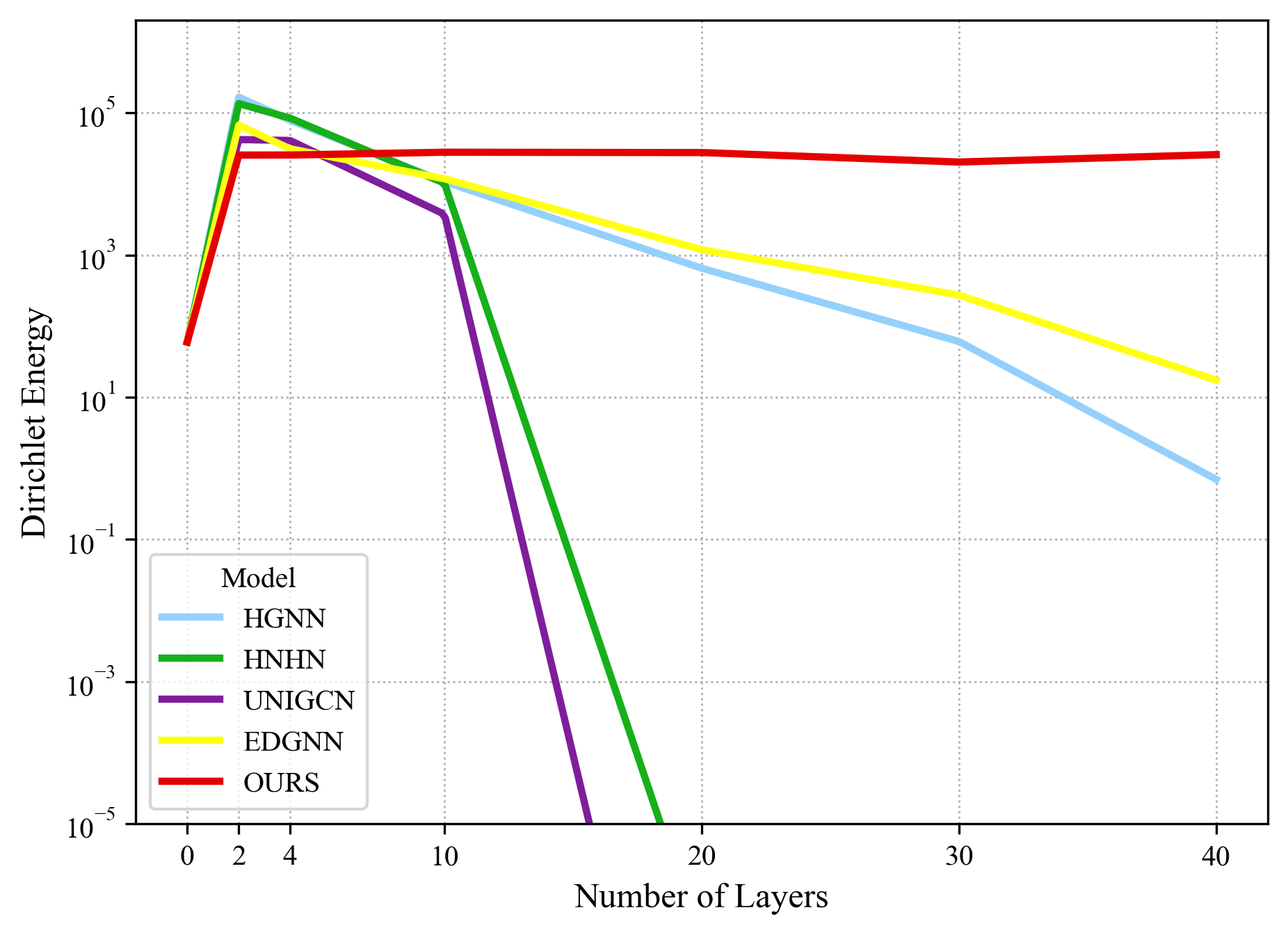}}
\caption{(a) shows the accuracy(\%)  with various
 depths on Cora. (b) shows the accuracy(\%)  with various
 depths on Cora-CA. (c) shows the accuracy(\%)  with various
 depths on Citeseer. (d) shows the dirichlet energy with various depths on Cora. (e) shows the dirichlet energy with various depths on Cora-CA. (f) shows the dirichlet energy with various depths on Citeseer.}
\label{fig:oversmoothing}
\end{figure*}

\subsection{Results on Synthetic Heterophilic Hypergraph Dataset}
\textbf{Experiment Setting.}
To assess the capability of RFHND in handling datasets exhibiting varying degrees of structural heterogeneity, we conduct experiments on synthetic datasets with controllable heterophily levels. These datasets are generated following the contextual hypergraph stochastic block model~\cite{deshpande2018contextual, ghoshdastidar2014consistency, chien2018community}. In particular, we create two node classes, each containing 2,500 nodes, and randomly sample 1,000 hyperedges. Each hyperedge is composed of 15 nodes, among which $\alpha_i$ nodes are drawn from class $i$. The heterophily level is quantified as $\alpha = \min\{\alpha_1, \alpha_2\}$.

The node features are sampled from Gaussian distributions conditioned on node labels, with a standard deviation of 1.0. We examine both homophilic configurations ($\alpha = 1, 2,3$ ) and heterophilic configurations ($\alpha = 4 \sim 7$ ). The proposed RFHND framework is benchmarked against several baselines, including HGNN, HyperGCN, and ED-HNN. Consistent with prior studies, we employ a 50\%/25\%/25\% partitioning of the data for training, validation, and testing. To ensure statistical reliability, all experiments are repeated over 10 random splits, and the final results are reported as averaged outcomes.

\textbf{Results.}
The results presented in Table \ref{tab4: SY datasets} indicate that the RFHND model consistently achieves superior performance compared to all baseline methods across the evaluated datasets. Its advantage is particularly evident in heterophilic settings (when $\alpha$ \textgreater 3), where the model demonstrates markedly improved robustness and generalization ability. These findings substantiate the effectiveness of the proposed architecture.

\subsection{Ablation Study}
In this section, we perform ablation experiments  on multiple datasets to assess the contribution of each submodule within our model. The detailed findings are summarized as follows.
\begin{table}[h!]
\centering 
\caption{Ablation Study.}
\label{tab:ab}
\begin{tabular}{lccc} 
\toprule 
\textbf{Variants} & \textbf{Zoo} & \textbf{NTU2012} & \textbf{ModelNet40} \\
\midrule 
w/o COS & 97.16 $\pm$ 2.90 & 90.04 $\pm$ 1.09 & 96.78 $\pm$ 0.27 \\
w/o HyperNet & 96.89 $\pm$ 2.87 & 91.52 $\pm$ 1.12 & 97.42 $\pm$ 0.23 \\
w/o C and H & 96.43 $\pm$ 2.90 & 89.79 $\pm$ 1.30 & 96.86 $\pm$ 0.27 \\
\midrule 
\textbf{RFHND (Full Model)} & \textbf{97.70 $\pm$ 2.14} & \textbf{93.36 $\pm$ 1.18} & \textbf{98.56 $\pm$ 0.16} \\ 
\bottomrule 
\end{tabular}
\end{table}
 
\begin{itemize} \item[$\bullet$]
\textbf{w/o COS:} In this variant of RFHND, the cosine coefficient is removed from the node feature evolution process, so feature updates are performed without cosine-based modulation.

\item[$\bullet$]
\textbf{w/o HyperNet:} In this variant of RFHND, the HyperNet used to model edge curvature is replaced with a randomly generated numerical value, instead of learning curvature adaptively.
\item[$\bullet$]
\textbf{w/o C and H:} This RFHND variant excludes both the cosine coefficient in the node feature evolution process and the HyperNet used for edge curvature modeling.
\end{itemize}

As shown in Table \ref{tab:ab}, all variants  exhibit degraded performance in label prediction, underscoring the importance of these in enhancing the modeling of interaction types. Specifically, the joint removal of the $\text{COS}$ and $\text{HyperNet}$ modules (w/o $\text{C}$ and $\text{H}$) leads to a substantially more pronounced performance drop compared to the singular ablations, underscoring the critical importance of coordination between these two modules. 

\begin{figure*}[htb]
\centering  
\subfigure[Gaussian Noise]
{
\includegraphics[width=7cm,height =5cm]{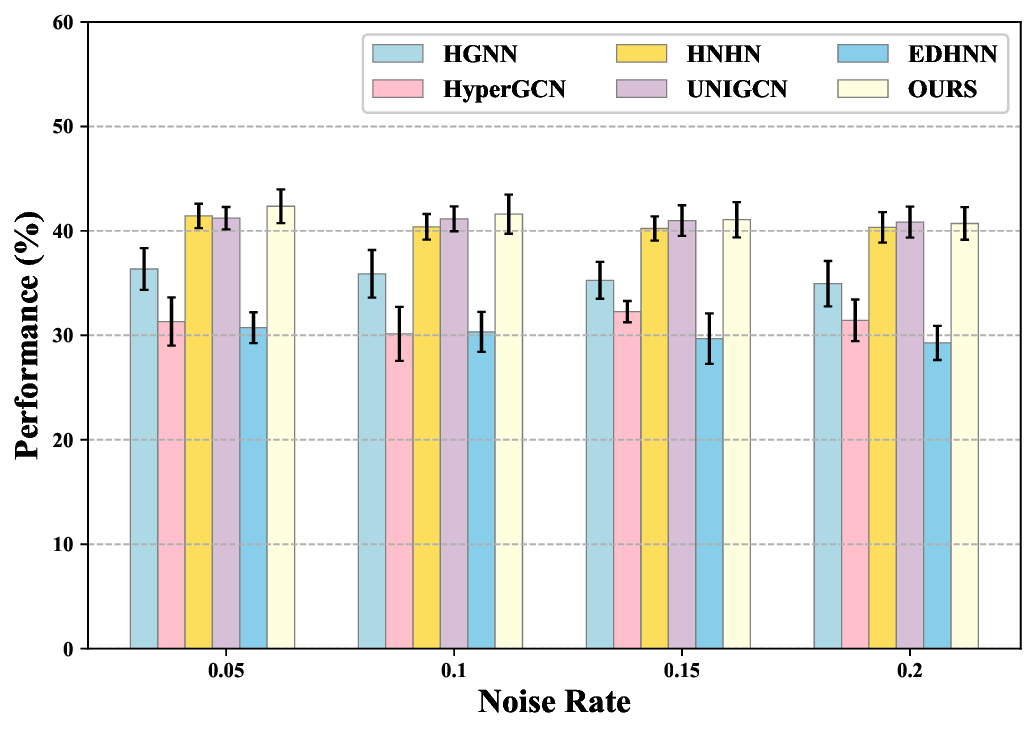}}
\subfigure[Uniform Noise]
{
\includegraphics[width=7cm,height =5cm]{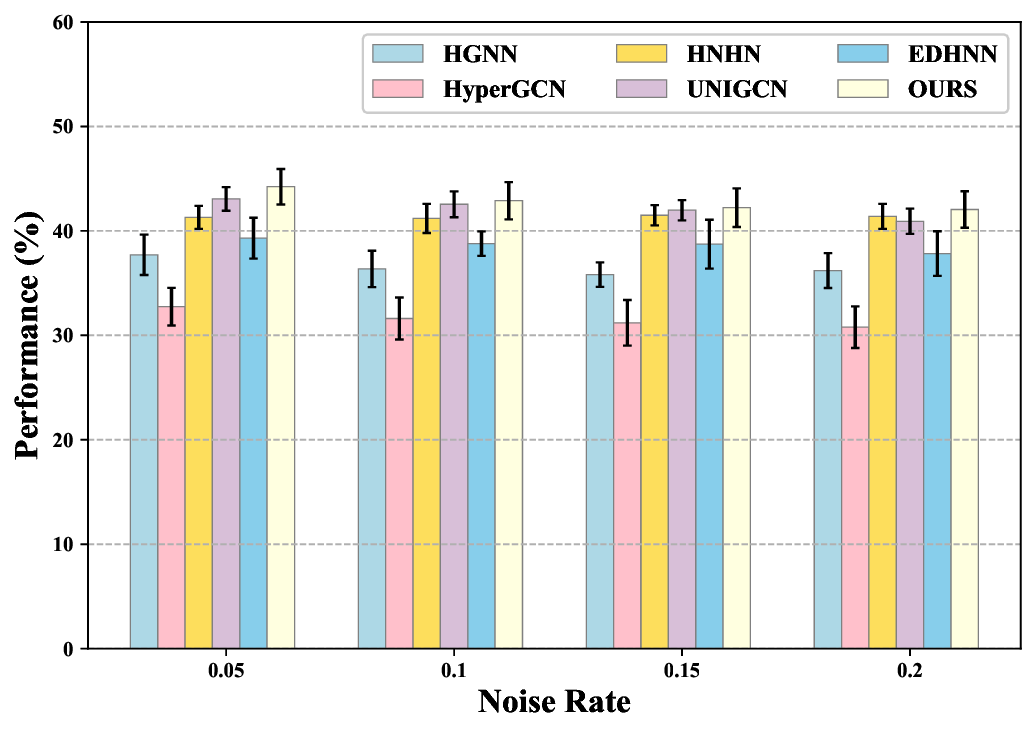}}
\subfigure[Mask Noise]
{
\includegraphics[width=7cm,height =5cm]{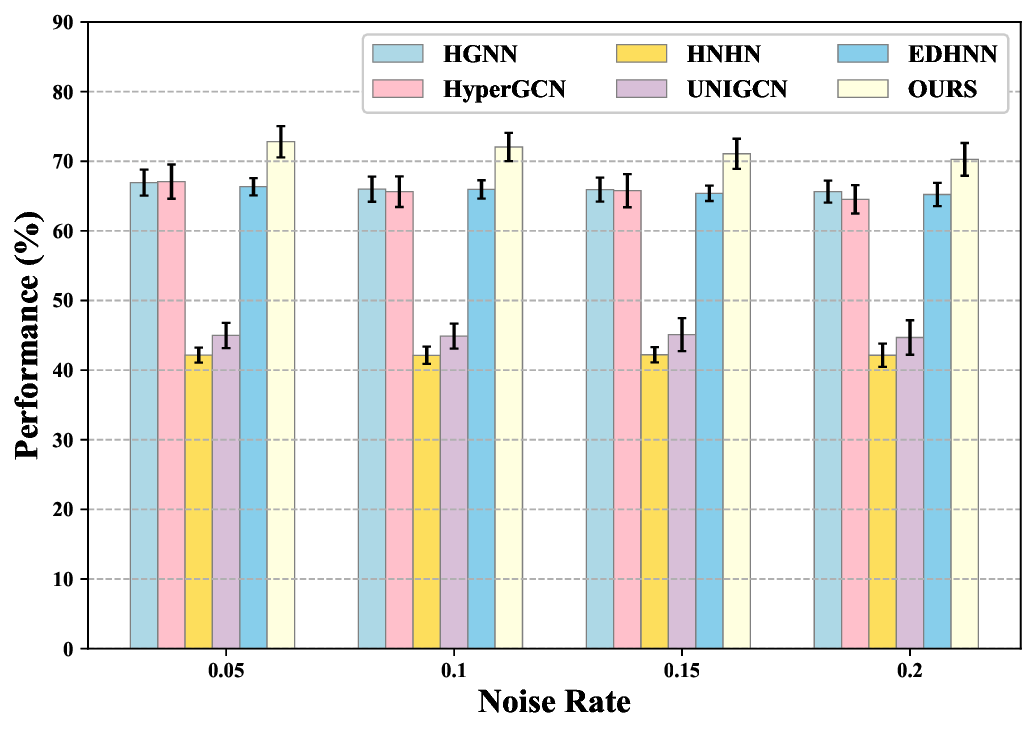}}
\subfigure[Structure Noise]
{
\includegraphics[width=7cm,height =5cm]{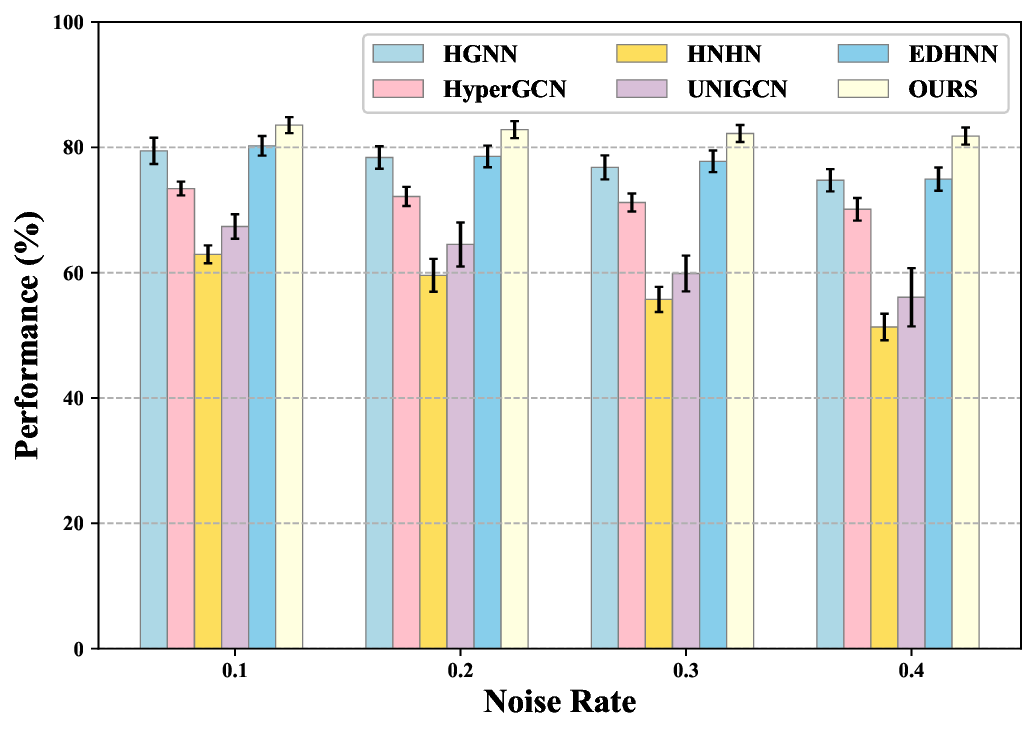}}
\caption{(a) shows the performance on Citeseer under feature  gaussian noise. (b) shows the performance on Citeseer under feature  uniform noise. (c) shows the performance on Citeseer under feature mask noise. (d) shows the performance on Cora-CA under structure noise.}
\label{noise}
\end{figure*}

\subsection{Over-smoothing analysis}
This section examines how model depth affects the performance of hypergraph neural networks. Most existing HGNNs are inherently shallow, which limits their ability to capture information from higher-order neighbors~\cite{chen2022preventing}. However, simply increasing the depth often leads to oversmoothing, where node representations become increasingly similar and eventually indistinguishable.

To study this issue, we evaluated several models with different depths on Cora, Cora-CA and Citeseer datasets. Using a fixed data split, we tested layer settings of 2, 4, 10, 20, 30, and 40 to observe how performance changes as the network becomes deeper. Meanwhile, we also recorded the Dirichlet Energy  across different layer configurations. Specifically, for zero layers, the energy is calculated using the initial node features, while for deeper configurations, it is calculated from the node embeddings output by the trained network.

As systematically illustrated in Fig.~\ref{fig:oversmoothing}, the results yield two key observations:
\begin{itemize}
 \item The proposed RFHND model consistently delivers superior performance across all tested layer depths. Crucially, its performance exhibits remarkable stability  as the number of layers is progressively increased.
 \item RFHND shows a stable Dirichlet Energy trajectory on above three datasets, while comparison methods experience a rapid energy decline as depth increases. This is consistent with our theoretical analysis in Section \ref{th:de}.
\end{itemize}

\begin{figure*}[htb]
\centering  
\subfigure[]
{
\includegraphics[width=7cm,height =5cm]{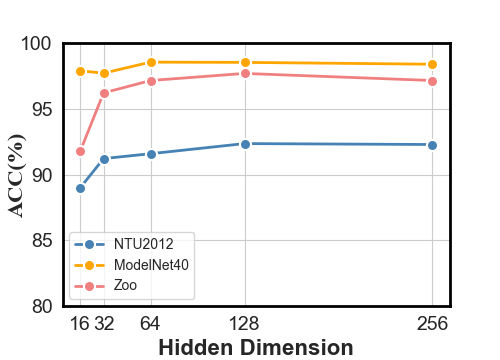}}
\subfigure[]
{
\includegraphics[width=7cm,height =5cm]{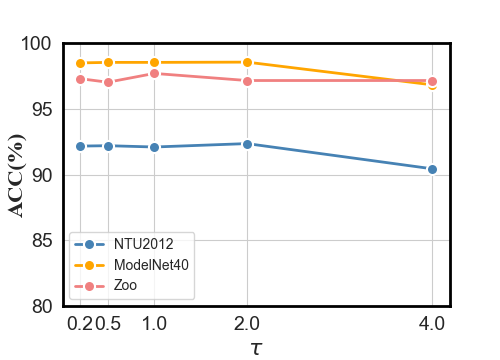}}
\caption{(a) shows RFHND accuracy on  NTU2012 and ModelNet40 with various hidden dimensions.  (b) shows RFHND accuracy on NTU2012 and ModelNet40 with various $\tau$. }
\label{para}
\end{figure*}
\begin{figure*}[htp]
\centering  
\subfigure[Pubmed(t=0)]
{
\includegraphics[width=4.5cm,height =3.6cm]
{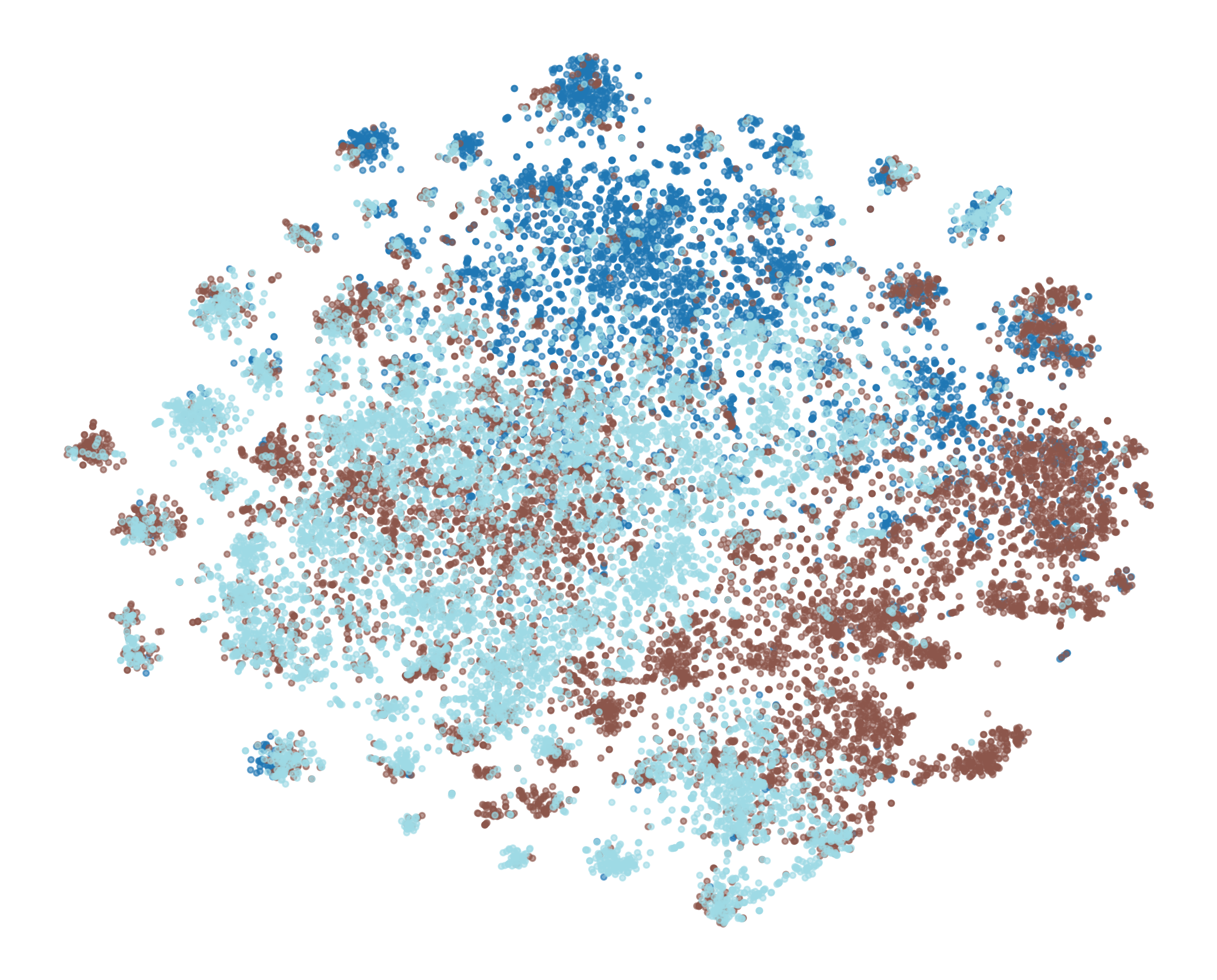}}
\subfigure[Pubmed(t=1)]
{
\includegraphics[width=4.5cm,height =3.6cm]
{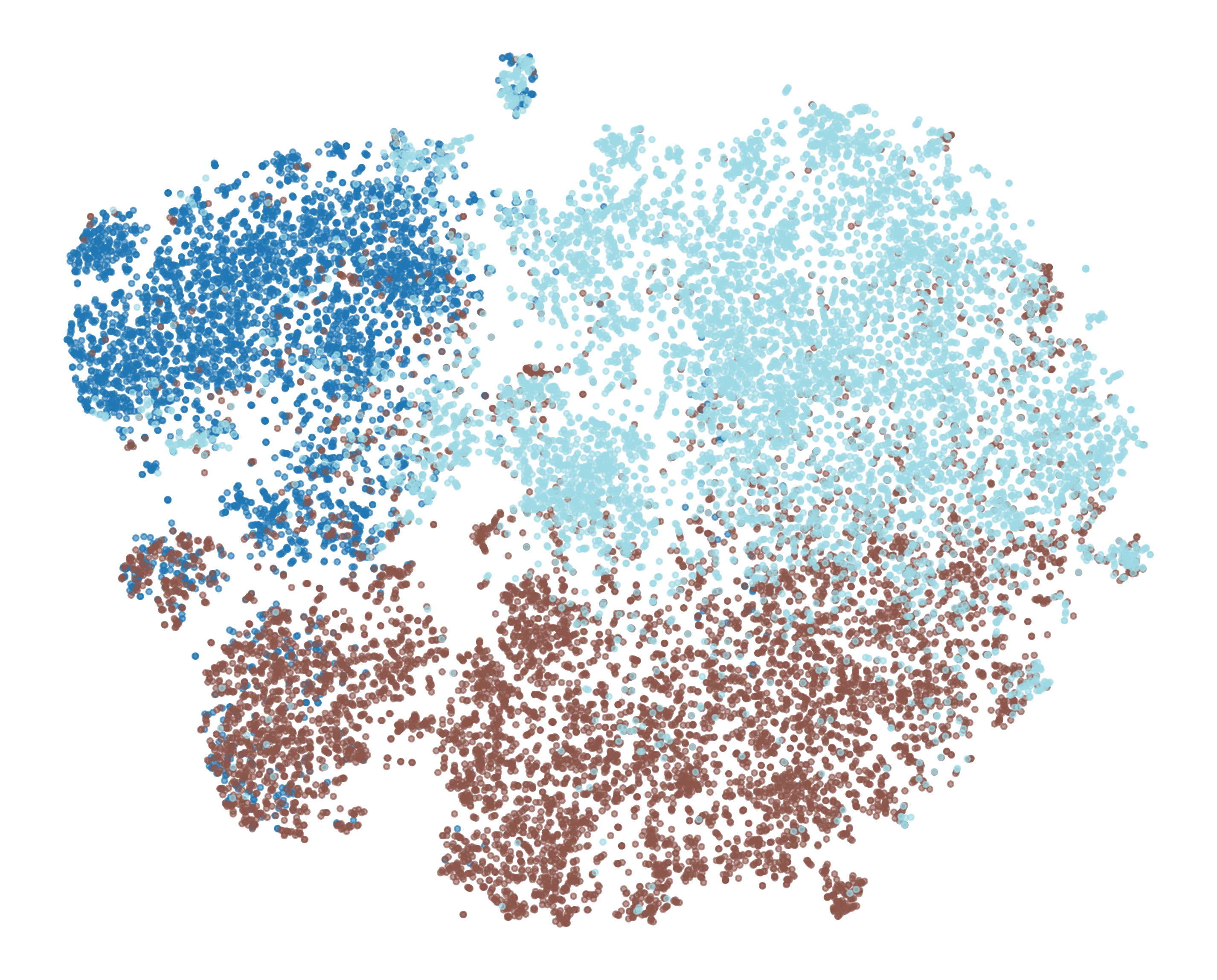}}
\subfigure[Pubmed(t=2)]
{
\includegraphics[width=4.5cm,height =3.6cm]
{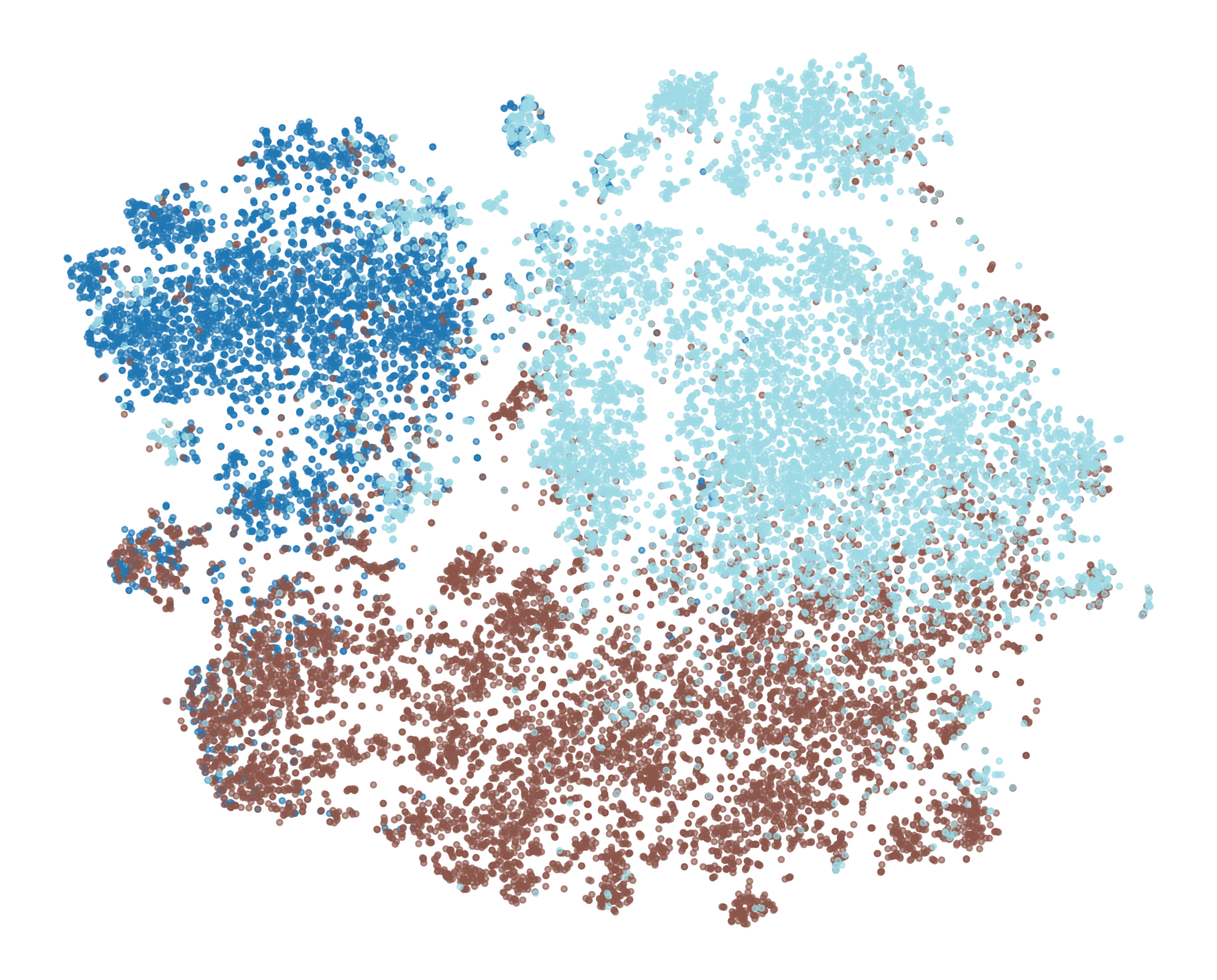}}
\subfigure[DBLP(t=0)]
{
\includegraphics[width=4.5cm,height =3.6cm]
{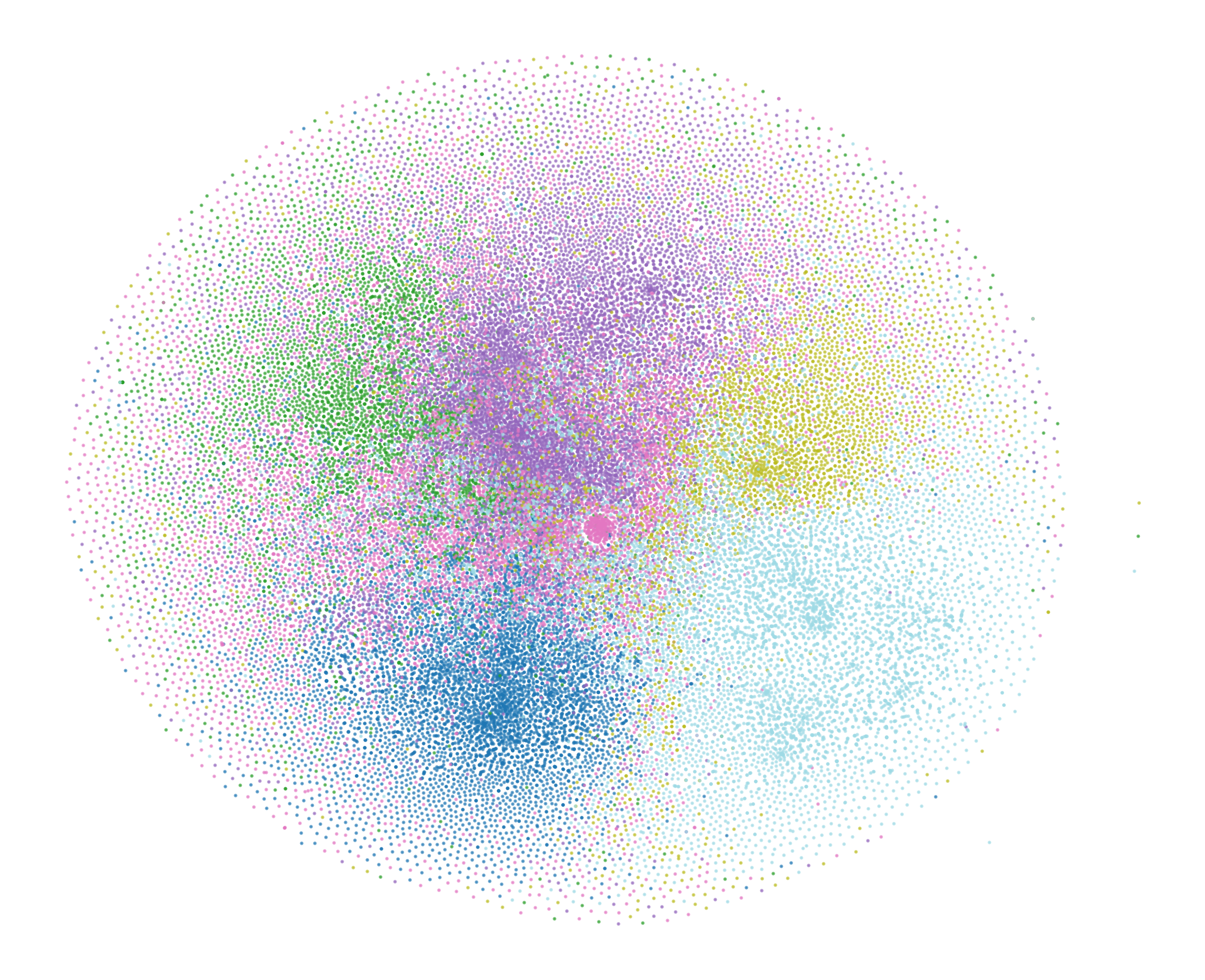}}
\subfigure[DBLP(t=1)]
{
\includegraphics[width=4.5cm,height =3.6cm]
{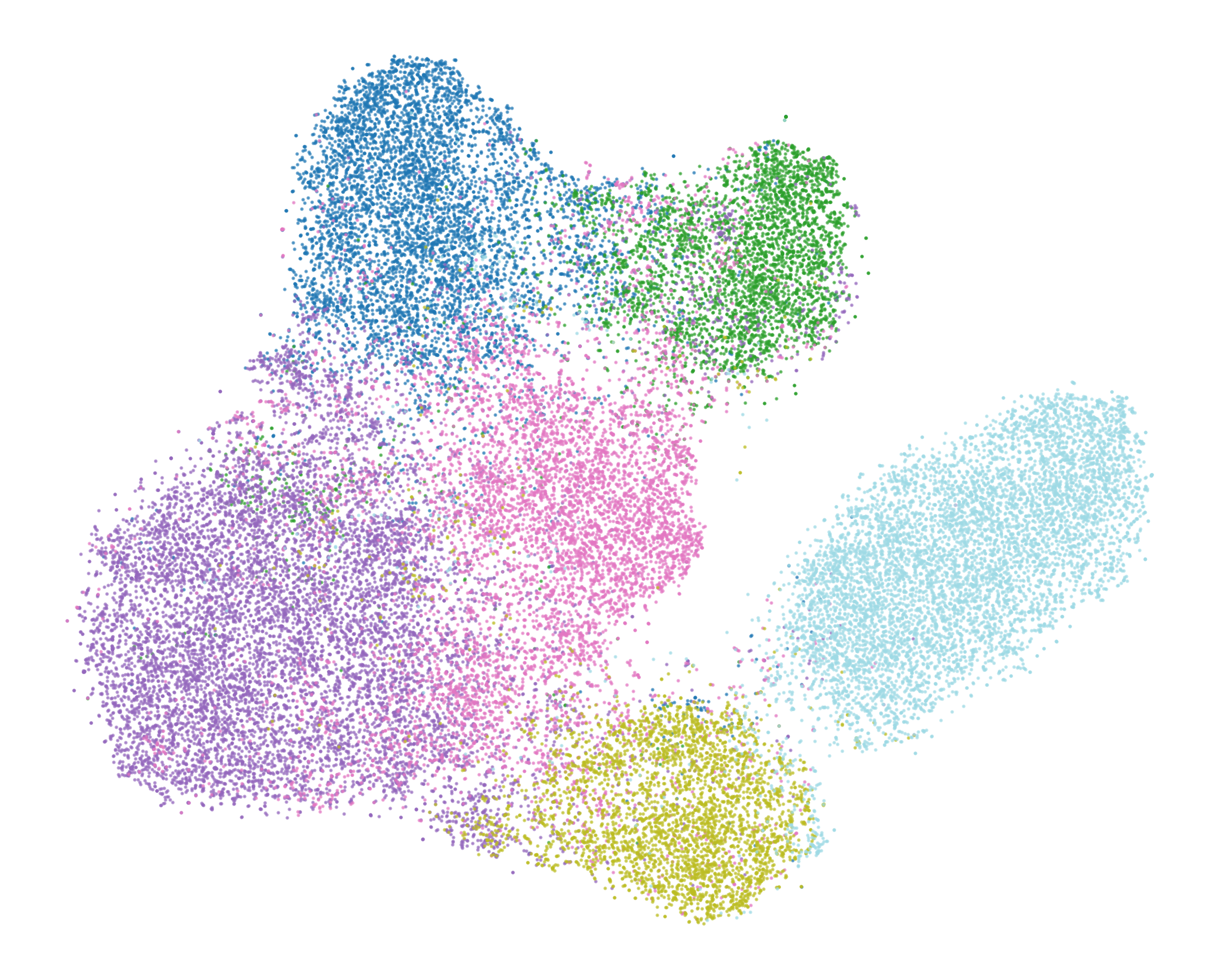}}
\subfigure[DBLP(t=2)]
{
\includegraphics[width=4.5cm,height =3.6cm]
{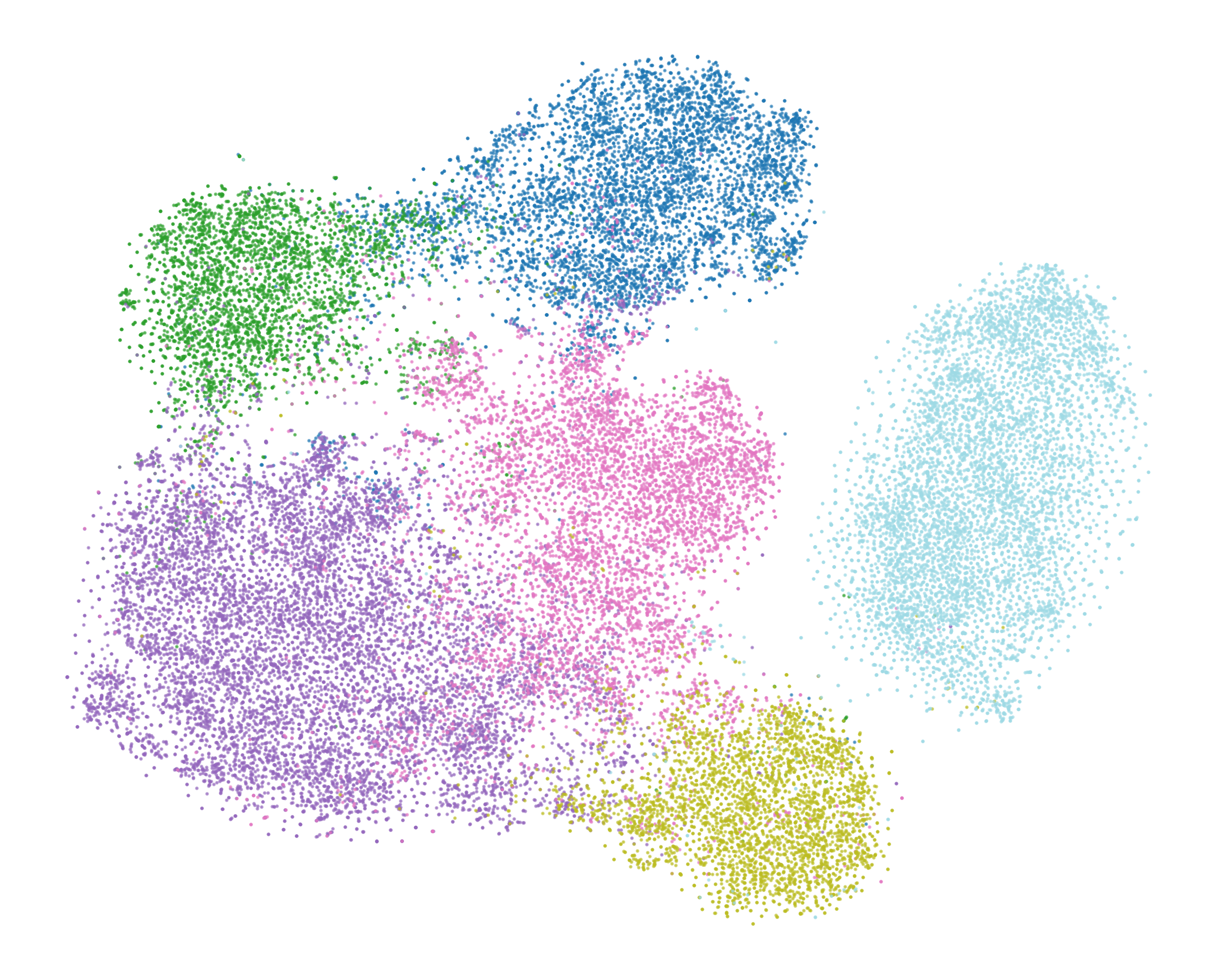}}
\caption{(a)(b)(c) show the visualization of Pubmed node features at $t = 0$, $t = 1$ and $t = 2$. (d)(e)(f) show the visualization of DBLP node features at $t = 0$, $t = 1$ and $t = 2$. }
\label{vis}
\end{figure*}
\subsection{Robustness analysis}
To assess the robustness of RFHND under noisy input conditions, extensive experiments are conducted considering both feature-level and structure-level perturbations. The experimental configurations are as follows:
\begin{itemize}
\item \textbf{Feature Noise:}
Node features in the Citeseer dataset are perturbed using three types of noise:
(a) Gaussian noise sampled from $N(0,\sigma^2)$ and scaled according to the noise rate;
(b) Uniform noise drawn from $Unif(-\delta,\delta)$, where $\delta$ is proportional to the noise rate;
(c) Mask noise, in which a portion of feature values is randomly set to zero based on the noise rate.
\item \textbf{Structure Noise:}
For the Cora-CA dataset, the hypergraph structure is disturbed by randomly deleting existing hyperedges and introducing  hyperedges composed of random node subsets, with the overall perturbation controlled by the noise rate.
\end{itemize}

As illustrated in Fig. \ref{noise}, the proposed approach consistently surpasses baseline methods under all noise settings. When exposed to feature-level perturbations, RFHND demonstrates strong robustness on the Citeseer dataset, maintaining stable performance across different noise types, including Gaussian, uniform, and mask-based disturbances. Notably, all examined models show higher tolerance to mask noise compared with other feature perturbations, possibly because mask noise introduces structured sparsity rather than random distortion, allowing models to better adapt during training.

The analysis of structural noise demonstrates comparable benefits on the Cora-CA dataset. Across all examined noise levels (0.1–0.4), the proposed method consistently delivers better performance. While the accuracy of all models decreases as the noise rate grows, RFHND shows only a marginal decline, reflecting its strong robustness against structural disturbances. 

\subsection{Parameter analysis}
To assess the sensitivity of RFHND to key hyperparameters, an extensive  analysis was performed focusing on two crucial factors: the hidden layer dimension and the step size $\tau$. In particular, the model’s performance was evaluated under different configurations of hidden layer dimensions $\left\{16, 32, 64, 128, 256\right\}$ and step sizes $\left\{0.2, 0.5, 1, 2, 4\right\}$.

Specifically, we selected three representative datasets: Zoo, NTU2012 and ModelNet40.  As illustrated in Fig. \ref{para}, RFHND remains relatively stable under different hidden layers. However, when the step size becomes excessively large, the model performance deteriorates, indicating that overly large steps may cause unstable feature propagation and weaken the model’s capacity to capture fine-grained relational patterns. This is consistent with the conclusion of Theorem \ref{thm:st}.

\subsection{Feature visualization}
To gain deeper insights into the dynamic changes of node representations, a feature visualization analysis was conducted on the Citeseer and Pubmed datasets. In this analysis, node features were captured at multiple time points during the ODE integration process, specifically at $t = 0$, $t = 1$, and $t = 2$. The obtained high-dimensional representations were then mapped onto a two-dimensional space using a dimensionality reduction method for visualization purposes. In the resulting visualizations, nodes are color-coded according to their corresponding class labels.

As illustrated in Fig. \ref{vis}, it can be seen that the node embeddings gradually form clearer and more compact clusters as time advances. This phenomenon suggests that the model progressively enhances its ability to distinguish nodes from different classes, with the representations evolving into more separable and discriminative structures throughout the integration process.

\section{Conclusion}
In this paper, we mitigated the critical challenge of over-smoothing in hypergraph neural networks by generalizing Ricci flow from differential geometry to the hypergraph domain. Specifically, we proposed RFHND (Ricci Flow-guided Hypergraph Neural Diffusion), a novel message passing paradigm that leverages discrete Ricci flow to adaptively regulate information propagation. By formulating node feature evolution as a continuous PDE-based system, our method effectively controls the diffusion rate at the geometric level, thereby preventing feature homogenization. Furthermore, we provided a rigorous theoretical analysis validating the convergence properties and approximation capabilities of our framework. Extensive experiments across multiple benchmark datasets demonstrate that RFHND significantly outperforms state-of-the-art methods and exhibits strong robustness. Ultimately, this work offers a promising geometric perspective for addressing over-smoothing, enriching the theory  for high-order representation learning.

\bibliographystyle{IEEEtran}
\bibliography{IEEEabrv,refer}

\begin{thebibliography}{10}
\providecommand{\url}[1]{#1}
\csname url@samestyle\endcsname
\providecommand{\newblock}{\relax}
\providecommand{\bibinfo}[2]{#2}
\providecommand{\BIBentrySTDinterwordspacing}{\spaceskip=0pt\relax}
\providecommand{\BIBentryALTinterwordstretchfactor}{4}
\providecommand{\BIBentryALTinterwordspacing}{\spaceskip=\fontdimen2\font plus
\BIBentryALTinterwordstretchfactor\fontdimen3\font minus \fontdimen4\font\relax}
\providecommand{\BIBforeignlanguage}[2]{{%
\expandafter\ifx\csname l@#1\endcsname\relax
\typeout{** WARNING: IEEEtran.bst: No hyphenation pattern has been}%
\typeout{** loaded for the language `#1'. Using the pattern for}%
\typeout{** the default language instead.}%
\else
\language=\csname l@#1\endcsname
\fi
#2}}
\providecommand{\BIBdecl}{\relax}
\BIBdecl

\bibitem{zhou2006learning}
D.~Zhou, J.~Huang, and B.~Sch{\"o}lkopf, ``Learning with hypergraphs: Clustering, classification, and embedding,'' \emph{Advances in neural information processing systems}, vol.~19, 2006.

\bibitem{antelmi2023survey}
A.~Antelmi, G.~Cordasco, M.~Polato, V.~Scarano, C.~Spagnuolo, and D.~Yang, ``A survey on hypergraph representation learning,'' \emph{ACM Computing Surveys}, vol.~56, no.~1, pp. 1--38, 2023.

\bibitem{berge1984hypergraphs}
C.~Berge, \emph{Hypergraphs: combinatorics of finite sets}.\hskip 1em plus 0.5em minus 0.4em\relax Elsevier, 1984, vol.~45.

\bibitem{zlatic2009hypergraph}
V.~Zlati{\'c}, G.~Ghoshal, and G.~Caldarelli, ``Hypergraph topological quantities for tagged social networks,'' \emph{Physical Review E—Statistical, Nonlinear, and Soft Matter Physics}, vol.~80, no.~3, p. 036118, 2009.

\bibitem{zhu2018social}
J.~Zhu, J.~Zhu, S.~Ghosh, W.~Wu, and J.~Yuan, ``Social influence maximization in hypergraph in social networks,'' \emph{IEEE Transactions on Network Science and Engineering}, vol.~6, no.~4, pp. 801--811, 2018.

\bibitem{xia2022self}
L.~Xia, C.~Huang, and C.~Zhang, ``Self-supervised hypergraph transformer for recommender systems,'' in \emph{Proceedings of the 28th ACM SIGKDD conference on knowledge discovery and data mining}, 2022, pp. 2100--2109.

\bibitem{wang2020next}
J.~Wang, K.~Ding, L.~Hong, H.~Liu, and J.~Caverlee, ``Next-item recommendation with sequential hypergraphs,'' in \emph{Proceedings of the 43rd international ACM SIGIR conference on research and development in information retrieval}, 2020, pp. 1101--1110.

\bibitem{feng2021hypergraph}
S.~Feng, E.~Heath, B.~Jefferson, C.~Joslyn, H.~Kvinge, H.~D. Mitchell, B.~Praggastis, A.~J. Eisfeld, A.~C. Sims, L.~B. Thackray \emph{et~al.}, ``Hypergraph models of biological networks to identify genes critical to pathogenic viral response,'' \emph{BMC bioinformatics}, vol.~22, no.~1, p. 287, 2021.

\bibitem{klamt2009hypergraphs}
S.~Klamt, U.-U. Haus, and F.~Theis, ``Hypergraphs and cellular networks,'' \emph{PLoS computational biology}, vol.~5, no.~5, p. e1000385, 2009.

\bibitem{kim2024survey}
S.~Kim, S.~Y. Lee, Y.~Gao, A.~Antelmi, M.~Polato, and K.~Shin, ``A survey on hypergraph neural networks: an in-depth and step-by-step guide,'' in \emph{Proceedings of the 30th ACM SIGKDD Conference on Knowledge Discovery and Data Mining}, 2024, pp. 6534--6544.

\bibitem{wu2022hypergraph}
H.~Wu and M.~K. Ng, ``Hypergraph convolution on nodes-hyperedges network for semi-supervised node classification,'' \emph{ACM Transactions on Knowledge Discovery from Data (TKDD)}, vol.~16, no.~4, pp. 1--19, 2022.

\bibitem{yadati2020nhp}
N.~Yadati, V.~Nitin, M.~Nimishakavi, P.~Yadav, A.~Louis, and P.~Talukdar, ``Nhp: Neural hypergraph link prediction,'' in \emph{Proceedings of the 29th ACM international conference on information \& knowledge management}, 2020, pp. 1705--1714.

\bibitem{li2013link}
D.~Li, Z.~Xu, S.~Li, and X.~Sun, ``Link prediction in social networks based on hypergraph,'' in \emph{Proceedings of the 22nd international conference on world wide web}, 2013, pp. 41--42.

\bibitem{li2025deep}
M.~Li, Y.~Fang, Y.~Wang, H.~Feng, Y.~Gu, L.~Bai, and P.~Lio, ``Deep hypergraph neural networks with tight framelets,'' in \emph{Proceedings of the AAAI Conference on Artificial Intelligence}, vol.~39, no.~17, 2025, pp. 18\,385--18\,392.

\bibitem{chen2022preventing}
G.~Chen, J.~Zhang, X.~Xiao, and Y.~Li, ``Preventing over-smoothing for hypergraph neural networks,'' \emph{arXiv preprint arXiv:2203.17159}, 2022.

\bibitem{lin2022deephgnn}
J.~LIN, Z.~YE, H.~ZHAO, and L.~FANG, ``Deephgnn: A novel deep hypergraph neural network,'' \emph{Chinese Journal of Electronics}, vol.~31, no.~5, pp. 958--968, 2022.

\bibitem{huang2021unignn}
J.~Huang and J.~Yang, ``Unignn: a unified framework for graph and hypergraph neural networks,'' \emph{arXiv preprint arXiv:2105.00956}, 2021.

\bibitem{chien2021you}
E.~Chien, C.~Pan, J.~Peng, and O.~Milenkovic, ``You are allset: A multiset function framework for hypergraph neural networks,'' \emph{arXiv preprint arXiv:2106.13264}, 2021.

\bibitem{bai2021hypergraph}
S.~Bai, F.~Zhang, and P.~H. Torr, ``Hypergraph convolution and hypergraph attention,'' \emph{Pattern Recognition}, vol. 110, p. 107637, 2021.

\bibitem{chen2020hypergraph}
C.~Chen, Z.~Cheng, Z.~Li, and M.~Wang, ``Hypergraph attention networks,'' in \emph{2020 IEEE 19th International Conference on Trust, Security and Privacy in Computing and Communications (TrustCom)}.\hskip 1em plus 0.5em minus 0.4em\relax IEEE, 2020, pp. 1560--1565.

\bibitem{nguyen2023revisiting}
K.~Nguyen, N.~M. Hieu, V.~D. Nguyen, N.~Ho, S.~Osher, and T.~M. Nguyen, ``Revisiting over-smoothing and over-squashing using ollivier-ricci curvature,'' in \emph{International Conference on Machine Learning}.\hskip 1em plus 0.5em minus 0.4em\relax PMLR, 2023, pp. 25\,956--25\,979.

\bibitem{chow2004ricci}
B.~Chow and D.~Knopf, \emph{The ricci flow: An introduction: An introduction}.\hskip 1em plus 0.5em minus 0.4em\relax American Mathematical Soc., 2004, vol.~1.

\bibitem{yang2009generalized}
Y.-L. Yang, R.~Guo, F.~Luo, S.-M. Hu, and X.~Gu, ``Generalized discrete ricci flow,'' in \emph{Computer graphics forum}, vol.~28, no.~7.\hskip 1em plus 0.5em minus 0.4em\relax Wiley Online Library, 2009, pp. 2005--2014.

\bibitem{feng2019hypergraph}
Y.~Feng, H.~You, Z.~Zhang, R.~Ji, and Y.~Gao, ``Hypergraph neural networks,'' in \emph{Proceedings of the AAAI conference on artificial intelligence}, vol.~33, no.~01, 2019, pp. 3558--3565.

\bibitem{dong2020hnhn}
Y.~Dong, W.~Sawin, and Y.~Bengio, ``Hnhn: Hypergraph networks with hyperedge neurons,'' \emph{arXiv preprint arXiv:2006.12278}, 2020.

\bibitem{yadati2019hypergcn}
N.~Yadati, M.~Nimishakavi, P.~Yadav, V.~Nitin, A.~Louis, and P.~Talukdar, ``Hypergcn: A new method for training graph convolutional networks on hypergraphs,'' \emph{Advances in neural information processing systems}, vol.~32, 2019.

\bibitem{arya2020hypersage}
D.~Arya, D.~K. Gupta, S.~Rudinac, and M.~Worring, ``Hypersage: Generalizing inductive representation learning on hypergraphs,'' \emph{arXiv preprint arXiv:2010.04558}, 2020.

\bibitem{yan2024hypergraph}
J.~Yan, Y.~Feng, S.~Ying, and Y.~Gao, ``Hypergraph dynamic system,'' in \emph{The twelfth international conference on learning representations}, 2024.

\bibitem{xie2025k}
L.~Xie, S.~Gao, J.~Liu, M.~Yin, and T.~Jin, ``K-hop hypergraph neural network: A comprehensive aggregation approach,'' in \emph{Proceedings of the AAAI Conference on Artificial Intelligence}, vol.~39, no.~20, 2025, pp. 21\,679--21\,687.

\bibitem{wang2022equivariant}
P.~Wang, S.~Yang, Y.~Liu, Z.~Wang, and P.~Li, ``Equivariant hypergraph diffusion neural operators,'' \emph{arXiv preprint arXiv:2207.06680}, 2022.

\bibitem{leal2021forman}
W.~Leal, G.~Restrepo, P.~F. Stadler, and J.~Jost, ``Forman--ricci curvature for hypergraphs,'' \emph{Advances in Complex Systems}, vol.~24, no.~01, p. 2150003, 2021.

\bibitem{coupette2022ollivier}
C.~Coupette, S.~Dalleiger, and B.~Rieck, ``Ollivier-ricci curvature for hypergraphs: A unified framework,'' \emph{arXiv preprint arXiv:2210.12048}, 2022.

\bibitem{eidi2020edge}
M.~Eidi, A.~Farzam, W.~Leal, A.~Samal, and J.~Jost, ``Edge-based analysis of networks: curvatures of graphs and hypergraphs,'' \emph{Theory in biosciences}, vol. 139, no.~4, pp. 337--348, 2020.

\bibitem{cai2020note}
C.~Cai and Y.~Wang, ``A note on over-smoothing for graph neural networks,'' \emph{arXiv preprint arXiv:2006.13318}, 2020.

\bibitem{hamilton1982three}
R.~S. Hamilton, ``Three-manifolds with positive ricci curvature,'' \emph{Journal of Differential geometry}, vol.~17, no.~2, pp. 255--306, 1982.

\bibitem{ollivier2009ricci}
Y.~Ollivier, ``Ricci curvature of markov chains on metric spaces,'' \emph{Journal of Functional Analysis}, vol. 256, no.~3, pp. 810--864, 2009.

\bibitem{ni2019community}
C.-C. Ni, Y.-Y. Lin, F.~Luo, and J.~Gao, ``Community detection on networks with ricci flow,'' \emph{Scientific reports}, vol.~9, no.~1, p. 9984, 2019.

\bibitem{ni2018network}
C.-C. Ni, Y.-Y. Lin, J.~Gao, and X.~Gu, ``Network alignment by discrete ollivier-ricci flow,'' in \emph{International symposium on graph drawing and network visualization}.\hskip 1em plus 0.5em minus 0.4em\relax Springer, 2018, pp. 447--462.

\bibitem{chen2025graph}
J.~Chen, B.~Deng, C.~Chen, Z.~Zheng \emph{et~al.}, ``Graph neural ricci flow: Evolving feature from a curvature perspective,'' in \emph{The Thirteenth International Conference on Learning Representations}, 2025.

\bibitem{cybenko1989approximation}
G.~Cybenko, ``Approximation by superpositions of a sigmoidal function,'' \emph{Mathematics of control, signals and systems}, vol.~2, no.~4, pp. 303--314, 1989.

\bibitem{dua2017uci}
D.~Dua, C.~Graff \emph{et~al.}, ``Uci machine learning repository, 2017,'' \emph{URL http://archive. ics. uci. edu/ml}, vol.~7, no.~1, p.~62, 2017.

\bibitem{wu20153d}
Z.~Wu, S.~Song, A.~Khosla, F.~Yu, L.~Zhang, X.~Tang, and J.~Xiao, ``3d shapenets: A deep representation for volumetric shapes,'' in \emph{Proceedings of the IEEE conference on computer vision and pattern recognition}, 2015, pp. 1912--1920.

\bibitem{chen2003visual}
D.-Y. Chen, X.-P. Tian, Y.-T. Shen, and M.~Ouhyoung, ``On visual similarity based 3d model retrieval,'' in \emph{Computer graphics forum}.\hskip 1em plus 0.5em minus 0.4em\relax Wiley Online Library, 2003, pp. 223--232.

\bibitem{amburg2020clustering}
I.~Amburg, N.~Veldt, and A.~Benson, ``Clustering in graphs and hypergraphs with categorical edge labels,'' in \emph{Proceedings of the web conference 2020}, 2020, pp. 706--717.

\bibitem{chodrow2021generative}
P.~S. Chodrow, N.~Veldt, and A.~R. Benson, ``Generative hypergraph clustering: From blockmodels to modularity,'' \emph{Science Advances}, vol.~7, no.~28, p. eabh1303, 2021.

\bibitem{fowler2006connecting}
J.~H. Fowler, ``Connecting the congress: A study of cosponsorship networks,'' \emph{Political analysis}, vol.~14, no.~4, pp. 456--487, 2006.

\bibitem{wang2019heterogeneous}
X.~Wang, H.~Ji, C.~Shi, B.~Wang, Y.~Ye, P.~Cui, and P.~S. Yu, ``Heterogeneous graph attention network,'' in \emph{The world wide web conference}, 2019, pp. 2022--2032.

\bibitem{fey2019fast}
M.~Fey and J.~E. Lenssen, ``Fast graph representation learning with pytorch geometric,'' \emph{arXiv preprint arXiv:1903.02428}, 2019.

\bibitem{chen2018neural}
R.~T. Chen, Y.~Rubanova, J.~Bettencourt, and D.~K. Duvenaud, ``Neural ordinary differential equations,'' \emph{Advances in neural information processing systems}, vol.~31, 2018.

\bibitem{deshpande2018contextual}
Y.~Deshpande, S.~Sen, A.~Montanari, and E.~Mossel, ``Contextual stochastic block models,'' \emph{Advances in Neural Information Processing Systems}, vol.~31, 2018.

\bibitem{ghoshdastidar2014consistency}
D.~Ghoshdastidar and A.~Dukkipati, ``Consistency of spectral partitioning of uniform hypergraphs under planted partition model,'' \emph{Advances in Neural Information Processing Systems}, vol.~27, 2014.

\bibitem{chien2018community}
I.~Chien, C.-Y. Lin, and I.-H. Wang, ``Community detection in hypergraphs: Optimal statistical limit and efficient algorithms,'' in \emph{International conference on artificial intelligence and statistics}.\hskip 1em plus 0.5em minus 0.4em\relax PMLR, 2018, pp. 871--879.

\end{thebibliography}
\clearpage
\setcounter{page}{1}
\appendices
\section{HYPEREDGE CURVATURE ON WEIGHTED HYPERGRAPHS}
\label{sec:hyperedge_curvature}
Discrete curvature is a key tool for analyzing the geometric properties of networks. In recent years, multiple studies have successfully generalized this concept from graph data to more complex hypergraphs. This section will focus on elaborating two definitions of curvature defined on hyperedges.
\subsection{Forman-Ricci Curvature on Hypergraphs}
\label{fror}
Forman-Ricci (FR) curvature~\cite{leal2021forman}, first introduced on cell complexes, has now been extended to hypergraphs. The curvature characterizes the balance between a hyperedge's internal structure and its external connections, essentially measuring whether it functions as a community core (high curvature) or a bridge (low curvature). A common definition for the Forman-Ricci curvature of a hyperedge $e \in \mathcal{E}$ is given by:
\begin{equation}
    \kappa^{FR}_{e} = \sum_{i\in e}w_{i}-w_{e}\sum_{k\in e,e_{l}\sim k}\sqrt{\frac{w_{k}}{w_{e_{l}}}},
    \label{eq:forman_hypergraph}
\end{equation}
where $w_e$ is the weight of the hyperedge $e$ and $w_i$ is the weight of the node $i$. $e_{l}\sim k$ represent the represents the hyperedge associated with node k. In this paper, we consider the weights of all nodes as 1. Then, the curvature expression mentioned above can be written in the following form:
\begin{equation}
    \kappa^{FR}_{e} = |e|-\sum_{k\in e,e_{l}\sim k}{\frac{w_e}{\sqrt{w_{e_{l}}}}}.
\label{eq:forman_hypergraph_after}
\end{equation}

\subsection{Ollivier-Ricci Curvature on Hypergraphs}
Ollivier-Ricci(OR) curvature~\cite{coupette2022ollivier}, first introduced using optimal transport theory, has now been extended to hypergraphs. The curvature characterizes network robustness by measuring the transport cost between local neighborhood distributions; a low cost implies a robust, community-like structure (high curvature), while a high cost indicates a fragile bridge (low curvature). 

To define OR curvature on a hypergraph, we first need to define a probability measure $\mu_i$ for each node $i \in V$. This typically represents a two-step random walk from the node. The standard definition is:
\begin{equation}
    \mu_i(j) =\sum_{e\supseteq \{i,j\} } \left( \frac{w_e}{\sum_{i \ni f} w_f} \cdot \frac{1}{|e|} \right),
    \label{eq:random_walk}
\end{equation}
where $j$ is a neighbor of $i$ , and $w_{e}'$ is the weight of the hyperedge $e$. 
The OR curvature of the hyperedge $e$ is then defined as:
\begin{equation}
    \kappa^{OR}_e = 1 - \frac{2}{|e|(|e|-1)}\sum_{i,j\in e}\frac{W_1(\mu_i, \mu_j)}{d(i, j)},
    \label{eq:orc_pair}
\end{equation}
where $W_1(\mu_i, \mu_j)$ is the 1-Wasserstein distance  between the two probability distributions, and $d(i, j)$ is the shortest path distance between nodes $i$ and $j$ in the hypergraph.

\section{PROOF OF THEOREM 1}
\label{appendix:proofdi}
\begin{Lemma}
\label{lemmak}
 Consider the attribute discrete Ricci flow over the interval $[t_1, t_2]$. If the hyperedge weight $w_{e}(t)$ has a finite number of $N$ extrema points within $(t_1, t_2)$, denoted by $T_1, \dots, T_N$, and we define $T_0 = t_1$ and $T_{N+1} = t_2$, then:
\begin{equation}
\frac{\rho_e - 1}{\zeta_e} \mathbb{E}_{t \in [t_1, t_2]}( |\kappa_{e}(t)| ) > \frac{1 - \rho_e^{-1}}{t_2 - t_1},
\end{equation}
where $\zeta_e = \min_{0 \le i \le N} (T_{i+1} - T_i)$ represents the length of the minimum monotonic interval within $[t_1, t_2]$, and $\rho_e = \frac{w_e^{max}}{w_e^{min}}$ denotes the ratio of the maximum to the minimum value of $w_e(t)$. Specifically, if $w_{e}(t)$ is monotonic within $(t_1, t_2)$, we have:
\begin{equation}
\begin{split}
&\frac{1}{\min\{w_e(t_2), w_e(t_1)\}} \frac{|w_e(t_2) - w_e(t_1)|}{t_2 - t_1} \\&> \mathbb{E}_{t \in [t_1, t_2]}( |\kappa_{e}(t)| ),
\end{split}
\end{equation}
this shows that $|w_e(t_2) -w_e(t_1)| \to 0$ implies $\mathbb{E}_{t \in [t_1, t_2]}(|\kappa_{e}(t)| ) \to 0$.
\end{Lemma}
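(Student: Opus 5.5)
The plan rests on the identity $|\kappa_e(t)| = |\dot w_e(t)|/w_e(t)$. This follows directly from the attribute discrete Ricci flow (\ref{hdrf}) because $w_e(t)\in[\epsilon,2+\epsilon]$ is strictly positive. I would integrate this identity over monotone pieces of $[t_1,t_2]$, prove the monotone inequality first, and then assemble the general bound. As explained in the third paragraph, I expect the general inequality as literally worded to need an extra condition; I flag this rather than paper over it.

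\textbf{Monotone case.} On an interval where $w_e$ is monotone, $\dot w_e$ has constant sign, so
\begin{equation*}
\int_{t_1}^{t_2}|\kappa_e(t)|\,dt=\Bigl|\int_{t_1}^{t_2}\frac{\dot w_e(t)}{w_e(t)}\,dt\Bigr|=\Bigl|\ln\frac{w_e(t_2)}{w_e(t_1)}\Bigr|.
\end{equation*}
Write $a=\min\{w_e(t_1),w_e(t_2)\}$ and $b=\max\{w_e(t_1),w_e(t_2)\}$. Then $\ln(b/a)=\ln\bigl(1+(b-a)/a\bigr)<(b-a)/a$ whenever $b>a$, by the strict inequality $\ln(1+x)<x$ for $x>0$. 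Dividing by $t_2-t_1$ gives the second displayed inequality. If $w_e(t_1)=w_e(t_2)$, monotonicity forces $w_e$ to be constant, so both sides vanish; this degenerate case must be excluded for strictness. The limiting remark then follows immediately, since $a\ge\epsilon>0$.

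\textbf{General case.} Split $[t_1,t_2]$ at the extrema $T_0<T_1<\dots<T_{N+1}$ and apply the identity on each piece:
\begin{equation*}
(t_2-t_1)\,\mathbb{E}_{t\in[t_1,t_2]}(|\kappa_e(t)|)=\sum_{i=0}^{N}\Bigl|\ln\frac{w_e(T_{i+1})}{w_e(T_i)}\Bigr|.
\end{equation*}
The maximum and minimum of $w_e$ on $[t_1,t_2]$ are attained among the $T_i$. A telescoping sum over the pieces lying between these two points, together with the triangle inequality, bounds the right-hand side below by $\ln\rho_e$. Since $\ln\rho_e\ge 1-\rho_e^{-1}$, this gives
\begin{equation*}
\mathbb{E}_{t\in[t_1,t_2]}(|\kappa_e(t)|)\ \ge\ \frac{\ln\rho_e}{t_2-t_1}\ \ge\ \frac{1-\rho_e^{-1}}{t_2-t_1}.
\end{equation*}
To bring in $\zeta_e$, I would bound $\ln\rho_e$ from below by a difference quotient over a single monotone piece. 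Every piece has length at least $\zeta_e$, and $w_e\le w_e^{max}$ throughout. So the bound I aim for is
\begin{equation*}
\mathbb{E}_{t\in[t_1,t_2]}(|\kappa_e(t)|)\ \ge\ \frac{\ln\rho_e}{t_2-t_1},
\end{equation*}
and I would then multiply by $(\rho_e-1)/\zeta_e$ and compare with $(1-\rho_e^{-1})/(t_2-t_1)$.

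\textbf{Main obstacle.} After that multiplication, the stated inequality becomes equivalent to
\begin{equation*}
\frac{(\rho_e-1)\ln\rho_e}{\zeta_e}\ >\ 1-\rho_e^{-1},
\qquad\text{i.e.}\qquad \rho_e\ln\rho_e>\zeta_e .
\end{equation*}
This is where I expect the real difficulty. The two sides scale differently in time: the left side of the lemma scales like $1/\text{time}^2$ and the right like $1/\text{time}$. So no argument can prove the inequality for all interval lengths.

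A concrete obstruction is a single monotone piece with $\zeta_e=t_2-t_1$ and $\rho_e\to1$. The left-hand side is then of order $(\rho_e-1)^2/(t_2-t_1)^2$, while the right-hand side is of order $(\rho_e-1)/(t_2-t_1)$. The left side is eventually the smaller one, so the inequality fails.

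I would therefore first try to find whether the paper's normalization of time (for example $t_2-t_1\le1$) or the bounds $w_e\in[\epsilon,2+\epsilon]$ force $\rho_e\ln\rho_e>\zeta_e$. If neither does, I would state the general inequality under that explicit extra hypothesis. The unconditional result is the $\zeta_e$-free bound $\mathbb{E}(|\kappa_e|)\ge\ln\rho_e/(t_2-t_1)>(1-\rho_e^{-1})/(t_2-t_1)$, and I would record the monotone inequality as proved exactly as stated.
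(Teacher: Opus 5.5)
Your monotone-case argument is correct. The exact identity $\int|\kappa_e|\,dt=|\ln(w_e(t_2)/w_e(t_1))|$ followed by $\ln(1+x)<x$ is cleaner than the paper's route, which sandwiches $w_e$ between its endpoint values on each monotone piece. Your lower bound $\mathbb{E}_{t\in[t_1,t_2]}(|\kappa_e(t)|)\ge \ln\rho_e/(t_2-t_1)$ is also correct, and sharper than the paper's $(1-\rho_e^{-1})/(t_2-t_1)$.

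The gap is in the first display, which you read as a product. A relation sign has been dropped there. The intended statement is the two-sided bound
\[
\frac{\rho_e-1}{\zeta_e}\;>\;\mathbb{E}_{t\in[t_1,t_2]}(|\kappa_e(t)|)\;>\;\frac{1-\rho_e^{-1}}{t_2-t_1}.
\]
This is exactly what the paper's proof derives: it obtains $\frac{(N+1)(\rho_e-1)}{t_2-t_1}>\mathbb{E}(|\kappa_e|)>\frac{1-\rho_e^{-1}}{t_2-t_1}$ and then uses $(N+1)\zeta_e\le t_2-t_1$. It is also how the lemma is used in the proof of Theorem~\ref{thedi}, where $c_e\,\mathbb{E}(|\kappa_e|)$ is bounded above by $c_e\zeta_e^{-1}(\rho_e-1)$. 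Your scaling argument and single-piece counterexample correctly refute the product reading. But the fix is not an extra hypothesis like $\rho_e\ln\rho_e>\zeta_e$; it is an upper bound on $\mathbb{E}(|\kappa_e|)$, which your proposal never establishes.

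That upper bound is the only place $\zeta_e$ naturally enters. $\zeta_e$ limits how many monotone pieces fit into $[t_1,t_2]$, and hence how much variation of $\ln w_e$ can accumulate. So it controls $\mathbb{E}(|\kappa_e|)$ from above, not from below, which is why routing $\zeta_e$ through a lower bound went nowhere.

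With your own identity the missing step is short. Each of the $N+1$ pieces contributes $|\ln(w_e(T_{i+1})/w_e(T_i))|\le\ln\rho_e$, and $(N+1)\zeta_e\le t_2-t_1$, so
\[
\mathbb{E}_{t\in[t_1,t_2]}(|\kappa_e(t)|)\;\le\;\frac{(N+1)\ln\rho_e}{t_2-t_1}\;\le\;\frac{\ln\rho_e}{\zeta_e}\;<\;\frac{\rho_e-1}{\zeta_e}
\]
whenever $\rho_e>1$. For constant $w_e$ all quantities vanish and no strict inequality can hold; the paper also ignores this degenerate case. Combined with your lower bound, this proves the lemma as intended.
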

\begin{proof}
 We integrate Equation (\ref{ricci}) over an arbitrary time interval $[t_1, t_2]$:
\begin{equation}
w_e(t_2) - w_e(t_1) = - \int_{t_1}^{t_2} \kappa_e(t) w_e(t) dt. 
\end{equation}
Suppose that within $[t_1, t_2]$, the function $w_e(t)$ has $N$ extremum points. These $N$ points divide the interval $[t_1, t_2]$ into $N+1$ adjacent monotonic sub-intervals. Given that $w_e(t) > 0$,  the sign of $\frac{\partial w_e(t)}{\partial t}$ is determined by $\kappa_e(t)$. Moreover, the $\kappa_e(t)$ sign remains non-negative or non-positive within each monotonic sub-interval.

Let $[T_i, T_{i+1}]$ denote any one of these $N+1$ monotonic sub-intervals. If $\kappa_e(t) \ge 0$ for $t \in [T_i, T_{i+1}]$, then $w_e(t)$ is monotonically decreasing, which implies $w_e(T_i) > w_e(T_{i+1}) > 0$. Therefore:
\begin{equation}
\begin{split}
&0 > -w_e(T_{i+1}) \int_{T_i}^{T_{i+1}} \kappa_e(t) dt > -\int_{T_i}^{T_{i+1}} \kappa_e(t) w_e(t) dt \\&> -w_e(T_i) \int_{T_i}^{T_{i+1}} \kappa_e(t) dt. 
\end{split}
\end{equation}
If $\kappa_e(t) \le 0$ for $t \in [T_i, T_{i+1}]$, then $w_e(t)$ is monotonically increasing, which implies $w_e(T_{i+1}) > w_e(T_i) > 0$ and
\begin{equation}
\begin{split}
&-w_e(T_{i+1}) \int_{T_i}^{T_{i+1}} \kappa_e(t) dt >- \int_{T_i}^{T_{i+1}} \kappa_e(t) w_e(t) dt \\&> -w_e(T_i) \int_{T_i}^{T_{i+1}} \kappa_e(t) dt > 0. 
\end{split}
\end{equation}
Therefore, on any monotonic sub-interval, we have:
\begin{equation}
\begin{split}
&-w_e(T_{i+1}) \int_{T_i}^{T_{i+1}} \kappa_e(t) dt > w_e(T_{i+1}) - w_e(T_i) \\&> -w_e(T_{i}) \int_{T_i}^{T_{i+1}} \kappa_e(t) dt. 
\end{split}
\end{equation}
Take the absolute value of the above inequality:

\begin{equation}
\begin{split}
&\max\{w_e(T_i), w_e(T_{i+1})\} \int_{T_i}^{T_{i+1}} |\kappa_e(t)| dt\\
&>|w_e(T_{i+1})- w_e(T_i)| \\
&> \min\{w_e(T_i), w_e(T_{i+1})\} \int_{T_i}^{T_{i+1}}|\kappa_e(t)|dt.
\end{split}
\end{equation}
Let \begin{equation}
\begin{split}
&w_e^{\max}>\max\{w_e(T_i), w_e(T_{i+1})\}\\&>\min\{w_e(T_i), w_e(T_{i+1})\}>w_e^{\min},
\end{split}
\end{equation} we have:
\begin{equation}
\begin{split}
&\frac{|w_e(T_{i+1}) - w_e(T_i)|}{w_e^{\min}} > \int_{T_i}^{T_{i+1}} |\kappa_e(t)| dt \\&>\frac{|w_e(T_{i+1}) - w_e(T_i)|}{w_e^{\max}}. 
\end{split}
\end{equation}
Summing over $i$ from $0$ to $N$ and then dividing by $t_2 - t_1$:
\begin{equation}
\begin{split}
&\frac{\sum_{i=0}^{N} |w_e(T_{i+1}) - w_e(T_i)|}{w_e^{\min}(t_2 - t_1)} > \frac{1}{t_2 - t_1} \int_{t_1}^{t_2} |\kappa_e(t)| dt \\&= \mathbb{E}_{t \in [t_1, t_2]}( |\kappa_e(t)| ) > \frac{\sum_{i=0}^{N} |w_e(T_{i+1}) - w_e(T_i)|}{w_e^{\max}(t_2 - t_1)}. 
\end{split}
\end{equation}
Because $(N+1)(w_e^{\max} - w_e^{\min}) > \sum_{i=0}^{N} |w_e(T_{i+1}) - w_e(T_i)| > w_e^{\max} - w_e^{\min}$, and $\rho_e = \frac{w_e^{\max}}{w_e^{\min}}$:
\begin{equation}
\frac{(N+1)(\rho_e-1)}{t_2-t_1} > \mathbb{E}_{t \in [t_1, t_2]}(|\kappa_{e}(t)| ) > \frac{1-\rho_e^{-1}}{t_2-t_1}. 
\end{equation}
According to the definition of $\zeta_e$, we have $(t_2-t_1) > \zeta_e(N+1)$. Thus, the estimation of the upper bound can be replaced by $\zeta_e^{-1}(\rho_e-1) > \frac{(N+1)(\rho_e-1)}{t_2-t_1}$. Thus,
\begin{equation}
\frac{\rho_e - 1}{\zeta_e} \mathbb{E}_{t \in [t_1, t_2]}( |\kappa_{e}(t)|) > \frac{1 - \rho_e^{-1}}{t_2 - t_1}.
\end{equation}

\end{proof}

\begin{Theorem}
\label{thedi}
Consider the attribute discrete Ricci flow with hyperedge weight as equation (\ref{weight}), and $|\mathbf{x}(t)|\equiv1$. If for any e in $\mathcal{H}$, $w_{e}(t)$ has a finite number of monotonic on $[t_{1},t_{2}]$, then the average Dirichlet energy within $[t_{1},t_{2}]$ has following bound:
\begin{equation}
B_{1} \geq \mathbb{E}_{t\in[t_{1},t_{2}]}(E(\mathbf{X}(t))\geq B_{2} ,
\end{equation}
where \begin{equation}
\begin{split}
B_1=(t_2 - t_1) \sum_{e \in \mathcal{E}} \left(c_{e} \rho_{e} \zeta_{e}^{-1}\right) -  \sum_{e \in \mathcal{E}} \left(\alpha_{e}\epsilon \rho_{e}\right),\\
B_2=\sum_{e\in \mathcal{E}}\left(c_e-\left(2+\epsilon\right)\alpha_{e}\zeta_{e}^{-1}(t_{2}-t_{1})\right).
\end{split}
\end{equation}
Here, $c_{e}=\sum_{i,j \in e}\left(\frac{1}{2|e|}\left(\frac{1}{d_{i}}+\frac{1}{d_{j}}\right)+\left(1+\epsilon\right)\alpha_{e}\right)$, $d_i$ denotes the degree of node $i$, $\rho_e$ denotes the ratio of the maximum to the minimum value of $w_e(t)$ and $\zeta_e$ represents the length of the
minimum monotonic interval of $w_e(t)$ within $[t_1,t_2]$.
\end{Theorem}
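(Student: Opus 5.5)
The plan is to show that, under $|\mathbf{x}(t)|\equiv 1$, the Dirichlet energy is an affine function of the hyperedge weights. Both bounds then follow from elementary bounds on $\mathbb{E}_{t}(w_e(t))$. Lemma~\ref{lemmak} is not needed in an essential way; only the range of $w_e$, the definition of $\rho_e$, and the fact that $\zeta_e\le t_2-t_1$ enter.

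\textbf{Step 1 (energy in terms of weights).} With unit norms, each summand of (\ref{de}) expands as
\begin{equation*}
\left(\frac{\mathbf{x}_i}{\sqrt{d_i}}-\frac{\mathbf{x}_j}{\sqrt{d_j}}\right)^2=\frac{1}{d_i}+\frac{1}{d_j}-\frac{2\cos(\mathbf{x}_i,\mathbf{x}_j)}{\sqrt{d_id_j}}.
\end{equation*}
Summing over $i,j\in e$ with the factor $\frac{1}{2|e|}$ gives, for each hyperedge, $a_e-\frac{1}{|e|}\sum_{i,j\in e}\frac{\cos(\mathbf{x}_i,\mathbf{x}_j)}{\sqrt{d_id_j}}$, where $a_e=\sum_{i,j\in e}\frac{1}{2|e|}\left(\frac{1}{d_i}+\frac{1}{d_j}\right)$. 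By (\ref{weight}) the cosine term equals $\alpha_e\,(w_e(t)-1-\epsilon)$. Hence
\begin{equation*}
E(\mathbf{X}(t))=\sum_{e\in\mathcal{E}}\bigl(c_e-\alpha_e w_e(t)\bigr),
\end{equation*}
where I read $c_e$ as $a_e+(1+\epsilon)\alpha_e$; the $(1+\epsilon)\alpha_e$ term should be counted once per hyperedge, not inside the double sum. Averaging over $[t_1,t_2]$ gives $\mathbb{E}(E)=\sum_e\bigl(c_e-\alpha_e\,\mathbb{E}(w_e)\bigr)$, so everything reduces to bounding $\mathbb{E}(w_e)$ from above and below.

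\textbf{Step 2 (lower bound $B_2$).} Since $w_e(t)\le 2+\epsilon$ and $\zeta_e\le t_2-t_1$, we have $\mathbb{E}(w_e)\le 2+\epsilon\le(2+\epsilon)\zeta_e^{-1}(t_2-t_1)$. Substituting this gives $\mathbb{E}(E)\ge B_2$ directly.

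\textbf{Step 3 (upper bound $B_1$).} Here $\mathbb{E}(w_e)\ge w_e^{\min}\ge\epsilon$, so $\mathbb{E}(E)\le\sum_e(c_e-\alpha_e w_e^{\min})$. It then suffices to show, for each hyperedge,
\begin{equation*}
c_e-\alpha_e w_e^{\min}\le c_e\rho_e\zeta_e^{-1}(t_2-t_1)-\alpha_e\epsilon\rho_e .
\end{equation*}
Since $\rho_e\ge1$ and $(t_2-t_1)/\zeta_e\ge1$, the right-hand side is at least $c_e\rho_e-\alpha_e\epsilon\rho_e$, so it is enough that $c_e(\rho_e-1)\ge\alpha_e(\epsilon\rho_e-w_e^{\min})$. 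Using $w_e^{\min}\ge\epsilon$, the right side is at most $\alpha_e\epsilon(\rho_e-1)$. The inequality therefore follows from $c_e\ge(1+\epsilon)\alpha_e\ge\epsilon\alpha_e$.

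The main obstacle is not analytic but bookkeeping. One must interpret the constants $c_e$ and $B_1$ consistently with the Step~1 identity; the upper bound is loose, and it must be verified that the $-\alpha_e\epsilon\rho_e$ correction is absorbed by the slack in $c_e\rho_e(t_2-t_1)/\zeta_e$. If a sharper, curvature-dependent form were wanted, I would instead use Lemma~\ref{lemmak} to bound $w_e^{\max}-w_e^{\min}$ by $w_e^{\max}\,\zeta_e\,\mathbb{E}(|\kappa_e|)$. However, the stated $B_1,B_2$ contain no $\kappa_e$, so the elementary route above should suffice. Theorem~\ref{thedir} then follows by taking a single monotone interval, so that $\zeta_e=t_2-t_1$.
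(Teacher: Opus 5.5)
Your proof is correct, and it takes a genuinely different and more elementary route than the paper. The paper starts from the same identity $E(\mathbf{X}(t))=\sum_e\bigl(c_e-\alpha_e w_e(t)\bigr)$, and its proof defines $c_e$ exactly as you read it, with $(1+\epsilon)\alpha_e$ counted once per hyperedge, so your bookkeeping concern is resolved in your favour.

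From there the paper uses the flow dynamics:
\begin{itemize}
\item It differentiates along the flow to get $\partial_t E_e=\kappa_e(c_e-E_e)$.
\item It integrates over the monotone sub-intervals $[T_i,T_{i+1}]$, on which $\kappa_e$ has a fixed sign.
\item It converts $\mathbb{E}(|\kappa_e|)$ into $\rho_e$- and $\zeta_e$-quantities using Lemma~\ref{lemmak}. This is where the factors $\zeta_e^{-1}(t_2-t_1)$ and $\rho_e$ in $B_1,B_2$ come from.
\end{itemize}
That route is not airtight. It replaces $\int|\kappa_e|E_e\,dt$ by $\mathbb{E}(|\kappa_e|)\int E_e\,dt$ (the step marked $\approx$). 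The upper bound implicitly reuses this decoupling when it passes from $\mathbb{E}(|\kappa_e|E_e)$ to $\mathbb{E}(|\kappa_e|)\,\mathbb{E}(E_e)$. It also divides by $1-\rho_e^{-1}$ (equivalently, relies on $\mathbb{E}(|\kappa_e|)>0$), which degenerates when $w_e$ is constant on $[t_1,t_2]$.

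Your argument uses only $w_e\in[\epsilon,2+\epsilon]$, $\rho_e\ge 1$ and $\zeta_e\le t_2-t_1$, so it is fully rigorous. It also shows more: the pointwise bounds $\sum_e\bigl(c_e-(2+\epsilon)\alpha_e\bigr)\le E(\mathbf{X}(t))\le\sum_e\bigl(c_e-\epsilon\alpha_e\bigr)$ hold at every $t$ for any unit-norm trajectory, whether or not it follows the Ricci flow. The stated $B_2$ and $B_1$ are relaxations of these bounds.

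The paper's route is meant to tie the energy to curvature dynamics. However, $\kappa_e$ drops out of the final constants, so that link adds nothing to the inequality as stated. Your argument makes clear that its real content is just $|\cos(\mathbf{x}_i,\mathbf{x}_j)|\le 1$.
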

\begin{proof}
As equation (\ref{de}) shows \begin{equation}
\begin{split}
    E(X(t)) &= \frac{1}{2}\sum_{e\in \mathcal{E}}\sum_{i,j \in e}\frac{1}{|e|}\left( \frac{\mathbf{x}_{i}(t)}{\sqrt{d_{i}}}-\frac{\mathbf{x}_{j}(t)}{\sqrt{d_{j}}}\right)^{2}
    \\
&=\frac{1}{2}\sum_{e\in \mathcal{E}}\sum_{i,j \in e}\frac{1}{|e|}\left(\frac{1}{d_{i}}+\frac{1}{d_{j}}-2\frac{{\mathbf{x}_{i}(t)}^{T}{\mathbf{x}_{j}(t)}}{\sqrt{d_{i}d_{j}}}\right) 
\\
&=\sum_{e\in \mathcal{E}}\left(\sum_{i,j \in e}\frac{1}{2|e|}\left(\frac{1}{d_{i}}+\frac{1}{d_{j}}\right)+\left(1+\epsilon\right)\alpha_{e}\right)\\&-\sum_{e\in \mathcal{E}}\alpha_{e}w_{e}(t).
\end{split}
\end{equation}
Here,we define $c_{e}=\sum_{i,j \in e}\frac{1}{2|e|}\left(\frac{1}{d_{i}}+\frac{1}{d_{j}}\right)+\left(1+\epsilon\right)\alpha_{e}$ and $E_{e}(t)=c_{e}-\alpha_{e}w_{e}(t),$ then the Dirichlet energy can  be recorded as:
\begin{equation}
E(\mathbf{X}(t))=\sum_{e\in \mathcal{E}}E_{e} = \sum_{e\in \mathcal{E}}\left(c_{e}-\alpha_{e}w_{e}(t)\right).
\end{equation}
Take the partial derivative of $t$ and substitute the equation into Ricci flow:
\begin{equation}
\begin{split}
\frac{\partial E(\mathbf{X}(t))}{\partial t}& = \sum_{e\in \mathcal{E}}\left(-\alpha_{e}\frac{\partial w_{e}(t)}{\partial t}\right)
\\
&=\sum_{e\in \mathcal{E}}\left(-\alpha_{e}\left({-\kappa_{e}(t) w_{e}(t)}\right)\right)
\\
&=\sum_{e\in \mathcal{E}}\kappa_{e}(t)\left(c_{e}-E_{e}(t)\right).
\end{split}
\end{equation}
By transferring the above results to a hyperedge, we can obtain the following formula:
\begin{equation}
\frac{\partial E_{e}(\mathbf{X}(t))}{\partial t} =\kappa_{e}(t)(c_{e}-E_{e}(t)).
\end{equation}
Integrating the above formula within the interval $[t_{1},t_{2}]$ :
\begin{equation}
E_{e}(t_{2})-E_{e}(t_{1})=c_{e}\int_{t_{1}}^{t_{2}}  \kappa_{e}(t) dt-\int_{t_{1}}^{t_{2}}  \kappa_{e}(t)E_{e}(t)dt.
\end{equation}
Then,we consider the N extremum points of $w_{e}(t)$ within $(t_{1},t_{2}):T_{1},...,T_{N}.$ For any $i$, $k_{e}(t)$ does not change its sign within $(T_{i},T_{i+1})$, and $E_{e}\geq0$, so we have $|\int_{T_{i}}^{T_{i+1}}k_{e}(t)E_{e}(t)dt|=\int_{T_{i}}^{T_{i+1}}|k_{e}(t)|E_{e}(t)dt$, so:
\begin{equation}
\begin{split}
    &|E_{e}(T_{i+1})-E_{e}(T_{i})|\\
    &=|c_{e}\int_{T_{i}}^{T_{i+1}}  \kappa_{e}(t) dt-\int_{T_{i}}^{T_{i+1}}  \kappa_{e}(t)E_{e}(t)dt|\\
    &=|c_{e}\int_{T_{i}}^{T_{i+1}}  \kappa_{e}(t) dt|-|\int_{T_{i}}^{T_{i+1}}  \kappa_{e}(t)E_{e}(t)dt| \\
     &=c_{e}\int_{T_{i}}^{T_{i+1}}  |\kappa_{e}(t) |dt-\int_{T_{i}}^{T_{i+1}} |\kappa_{e}(t)|E_{e}(t)dt \\
     &\approx c_{e}\int_{T_{i}}^{T_{i+1}}  |\kappa_{e}(t) |dt\\  &-\mathbb{E}_{t\in[t_{1},t_{2}]}|\kappa_{e}(t)|\int_{T_{i}}^{T_{i+1}} E_{e}(t)dt .
\label{13}
\end{split}
\end{equation}
Sum over $i$ from 0 to N and then divide by $t_{2}-t_{1}$:
\begin{equation}
\begin{split}
&\frac{1}{t_{2}-t_{1}}\sum _{i=0}^N|E_{e}(T_{i+1})-E_{e}(T_{i})|\\ &\geq \mathbb{E}_{t\in[t_{1},t_{2}]}|\kappa_{e}(t)|\left(c_{e}-\mathbb{E}_{t\in[t_{1},t_{2}]}(E_{e}(t))\right).
\end{split}
\end{equation}
Let $\zeta_{e}= min_{0\leq i \leq N}(T_{i+1}-T_{i})$ represents the length of the minimum monotonic interval within $[t_{1},t_{2}]$, so $(N+1)\zeta_{e} \leq t_{2}-t_{1}$ and $w_{e}(t)\leq 2+\epsilon$, then we have:
\begin{equation}
    \begin{split}
       &\frac{1}{t_{2}-t_{1}}\sum _{i=0}^N|E_{e}(T_{i+1})-E_{e}(T_{i})|\\&= \frac{1}{t_{2}-t_{1}}\sum_{i=0}^{N}|\alpha_{e}w_{e}(T_{i+1})-\alpha_{e}w_{e}(T_{i})|\\
       &\leq  \frac{\alpha_{e}(N+1)(\max w_{e}-\min w_{e})}{t_{2}-t_{1}}\\
       &\leq \zeta_{e}^{-1} \alpha_{e} \max w_{e}(1-\rho_{e}^{-1})\\
       &\leq (2+\epsilon)\zeta_{e}^{-1} \alpha_{e} (1-\rho_{e}^{-1}).
    \end{split}
\end{equation}
Here, $\rho_{e}$ denotes the ratio of the maximum to the minimum
value of $w_{e}(t)$. Thus, we have:
\begin{equation}
\begin{split}
    &(2+\epsilon)\zeta_{e}^{-1} \alpha_{e} (1-\rho_{e}^{-1})\\&\geq \mathbb{E}_{t\in[t_{1},t_{2}]}|\kappa_{e}(t)|\left(c_{e}-\mathbb{E}_{t\in[t_{1},t_{2}]}(E_{e}(t)) \right).
\end{split}
\end{equation}
Then we can obtain the lower bound for $\mathbb{E}_{t\in[t_{1},t_{2}]}(E_{e}(t))$:
\begin{equation}
    \begin{split}
        \mathbb{E}_{t\in[t_{1},t_{2}]}(E_{e}(t))&\geq c_{e}-\frac{(2+\epsilon)\zeta_{e}^{-1}(1-\rho_{e}^{-1})\alpha_{e}}{\mathbb{E}_{t\in[t_{1},t_{2}]}(|k_{e}(t)|)}\\
        &\geq c_e-\left(2+\epsilon\right)\alpha_{e}\zeta_{e}^{-1}(t_{2}-t_{1}).
    \end{split}
\end{equation}
Take the sum over all edge:
\begin{equation}
\begin{split}
\label{low}
 \mathbb{ E}_{t\in[t_{1},t_{2}]}(E(t))&\geq
   \sum_{e\in \mathcal{E}}\left(c_e-\left(2+\epsilon\right)\alpha_{e}\zeta_{e}^{-1}(t_{2}-t_{1})\right).
\end{split}
\end{equation}
Now let's derive the upper bound of the energy value.  From equation (\ref{13}), we can obtained:
\begin{equation}
\begin{split}
&\int_{T_{i}}^{T_{i+1}} |\kappa_{e}(t)|E_{e}(t)dt\\
& =c_{e}\int_{T_{i}}^{T_{i+1}}  |\kappa_{e}(t) |dt-|E_{e}(T_{i+1})-E_{e}(T_{i})|.
\end{split}
\end{equation}
Sum over $i$ from 0 to $N$:
\begin{equation}
\begin{split}
& \int_{t_{1}}^{t_{2}} |\kappa_{e}(t)|E_{e}(t)dt \\
& =c_{e}\int_{t_{1}}^{t_{2}}  |\kappa_{e}(t) |dt-\sum_{i=0}^{N}|E_{e}(T_{i+1})-E_{e}(T_{i}))|\\
& =c_{e}\int_{t_{1}}^{t_{2}}  |\kappa_{e}(t) |dt-\sum_{i=0}^{N}\alpha_{e}|w_{e}(T_{i+1})-w_{e}(T_{i})|\\
& \leq c_{e}\int_{t_{1}}^{t_{2}}  |\kappa_{e}(t) |dt-\alpha_{e}|\max w_{e}- \min w_{e}|\\
& \leq c_{e}\int_{t_{1}}^{t_{2}}  |\kappa_{e}(t) |dt-\alpha_{e}\epsilon(\rho_{e}-1).
\end{split}
\end{equation}
Divide both sides by  $t_2 - t_1$:
\begin{equation}
\begin{split}
&\mathbb{E}_{t \in [t_1, t_2]}\left(|\kappa_{e}(t)| E_{e}(t)\right) 
\\&\leq c_{e} \mathbb{E}_{t \in [t_1, t_2]}(|\kappa_{e}(t)|) - \frac{\alpha_{e}\epsilon (\rho_{e}^{-1} - 1)}{(t_2 - t_1) }.
\end{split}
\end{equation}
Applying Lemma \ref{lemmak}, we obtain:
\begin{equation}
\begin{split}
&\frac{1 - \rho_{e}^{-1}}{t_2 - t_1} \mathbb{E}_{t \in [t_1, t_2]}(E_{e}(t)) \\&\leq c_{e} \zeta_{e}^{-1} (\rho_{e} - 1) - \frac{\alpha_{e}\epsilon (\rho_{e} - 1)}{(t_2 - t_1) }.
\end{split}
\end{equation}

Due to $E(t) = \sum_{e\in \mathcal{E}} E_{e}(t)$, we can derive the upper bound as:
\begin{equation}
\label{up}
\begin{split}
\mathbb{E}_{t \in [t_1, t_2]}(E(t)) &\leq (t_2 - t_1) \sum_{e \in \mathcal{E}} \left(c_{e} \rho_{e} \zeta_{e}^{-1}\right) -  \sum_{e \in \mathcal{E}} \left(\alpha_{e}\epsilon \rho_{e}\right). 
\end{split}
\end{equation}
Combining the results of Equation (\ref{low}) and Equation (\ref{up}), we have:
\begin{equation}
    \begin{split}
     &(t_2 - t_1) \sum_{e \in \mathcal{E}} \left(c_{e} \rho_{e} \zeta_{e}^{-1}\right) -  \sum_{e \in \mathcal{E}} \left(\alpha_{e}\epsilon \rho_{e}\right)\\ &\geq
     \mathbb{E}_{t \in [t_1, t_2]}(E(t)) \\ &\geq
   \sum_{e\in \mathcal{E}}\left(c_e-\left(2+\epsilon\right)\alpha_{e}\zeta_{e}^{-1}(t_{2}-t_{1})\right).\\
    \end{split}
\end{equation}
\end{proof}
Based on the above Theorem \ref{thedi}, we can draw the conclusion of Theorem \ref{thedir}, The specific proof is as follows:
\begin{proof}
    When $H$ is a non-regular hypergraph and for all $e$ in $\mathcal{H}$,  $w_{e}(t)$  are monotonic on $[t_{1},t_{2}]$. So $\zeta_{e} =t_2-t_1.$ The upper bound of the Theorem \ref{thedi} can be written as:
    \begin{equation}
    \begin{split}
     &(t_2 - t_1) \sum_{e \in \mathcal{E}} \left(c_{e} \rho_{e} \zeta_{e}^{-1}\right) -  \sum_{e \in \mathcal{E}} \left(\alpha_{e}\epsilon \rho_{e}\right)\\&=(t_2 - t_1) \sum_{e \in \mathcal{E}} \rho_{e} \zeta_{e}^{-1}\left( \sum_{i,j \in e}\frac{1}{2|e|}\left(\frac{1}{d_{i}}+\frac{1}{d_{j}}\right)+\left(1+\epsilon\right)\alpha_{e} \right)\\& -  \sum_{e \in \mathcal{E}} \left(\alpha_{e}\epsilon \rho_{e}\right)\\&=(t_2 - t_1) \sum_{e \in \mathcal{E}} \rho_{e} \zeta_{e}^{-1}\left( \sum_{i,j \in e}\frac{1}{2|e|}\left(\frac{1}{d_{i}}+\frac{1}{d_{j}}\right) \right)\\& + \sum_{e \in \mathcal{E}}\left((t_2 - t_1)\rho_{e} \zeta_{e}^{-1} \left(1+\epsilon\right)- \epsilon \rho_{e}\right)\alpha_{e}
     \\&=\sum_{e \in \mathcal{E}} \rho_{e} \left( \sum_{i,j \in e}\frac{1}{2|e|}\left(\frac{1}{d_{i}}+\frac{1}{d_{j}}\right) \right)\ +  \sum_{e \in \mathcal{E}}\rho_{e} \alpha_{e}
     \\& \leq \rho_{max}\left(\sum_{e\in \mathcal{E}}\sum_{i\in e, j\in e}\frac{1}{2|e|}\left(\frac{1}{d_i} +\frac{1}{d_j}\right)+\sum_{e\in \mathcal{E}}\alpha_e \right). 
    \end{split}
\end{equation}
Similarly, the lower bound can be written as:
\begin{equation}
    \begin{split}
   &\sum_{e\in \mathcal{E}}\left(c_e-\left(2+\epsilon\right)\alpha_{e}\zeta_{e}^{-1}(t_{2}-t_{1})\right)\\&=\sum_{e\in \mathcal{E}}\left(\sum_{i,j \in e}\frac{1}{2|e|}\left(\frac{1}{d_{i}}+\frac{1}{d_{j}}\right)+\left(1+\epsilon\right)\alpha_{e}\right)\\&-\sum_{e\in \mathcal{E}}\left(2+\epsilon\right)\alpha_{e}\zeta_{e}^{-1}(t_{2}-t_{1})\\&=
   \sum_{e \in \mathcal{E}}  \left( \sum_{i,j \in e}\frac{1}{2|e|}\left(\frac{1}{d_{i}}+\frac{1}{d_{j}}\right) \right)\ -  \sum_{e \in \mathcal{E}} \alpha_{e}.
    \end{split}
\end{equation}
From Equation (\ref{weight}),we have
$\alpha_{e} = \frac{1}{|e|}\sum_{i\in e, j\in e}\frac{1}{\sqrt{d_{i}d_{j}}}$,
So\begin{equation}
\begin{split}
&\sum_{i,j \in e}\frac{1}{2|e|}\left(\frac{1}{d_{i}}+\frac{1}{d_{j}}\right) \ - \alpha_e\\&=\sum_{i,j \in e}\left(
     \frac{1}{2|e|}\left(\frac{1}{d_{i}}+\frac{1}{d_{j}}\right) \ - \frac{1}{|e|} \frac{1}{\sqrt{d_{i}d_{j}}}\right)\geq 0.
 \end{split}
\end{equation}
Because $\mathcal{H}$ is a non-regular hypergraph, so the lower bound is greater than 0.
\end{proof}
\section{PROOF OF Proposition \ref{proposition3}}
\begin{proof}
From the condition $|\mathbf{x}_i(t)| \equiv 1$, we know that $\partial_t \|\mathbf{x}_i(t)\|^2 = 0$, which expands using the chain rule to:
\begin{equation}
2 \left\langle \mathbf{x}_i(t), \frac{\partial \mathbf{x}_i(t)}{\partial t} \right\rangle = 0.
\end{equation}

Moreover, due to $w_{e}(t)\equiv \frac{1}{\alpha_{e} } [\frac{1}{|e|}\sum_{i\in e, j\in e}\frac{cos(\mathbf{x}_{i}(t),\mathbf{x}_{j}(t))}{\sqrt{d_{i}d_{j}}}]+1+\epsilon$, we know that, for any $e$ that $i\in e$:
\begin{equation} 
\frac{\partial w_e(t)}{\partial \mathbf{x}_i} = \frac{1}{\sum_{i,j\in e}\frac{1}{\sqrt{d_id_j}}}\left(\sum_{j\in e}\frac{\mathbf{x}_j(t)}{\sqrt{d_id_j}}\right).
\end{equation} 
We  denote $\frac{\partial \mathbf{x}_i(t)}{\partial t}$ as $\mathbf{y}$. Let $\mu_{ie} = -\frac{\kappa_{e}(t) w_e(t)}{1 + \sum_{j\in e, j\neq i}\lambda^{j}(\mathbf{x}_1(t),... \mathbf{x}_{|e|}(t))}$. Then, the original problem takes the following form:
\begin{align}
\min \quad & \mathbf{y}^T \mathbf{y} \nonumber \\
\text{s.t.} \quad & \frac{1}{\sum_{i,j\in e}\frac{1}{\sqrt{d_id_j}}}\left(\sum_{j\in e}\frac{\mathbf{x}_j(t)^T}{\sqrt{d_id_j}}\right)\mathbf{y} = \mu_{ie}, \\
& \mathbf{x}_i(t)^T \mathbf{y} = 0.
\label{min}
\end{align}

we denote
\begin{equation}
m_{ie}=\frac{1}{\sum_{i,j\in e}\frac{1}{\sqrt{d_id_j}}}\left(\sum_{j\in e}\frac{\mathbf{x}_j(t)}{\sqrt{d_id_j}}\right).\end{equation}
Then consider the method of Lagrange multipliers:
\begin{align}
\label{LA}
\nabla_y L(\mathbf{y}) &= 0 \nonumber \\
\rightarrow \quad \nabla_y (\mathbf{y}^T \mathbf{y} + \lambda(m_{ie}^T \mathbf{y} - \mu_{ie}) + \tau \mathbf{x}_i(t)^T \mathbf{y}) &= 0 \nonumber \\
\rightarrow \quad 2\mathbf{y} + \lambda m_{ie} + \tau \mathbf{x}_i(t) &= 0.
\end{align}

Left-multiplying Equation (\ref{LA}) by $\mathbf{x}_i(t)^T$, and combining it with Equation (\ref{LA}) and $\mathbf{x}_i(t)^T \mathbf{x}_i(t) = 1$, we get:
\begin{equation}
\tau = -\lambda \mathbf{x}_i(t)^T m_{ie}.
\end{equation}

Similarly, left-multiplying Equation (\ref{min}) by $m_{ie}^T$:
\begin{align}
&2m_{ie}^T \mathbf{y} + \lambda + \tau m_{ie}^T \mathbf{x}_i(t) = 0 \nonumber \\
&\rightarrow \quad 2\mu_{ie} + \lambda - \lambda(\mathbf{x}_i(t)^T m_{ie}) = 0 \nonumber \\
&\rightarrow \quad \lambda = -2\mu_{ie} \left( 1 - (\mathbf{x}_i(t)^T m_{ie})^2 \right)^{-1}.
\end{align}

Thus we can obtain the solution to this optimization problem:
\begin{equation}
\begin{split}
\mathbf{y}^* &= -\frac{\lambda m_{ie} + \tau \mathbf{x}_i(t)}{2} \\&= -\frac{\lambda  m_{ie}  - \lambda \mathbf{x}_i(t)^T m_{ie} \mathbf{x}_i(t)}{2}\\& = -\frac{\lambda(\mathbf{I} - \mathbf{x}_i(t) \mathbf{x}_i(t)^T) m_{ie} }{2}  \\
&= \frac{\mu_{ie}(\mathbf{I} - \mathbf{x}_i(t) \mathbf{x}_i(t)^T)m_{ie} }{1 - (\mathbf{x}_i(t)^T m_{ie} )^2}\\& = \frac{\mu_{ie} \left[ m_{ie}  - \cos(\mathbf{x}_i(t), m_{ie} ) \mathbf{x}_i(t) \right]}{1 - (\mathbf{x}_i(t)^T m_{ie} )^2} .
\end{split}
\end{equation}
So,\begin{equation}
\begin{split}
\mathbf{y}^* 
&=  \frac{\mu_{ie}}{\mathbf{1} -{ (\mathbf{x}_i(t)^T m_{ie})}^{2}}\left[ m_{ie}  - \cos(\mathbf{x}_i(t), m_{ie} ) \mathbf{x}_i(t) \right]\\
&= \frac{\mu_{ie}}{\mathbf{1} -{ (\mathbf{x}_i(t)^Tm_{ie} )}^{2}}\\&\left[ \frac{1}{\sum_{i,j\in e}\frac{1}{\sqrt{d_id_j}}}\left(\sum_{i,j\in e}\frac{\mathbf{x}_j(t)}{\sqrt{d_id_j}}\right)  - \cos(\mathbf{x}_i(t), m_{ie} ) \mathbf{x}_i(t) \right]\\
&= \frac{\mu_{ie}}{(\mathbf{1} -{ (\mathbf{x}_i(t)^Tm_{ie} )}^{2})(\sum_{i,j\in e}\frac{1}{\sqrt{d_id_j}})}\\&\left[ \left(\sum_{j\in e}\frac{\mathbf{x}_j(t)}{\sqrt{d_id_j}}\right)  - \sum_{j\in e}\frac{\cos(\mathbf{x}_i(t), m_{ie} ) \mathbf{x}_i(t)}{\sqrt{d_id_j}} \right]\\
&=-\kappa'_{ie}(t) \left[\sum_{j\in e} \frac{\mathbf{x}_j(t) - \cos \left( \mathbf{x}_i(t),m_{ie} \right) \mathbf{x}_i(t)}{\sqrt{d_id_j}} \right].
\end{split}
\end{equation}
Here, $\kappa'_{ie}(t)=\frac{\mu_{ie}}{(\mathbf{1} -{ (\mathbf{x}_i(t)^Tm_{ie} )}^{2})(\sum_{i,j\in e}\frac{1}{\sqrt{d_id_j}})}$.  The $\|\mathbf{y}\| \ge 0$ always holds. Therefore, $\mathbf{y}^*$ corresponds to the point of minimum $\mathbf{y}$.
\end{proof}

\section{proof of Theorem \ref{exit}}
\begin{Lemma}
\label{LEMMA9}
$C(\mathbf{X}(t))$ is locally Lipschitz continuous.
\end{Lemma}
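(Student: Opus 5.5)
We show that every entry $C_{ij}(\mathbf{X})=\cos(\mathbf{x}_i,\mathbf{x}_j)=\frac{\mathbf{x}_i^T\mathbf{x}_j}{\|\mathbf{x}_i\|\|\mathbf{x}_j\|}$ is locally Lipschitz on the natural domain $\Omega=\{\mathbf{X}:\mathbf{x}_i\neq 0 \text{ for all } i\}$. We then assemble the entrywise bounds into a bound for the whole matrix in the Frobenius norm. This domain contains the unit-norm constraint set $\|\mathbf{x}_i\|\equiv 1$ used throughout the paper, and $\Omega$ is open. The cosine is undefined at $\mathbf{x}_i=0$, so nothing better than local Lipschitz continuity can be expected there.

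\textbf{Step 1: a convex neighbourhood.} Fix $\mathbf{X}^0\in\Omega$ and set $r=\min_i\|\mathbf{x}^0_i\|>0$. Let $U$ be the set of $\mathbf{X}$ with $\|\mathbf{x}_i-\mathbf{x}_i^0\|<r/2$ for every $i$. Then $U$ is convex, and every $\mathbf{X}\in U$ satisfies $\|\mathbf{x}_i\|\ge r/2$. Hence the segment between any two points of $U$ stays inside $\Omega$.

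\textbf{Step 2: gradient bounds.} On $\Omega$, each $C_{ij}$ is $C^1$, being a quotient of smooth functions with a nonvanishing denominator. For $i\neq j$ we compute
\[
\nabla_{\mathbf{x}_i}C_{ij}=\frac{1}{\|\mathbf{x}_i\|}\Big(\frac{\mathbf{x}_j}{\|\mathbf{x}_j\|}-\cos(\mathbf{x}_i,\mathbf{x}_j)\frac{\mathbf{x}_i}{\|\mathbf{x}_i\|}\Big).
\]
The bracket is a difference of a unit vector and a unit vector scaled by $|\cos|\le 1$, so its norm is at most $2$. Therefore $\|\nabla_{\mathbf{x}_i}C_{ij}\|\le 2/\|\mathbf{x}_i\|\le 4/r$ on $U$, and symmetrically $\|\nabla_{\mathbf{x}_j}C_{ij}\|\le 4/r$. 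The gradient with respect to every other $\mathbf{x}_k$ is zero. For $i=j$, $C_{ii}\equiv 1$ is trivially Lipschitz.

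\textbf{Step 3: mean value theorem and summation.} Apply the mean value theorem along the segment joining $\mathbf{X},\mathbf{Y}\in U$. This gives
\[
|C_{ij}(\mathbf{X})-C_{ij}(\mathbf{Y})|\le \frac{4}{r}\big(\|\mathbf{x}_i-\mathbf{y}_i\|+\|\mathbf{x}_j-\mathbf{y}_j\|\big).
\]
Square, use $(a+b)^2\le 2a^2+2b^2$, and sum over all $i,j$. The result is
\[
\|C(\mathbf{X})-C(\mathbf{Y})\|_F\le \frac{8\sqrt{n}}{r}\|\mathbf{X}-\mathbf{Y}\|_F
\]
on $U$, which is the claimed local Lipschitz property.

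On the constraint set $\|\mathbf{x}_i\|=1$ there is a shortcut: $C(\mathbf{X})=\mathbf{X}\mathbf{X}^T$ (rows being the $\mathbf{x}_i$) is bilinear, so $\|\mathbf{X}\mathbf{X}^T-\mathbf{Y}\mathbf{Y}^T\|_F\le(\|\mathbf{X}\|_F+\|\mathbf{Y}\|_F)\|\mathbf{X}-\mathbf{Y}\|_F\le 2\sqrt{n}\|\mathbf{X}-\mathbf{Y}\|_F$. This case can be noted as a remark.

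The only real obstacle is the singularity at zero vectors. Step 1 handles it by choosing the neighbourhood radius $r/2$, which keeps the segment between any two points away from the origin. We therefore state the lemma on $\Omega$, or on the unit-sphere product, which is the setting used in Theorem \ref{exit}.
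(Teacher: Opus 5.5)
Your proof is correct and follows the same route as the paper: bound the gradient of each entry $\cos(\mathbf{x}_i,\mathbf{x}_j)$ and turn that bound into a Frobenius-norm Lipschitz estimate. Your version is more careful than the paper's. The paper only bounds the gradient on the unit-sphere constraint set, but the mean value step needs the whole segment between the two points, and a chord between unit vectors can pass through the origin. Your convex neighbourhood $U$, kept away from the origin, is what actually makes that step valid, and your bilinear identity $C=\mathbf{X}\mathbf{X}^T$ handles the sphere case directly.
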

\begin{proof}
Given that  $\|\mathbf{x}_i\| = 1$, the gradient norm of $\cos(\mathbf{x}_i, \mathbf{x}_j)$ with respect to $(\mathbf{x}_i, \mathbf{x}_j)$ is bounded. Thus, there exists a constant $L_C$ and bounded sets such that
\begin{equation}
\|C(\mathbf{X}(t)) - C(\mathbf{Y}(t))\|_F \leq L_C \|\mathbf{X}(t) - \mathbf{Y}(t)\|_F. 
\end{equation}
So, $C(\mathbf{X}(t))$ is locally Lipschitz continuous.
\end{proof}








\begin{Lemma}
\label{LEMMA10}
\label{lem:proof_properties}
Under the assumption of Theorem \ref{exit}, the functions $m_{ie}$, $\mu_{ie}$, and $\kappa_{ie}^\prime(t)$ are bounded and Lipschitz continuous.
\end{Lemma}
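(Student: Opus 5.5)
We treat the three quantities in turn, building each from the previous one. Throughout we work on the set where every $\|\mathbf{x}_i\| = 1$ (or on a bounded neighbourhood of it), so every feature vector is bounded. Each function is then a composition of smooth or locally Lipschitz maps, and we use the elementary rules that sums, products and quotients (with denominators bounded away from zero) of bounded Lipschitz functions on a bounded set are again bounded and Lipschitz.

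\emph{Step 1: $m_{ie}$.} By definition $m_{ie}=\big(\sum_{i,j\in e}\frac{1}{\sqrt{d_id_j}}\big)^{-1}\sum_{j\in e}\frac{\mathbf{x}_j(t)}{\sqrt{d_id_j}}$ is linear in $\mathbf{X}(t)$. Its coefficients depend only on the degrees, and condition 1 of Theorem \ref{exit} bounds them. The normalizing sum $\sum_{i,j\in e}\frac{1}{\sqrt{d_id_j}}$ is at least $|e|^2/d_{\max}>0$ and at most $|e|^2/d_{\min}$. Together with $\|\mathbf{x}_j\|=1$, this gives $\|m_{ie}\|\le C_m$ and a global Lipschitz constant $L_m$.

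\emph{Step 2: $\mu_{ie}$.} Write $\mu_{ie}=-\kappa_e w_e/(1+\sum_{j\ne i}\lambda^j)$.
\begin{itemize}
\item The weight $w_e$ in Equation (\ref{weight}) is a finite combination of cosines. Cosines are Lipschitz on the unit sphere by Lemma \ref{LEMMA9}, and $w_e\in[\epsilon,2+\epsilon]$.
\item The curvature $\kappa_e$ is bounded by condition 3, and we take it to be Lipschitz in the features. For Forman--Ricci curvature, Equation (\ref{eq:forman_hypergraph_after}) is a smooth function of the weights, which are bounded below by $\epsilon$. For Ollivier--Ricci curvature, $W_1$ is Lipschitz in the measures, and the measures are Lipschitz in the weights.
\item By condition 2, the $\lambda^j$ are bounded and locally Lipschitz on bounded sets.
\end{itemize}
The main obstacle is the denominator $1+\sum_{j\ne i}\lambda^j$, which must stay away from zero. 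We intend to assume, or read into condition 2 and the choice of scaling functions, that it is bounded below by a positive constant, for instance by taking $\lambda^j\ge 0$. With that in hand, the quotient rule yields boundedness and Lipschitz continuity of $\mu_{ie}$.

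\emph{Step 3: $\kappa'_{ie}$.} Use Equation (\ref{kprim}). The factor $(\mathbf{x}_i^T m_{ie})^2$ is a product of bounded Lipschitz maps, so it is Lipschitz. Condition 4 gives $1-(\mathbf{x}_i^Tm_{ie})^2\ge\varepsilon$, and the degree sum is at least $|e|^2/d_{\max}$. Hence $|\kappa'_{ie}|\le \sup|\mu_{ie}|\, d_{\max}/(\varepsilon |e|^2)$. For Lipschitz continuity, expand $\frac{a}{b}-\frac{a'}{b'}=\frac{(a-a')b'+a'(b'-b)}{bb'}$ with $b,b'\ge\varepsilon|e|^2/d_{\max}$. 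This bounds $|\kappa'_{ie}(\mathbf{X})-\kappa'_{ie}(\mathbf{Y})|$ by an explicit constant times $\|\mathbf{X}-\mathbf{Y}\|_F$, and that constant is assembled from $L_m$, $C_m$, and the bounds and Lipschitz constants of $\mu_{ie}$.
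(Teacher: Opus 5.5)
Your proposal is correct and follows essentially the same route as the paper. Both arguments use linearity for $m_{ie}$ (the paper phrases it as a convex combination), products and quotients of bounded Lipschitz maps for $\mu_{ie}$, and condition 4 together with $S_e\ge |e|^2/d_{\max}$ for $\kappa'_{ie}$. You make two points explicit that the paper's ``clearly bounded on bounded sets and locally Lipschitz'' step uses tacitly: a positive lower bound on $|1+\sum_{j\neq i}\lambda^j|$, and Lipschitz dependence of $\kappa_e$ on the features. The first follows from mere non-vanishing on the compact product of unit spheres, and non-vanishing is needed anyway for $\mu_{ie}$ to be defined; so your version is the more careful of the two.
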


\begin{proof}
    We prove the boundedness and Lipschitz continuity for each function based on the assumptions of Theorem \ref{exit}.

    {{1. $m_{ie}$ is bounded and Lipschitz continuous:}}
    By the definition of $m_{ie}$, it can be written as a convex combination of $x_j$:
   \begin{equation}
    m_{ie} = \sum_{j} \alpha_j \mathbf{x}_j, \quad \text{where } \alpha_j \geq 0 \text{ and } \sum_{j} \alpha_j = 1.
    \end{equation}
    From this convex combination, we immediately obtain the Lipschitz property:
    \begin{equation}
    \|m_{ie}(\mathbf{X}(t)) - m_{ie}(\mathbf{Y}(t))\|_F \leq \|\mathbf{X}(t) - \mathbf{Y}(t)\|_F.
    \end{equation}
    Furthermore, since the vector norm is bounded by unity, $m_{ie}$ is also bounded.

    {2. $\mu_{ie}$ is bounded and Lipschitz continuous:}
    The function $\mu_{ie}$ is defined as:
     \begin{equation}
   \mu_{ie} = -\frac{\kappa_{e}(t) w_e(t)}{1 + \sum_{j\in e, j\neq i}\lambda^{j}(\mathbf{x}_1(t),... \mathbf{x}_{|e|}(t))}.
     \end{equation}
    Since $\kappa_e(t)$ and $w_e(t)$ are bounded, and ${\lambda^j}(\cdot)$ is bounded and locally Lipschitz, $\mu_{ie}$ is clearly bounded on bounded sets and locally Lipschitz.

    {3. $\kappa_{ie}^\prime(t)$ is bounded and Lipschitz continuous:}
    The function $\kappa_{ie}^\prime(t)$ is given by:
    \begin{align}
    \kappa'_{ie}(t) &= \frac{\mu_{ie}}{(1 - (\mathbf{x}_i(t)^T m_{ie} )^2) S_e}, \\
    \text{where } S_e &= \sum_{p, q \in e} \frac{1}{\sqrt{d_p d_q}} \in \left[\frac{|e|^2}{d_{\max}}, \frac{|e|^2}{d_{\min}}\right].
    \end{align}
    From the result in part 2, $\mu_{ie}$ is bounded and locally Lipschitz. Assumption  guarantees that the denominator, $(1 - (\mathbf{x}_i(t)^T m_{ie})^2) S_e$ is bounded below by the strictly positive quantity $\varepsilon S_e$, where $\varepsilon > 0$. Therefore, $\kappa'_{ie}(t)$ is the ratio of two bounded, locally Lipschitz functions whose denominator is bounded away from zero, implying $\kappa'_{ie}(t)$ is also bounded on bounded sets and locally Lipschitz.
\end{proof}

\begin{Lemma}
\label{LEMMA11}
The  function $S(\mathbf{X}(t))$ is locally Lipschitz continuous and bounded.
\end{Lemma}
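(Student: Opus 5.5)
We write $S(\mathbf{X}) = D_v^{-1/2} H K'(\mathbf{X}) H^T D_v^{-1/2}$ as in Equation (\ref{19}). The key observation is that $D_v^{-1/2}$ and $H$ are constant matrices. Their norms are controlled by the degree assumption of Theorem \ref{exit}: $\|D_v^{-1/2}\|_2 \le d_{\min}^{-1/2}$, and $H$ is a fixed $0/1$ matrix with $\|H\|_F \le \sqrt{nm}$. Hence all dependence on $\mathbf{X}$ enters through the diagonal matrix $K'(\mathbf{X}) = \operatorname{diag}(\kappa'_1,\dots,\kappa'_m)$. The plan is therefore to transfer boundedness and local Lipschitz continuity from the scalar functions $\kappa'_e$, already provided by Lemma \ref{LEMMA10}, to $S$ by submultiplicativity of norms.

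\textbf{Boundedness.} By Lemma \ref{LEMMA10}, on any bounded set (in particular near the unit-norm configurations) each $\kappa'_e$ satisfies $|\kappa'_e| \le M_e$. The bound comes from $|\mu_{ie}|$ being bounded and the denominator being at least $\varepsilon S_e \ge \varepsilon |e|^2/d_{\max}$. Since $K'$ is diagonal, $\|K'\|_2 = \max_e |\kappa'_e| \le M := \max_e M_e$. Then
\begin{equation*}
\|S(\mathbf{X})\|_2 \le \|D_v^{-1/2}\|_2^2\,\|H\|_2^2\,\|K'(\mathbf{X})\|_2 \le \frac{\|H\|_2^2}{d_{\min}}\,M,
\end{equation*}
which is a constant independent of $\mathbf{X}$. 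Alternatively, entrywise, $|s_{ij}| \le \sum_{e \ni i,j} |\kappa'_e|/\sqrt{d_i d_j} \le M$, since at most $\min(d_i,d_j)$ hyperedges contain both $i$ and $j$.

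\textbf{Lipschitz continuity.} For $\mathbf{X},\mathbf{Y}$ in a bounded set $B$, write
\begin{equation*}
S(\mathbf{X})-S(\mathbf{Y}) = D_v^{-1/2}H\bigl(K'(\mathbf{X})-K'(\mathbf{Y})\bigr)H^TD_v^{-1/2}.
\end{equation*}
This gives
\begin{equation*}
\|S(\mathbf{X})-S(\mathbf{Y})\|_F \le \frac{\|H\|_2^2}{d_{\min}}\Bigl(\sum_e |\kappa'_e(\mathbf{X})-\kappa'_e(\mathbf{Y})|^2\Bigr)^{1/2} \le \frac{\|H\|_2^2}{d_{\min}}\sqrt{m}\,L_{K}\|\mathbf{X}-\mathbf{Y}\|_F,
\end{equation*}
where $L_K$ is the maximum over $e$ of the local Lipschitz constants of $\kappa'_e$ on $B$ from Lemma \ref{LEMMA10}.

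\textbf{Main obstacle.} The substantive step is making sure Lemma \ref{LEMMA10} actually gives a single $\kappa'_e$ per hyperedge. This is needed because $K'$ is indexed by $e$, while Lemma \ref{LEMMA10} is stated for $\kappa'_{ie}$. I would invoke the choice of scaling functions $\lambda^j$ made before Equation (\ref{ricci1}), under which $\kappa'_{ie}=\kappa'_{je}=:\kappa'_e$ for all $i,j\in e$; each $\kappa'_e$ is then one of the functions covered by Lemma \ref{LEMMA10}.

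There is a second point to check: the local Lipschitz property of $\kappa'_e$ relies on the denominator $1-(\mathbf{x}_i^T m_{ie})^2$ being bounded below by $\varepsilon$, which is assumption 4 of Theorem \ref{exit}. That assumption only holds along the admissible region, so the Lipschitz constant is local, on sets where it holds. This is consistent with "locally Lipschitz" in the statement, and it suffices for the Picard–Lindelöf argument in Theorem \ref{exit}.
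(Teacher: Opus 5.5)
Your proof is correct and follows the same route as the paper: $S$ depends on $\mathbf{X}$ only through the diagonal matrix $K'(\mathbf{X})$, which sits between constant factors, so boundedness and local Lipschitz continuity are inherited from the entries $\kappa'_e$ via Lemma~\ref{LEMMA10}. Your version is more careful than the paper's: the paper writes $D_e^{-1/2}$ (a dimensionally inconsistent typo) where Equation~(\ref{19}) has $D_v^{-1/2}$, leaves its constant $c_H$ unspecified, and does not address how the per-node functions $\kappa'_{ie}$ of Lemma~\ref{LEMMA10} give a single $\kappa'_e$ per hyperedge.
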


\begin{proof}
The matrix $S(\mathbf{X}(t)) = D_e^{-\frac{1}{2}} H K'(\mathbf{X}(t)) H^T D_e^{-\frac{1}{2}}$ is linear with respect to $K'(\mathbf{X}(t))$. By Lemma \ref{lem:proof_properties} , each diagonal element  of $K'(\mathbf{X}(t))$ is locally Lipschitz. Therefore, $S(\mathbf{X}(t))$ is locally Lipschitz. Furthermore, $S(\mathbf{X}(t))$ is bounded:
\begin{equation}
\|S(\mathbf{X}(t))\| \leq c_H \max_{e} |\kappa'_e(\mathbf{X}( t))| \mathrel{\mathop:}= M(t) .
\end{equation}
where $c_H $ is a  constant independent of time $t$.
\end{proof}

Based on the above Lemma, we can draw the conclusion of Theorem \ref{exit}, The specific proof is as follows:
\begin{proof}
Let \begin{equation}
F(\mathbf{X}(t)) =   \operatorname{diag}\Big( \big( S(\mathbf{X}(t)) \odot C(\mathbf{X}(t)) \big) \mathbf{1}_{N} \Big) - S(\mathbf{X}(t)) ,\end{equation} 
\begin{equation}G(\mathbf{X}(t)) =  \bigg[ \operatorname{diag}\Big( \big( S(\mathbf{X}(t)) \odot C(\mathbf{X}(t)) \big) \mathbf{1}_{N} \Big) - S(\mathbf{X}(t)) \bigg] \mathbf{X}(t).\end{equation}  

\noindent
Combining Lemmas \ref{LEMMA9} and \ref{LEMMA11} with the bound $|C(\mathbf{X}(t))| \leq 1$, it follows that there exists a constant $L_H(t)$ such that the following inequality holds on the bounded set:
\begin{equation}
\label{eq:F_properties}
\|F(\mathbf{X}(t)) - F(\mathbf{Y}(t))\| \leq L_H(t) \|\mathbf{X}(t) - \mathbf{Y}(t)\|, 
\end{equation}
\begin{equation}
\|F(\mathbf{X}(t))\| \leq 2 \|S(\mathbf{X}(t))\| \leq 2M(t). 
\end{equation}
Then, the Lipschitz continuity of $G(\mathbf{X}(t))$ is established as follows:
\begin{equation}
\begin{split}
\label{eq:G_lip_proof}
\|G&(\mathbf{X}(t)) - G(\mathbf{Y}(t))\| \\
=& \|F(\mathbf{X}(t)) \mathbf{X}(t) - F(\mathbf{Y}(t))\mathbf{Y}(t)\|  \\
\leq& \|F(\mathbf{X}(t))\mathbf{X}(t) - F(\mathbf{Y}(t)) \mathbf{X}(t)\| \\
&+ \|F(\mathbf{Y}(t)) \mathbf{X}(t) - F(\mathbf{Y}(t)) \mathbf{Y}(t)\|  \\
\leq& \|F(\mathbf{X}(t)) - F(\mathbf{Y}(t))\| \|\mathbf{X}(t)\| + \|F(\mathbf{Y}(t))\| \|\mathbf{X}(t) - \mathbf{Y}(t)\| \\
\leq& (2M(t) + L_H(t) \|\mathbf{Y}(t)\|) \|\mathbf{X}(t) - \mathbf{Y}(t)\|. 
\end{split}
\end{equation}
So, $G(\mathbf{X}(t))$ is locally  Lipschitz
continuous. Furthermore, the linear growth bound is given by:
\begin{equation}
\label{grow}
\|G(\mathbf{X}(t))\| \leq \|F(\mathbf{X}(t))\| \|\mathbf{X}(t)\| \leq 2M(t) \|\mathbf{X}(t)\|. 
\end{equation}
By the Picard-Lindelöf theorem, for any initial value $\mathbf{X}(0) $, there exists a unique solution on a maximal time interval $[0, \tau)$ (where $\tau > 0$).
From the growth bound (\ref{grow}) and Grönwall's inequality, we obtain:
\begin{equation}
\|\mathbf{X}(t)\| \leq \|\mathbf{X}(0)\| \exp\left(2 \int_0^t M(s)\, ds\right). 
\end{equation}
Since $M(\cdot)$ is integrable on $[0, T]$, the solution remains bounded on the interval. This prevents finite-time blow-up, thereby ensuring that the solution can be uniquely extended to the full interval $[0, T]$.
\end{proof}

\section{proof of Theorem \ref{thm:st}}
\label{app:e}
\begin{proof}
Let \begin{equation}
F(\mathbf{X}(t)) =   \operatorname{diag}\Big( \big( S(\mathbf{X}(t)) \odot C(\mathbf{X}(t)) \big) \mathbf{1}_{N} \Big) - S(\mathbf{X}(t)) ,\end{equation} and the 
explicit Euler method updates the state ${\mathbf{X}}{(t+1)}$ based on the current state ${\mathbf{X}}{(t)}$. The discrete form of the update is given by:
\begin{equation}
    {\mathbf{X}}{(t+1)} = {\mathbf{X}}{(t)} - \tau \cdot F(\mathbf{X}(t)) {\mathbf{X}}{(t)},
\end{equation}
where $\tau$ is the time step.

Next, we expand the squared norm of the updated state:
\begin{equation}
\begin{split}
    &\|{\mathbf{X}}{(t+1)}\|^2 \\&= \left( {\mathbf{X}}{(t)} - \tau F(\mathbf{X}(t)) {\mathbf{X}}{(t)} \right)^\top \left( {\mathbf{X}}{(t)} - \tau F(\mathbf{X}(t)) {\mathbf{X}}{(t)} \right).
\end{split}
\end{equation}
This can be simplified to:
\begin{equation}
\begin{split}
    &\|{\mathbf{X}}{(t+1)}\|^2 \\&= \|{\mathbf{X}}{(t)}\|^2 - 2\tau  {\mathbf{X}}{(t)}^T F(\mathbf{X}(t)) {\mathbf{X}}{(t)}  + \tau^2 \|F(\mathbf{X}(t)) {\mathbf{X}}{(t)}\|^2.
\end{split}
\end{equation}

For stability, we require that the norm of the updated state is not greater than the norm of the previous state, i.e.,
\begin{equation}
    \|{\mathbf{X}}{(t+1)}\|^2 \leq \|{\mathbf{X}}{(t)}\|^2.
\end{equation}
This implies:
\begin{equation}
    - 2\tau  {\mathbf{X}}{(t)}^T F(\mathbf{X}(t)){\mathbf{X}}{(t)}+ \tau^2 \|F(\mathbf{X}(t)) {\mathbf{X}}{(t)}\|^2 \leq 0.
\end{equation}
We can  rewrite this expression as:
\begin{equation}
\label{buchang}
    \tau \leq \frac{2  {\mathbf{X}}{(t)}^T F(\mathbf{X}(t)){\mathbf{X}}{(t)}}{\|F(\mathbf{X}(t)) {\mathbf{X}}{(t)}\|^2} = \frac{2 {\mathbf{X}}{(t})^\top F(\mathbf{X}(t)) {\mathbf{X}}{(t)}}{{\mathbf{X}}{(t)}^\top F(\mathbf{X}(t))^2 {\mathbf{X}}{(t)}}.
\end{equation}
The right side of the inequality is the Rayleigh quotient for the matrix $F(\mathbf{X}(t))$.

Now, We denote the largest eigenvalue of $F(\mathbf{X}(t))$ by $\lambda_{\max}(F(\mathbf{X}(t)))$.
Since $F(\mathbf{X}(t))$ is symmetric, its spectral radius satisfies $\lambda_{\max}(F(\mathbf{X}(t))) = \|F(\mathbf{X}(t))\|_2$. By the triangle inequality, we can obtain:
\begin{equation}
\begin{aligned}
    &\|F(\mathbf{X}(t))\|_2 \\&\leq \|\operatorname{diag}(S({\mathbf{X}}{(t)}) \odot C({\mathbf{X}}{(t)}))\mathbf{1}_{N}\|_2 + \|S({\mathbf{X}}{(t)})\|_2 \\
    &= \|(S({\mathbf{X}}{(t)}) \odot C({\mathbf{X}}{(t)}))\mathbf{1}_{N}\|_\infty + \|S({\mathbf{X}}{(t)})\|_2.
\end{aligned}
\end{equation}

Moreover, since $|c_{ij}| \leq 1$,
\begin{equation}
\begin{aligned}
    \|(S(\mathbf{X}(t)) \odot C(\mathbf{X}(t)))\mathbf{1}_{N}\|_\infty 
    &\leq \max_{i} \sum_{j} |s_{ij} c_{ij}| \\
    &\leq \max_{i} \sum_{j} s_{ij}.
\end{aligned}
\end{equation}

Similarly, since $S({\mathbf{X}}{(t)})$ is symmetric, we have
\begin{equation}
    \|S({\mathbf{X}}{(t)})\|_2 \leq \|S({\mathbf{X}}{(t)})\|_\infty = \max_{i} \sum_{j} s_{ij}.
\end{equation}
Combining the inequalities  above yields the following bound:
\begin{equation}
    \lambda_{max}(F(\mathbf{X}(t))) = \|F(\mathbf{X}(t))\|_2 \leq 2 \max_{i} \sum_{j} s_{ij}.
\end{equation}
According to the Equation (\ref{buchang}), the stability condition requires $\tau \leq \frac{2}{\lambda_{\max}(F(\mathbf{X}(t)))}$. Substituting the upper bound derived above, we conclude that a sufficient condition for stability is:
\begin{equation}
\tau \leq \frac{1}{\max_{i} \sum_{j} s_{ij}}.
\end{equation}
Thus, the explicit Euler method for the RFHND is stable provided that this condition is met.


\end{proof}

\section{Analysis of computational complexity}
\label{Aocc}
In this section, we conduct a detailed analysis of the complexity of RFHND. Let $n$, $m$, and $N=\sum_{e \in \mathcal{E}} |e|$ denote the number of nodes, hyperedges, and incidence pairs, respectively.  $d_{\text{in}}$ and $d_{\text{out}}$ represent the dimensions of the input and output features, $T$ indicates the number of layers.

The per-step computational cost consists of three main components: 
\begin{itemize}
    \item 
Sparse scatter operations and matrix multiplication $S(\mathbf{X}(t))\mathbf{X}(t)$, scaling with $O(Nd_{\text{in}})$. \item Edge-wise MLP transformations, scaling with $O(md^2_{\text{in}})$.  \item  Cosine similarity computations. We restrict similarity calculations to node pairs induced by hyperedges, resulting in $O(M_2 d_{\text{in}})$ where $M_2 = \sum_{e \in \mathcal{E}} |e|^2$. 
\end{itemize}
Consequently, the total time complexity for $T$ steps is $O(T(Nd_{\text{in}} + M_2 d_{\text{in}} + md^2_{\text{in}}))$. 
When the hyperedge size is bouded ($|e| \le r$),  then $N = O(mr)$ and $M_2 = O(mr^2)$. This ensures the model scales linearly with the number of hyperedges. Adding the complexity of the linear transformations in the input and output layers, the total complexity is
\begin{equation}
 O(n d_{\text{in}} d_{\text{out}}) + T \cdot O(m r^2 d_{\text{in}} + md^2_{\text{in}}) .
\end{equation}

\end{document}